\newtheorem{theorem}{Theorem}
\newtheorem{lemma}{Lemma}
\newtheorem{question}{Question}
\newtheorem{example}{Example}
\newtheorem{assumption}{Assumption}
\newtheorem{definition}{Definition}
\begin{document}

\title{On the Necessity of Collaboration for Online Model Selection\\ with Decentralized Data}

\date{}

\author{
Junfan Li\textsuperscript{1}, Zheshun Wu\textsuperscript{1}, Zenglin Xu\textsuperscript{2}*,
Irwin King\textsuperscript{3}\\
\textsuperscript{1}~Harbin Institute of Technology Shenzhen\\
\textsuperscript{2}~Fudan University\\
\textsuperscript{3}~The Chinese University of Hong Kong\\
lijunfan@hit.edu.cn, \{zenglin, wuzhsh23\}@gmail.com, king@cse.cuhk.edu.hk\\
*~Corresponding Author}

\maketitle

\begin{abstract}
  We consider online model selection with decentralized data over $M$ clients,
  and study the necessity of collaboration among clients.
  Previous work proposed various federated algorithms without demonstrating their necessity,
  while we answer the question from a novel perspective of computational constraints.
  We prove lower bounds on the regret, and propose a federated algorithm and analyze the upper bound.
  Our results show
  (i) collaboration is unnecessary in the absence of computational constraints on clients;
  (ii) collaboration is necessary if the computational cost on each client is limited to $o(K)$,
  where $K$ is the number of candidate hypothesis spaces.
  We clarify the unnecessary nature of collaboration in previous federated algorithms
  for distributed online multi-kernel learning,
  and improve the regret bounds at a smaller computational and communication cost.
  Our algorithm relies on three new techniques including
  an improved Bernstein's inequality for martingale,
  a federated online mirror descent framework,
  and decoupling model selection and prediction,
  which might be of independent interest.
\end{abstract}


%

\section{Introduction}

    Model selection which is a fundamental problem for offline machine learning
    focuses on how to select a suitable hypothesis space for a machine learning algorithm
    \cite{Mitchell1997Machine,Bartlett2002Model,Mohri2018Foundations}.
    Model selection for online machine learning is called online model selection (OMS),
    such as model selection for online supervised learning \cite{Foster2017Parameter,Zhang2018Online,Zhang2021Regret,Li2022Improved},
    model selection for online active learning \cite{Karimi2021Online},
    and model selection for contextual bandits \cite{Foster2019,Pacchiano2020,Ghosh2022Model}.
    We consider model selection for online supervised learning.
    Let $\mathcal{F}=\{\mathcal{F}_1,\ldots,\mathcal{F}_K\}$ contain $K$ hypothesis spaces
    and $\ell(\cdot,\cdot)$ be a loss function.
    For a sequence of examples $\{({\bf x}_t,y_t)\}_{t=1,\ldots,T}$,
    we aim to adapt to the case that
    the optimal hypothesis space $\mathcal{F}_{i^\ast}\in\mathcal{F}$ is given by an oracle
    and we run an online learning algorithm in $\mathcal{F}_{i^\ast}$.
    OMS can be defined by minimizing the \textit{regret}, i.e.,
    \begin{align*}
        \min_{f_1,\ldots,f_T}\left(\sum^T_{t=1}\ell(f_t({\bf x}_t),y_t)
        -\min_{f\in\mathcal{F}_{i^\ast}}\sum^T_{t=1}\ell(f({\bf x}_t),y_t)\right),
    \end{align*}
    where $f_t\in\cup^K_{i=1}\mathcal{F}_i$ is the hypothesis used by an OMS algorithm at the $t$-th round.
    The optimal value of the regret depends on the complexity of $\mathcal{F}_{i^\ast}$
    \cite{Foster2017Parameter,Foster2019}.

    In this work,
    we consider online model selection with decentralized data (OMS-DecD) over $M$ clients,
    in which each client observes a sequence of examples
    $\left\{\left({\bf x}^{(j)}_t,y^{(j)}_t\right)\right\}_{t=1,\ldots,T}$, $j=1,\ldots,M$,
    and but does not share personalized data with others.
    There is a central server that coordinates the clients
    by sharing personalized models or gradients
    \cite{Konecny2016Federated,McMahan2017Communication,Kairouz2021Advances}.
    OMS-DecD captures some real-world applications where
    the data may be collected by sensors on $M$ different remote devices or mobile phones
    \cite{Li2020Federated,Patel2023Federated,Kwon2023Tighter},
    or a local device can not store all of data due to low storage
    and thus it is necessary to store the data on more local devices
    \cite{Slavakis2014Modeling,Bouboulis2018Online}.
    OMS-DecD can be defined by minimizing the following regret,
    \begin{align*}
        \min_{f^{(j)}_t,t=1,\ldots,T,j=1,\ldots,M}&
        \left(\sum^M_{j=1}\sum^T_{t=1}\ell\left(f^{(j)}_t\left({\bf x}^{(j)}_t\right),y^{(j)}_t\right)-
        \min_{f\in\mathcal{F}_{i^\ast}}\sum^M_{j=1}\sum^T_{t=1}
        \ell\left(f\left({\bf x}^{(j)}_t\right),y^{(j)}_t\right)\right),
    \end{align*}
    where
    $f^{(j)}_t\in\cup^K_{i=1}\mathcal{F}_i$ is the hypothesis adopted by the $j$-th client at the $t$-th round.
    Solving OMS-DecD must achieve two goals:
    \textbf{G}1 minimizing the regret, and
    \textbf{G}2 providing privacy protection.

    A trivial approach is to use a \textit{noncooperative algorithm}
    that independently runs a copy of an OMS algorithm on the $M$ clients.
    It naturally provides strong privacy protection, that is,
    it achieves \textbf{G}2,
    but suffers a regret bound that increases linearly with $M$.
    It is unknown whether it achieves \textbf{G}1.
    Another approach is \textit{federated learning}
    which is a framework of cooperative learning with privacy protection
    and is provably effective in stochastic convex optimization
    \cite{McMahan2017Communication,Woodworth2020Is,Wang2021A,Reddi2021Adaptive}.
    It is natural to ask:
    \begin{question}
    \label{que:ICML2024:question}
    \textit{Whether collaboration is effective for OMS-DecD.}
    \end{question}

    The question reveals the hardness of OMS-DecD
    and is helpful to understand the limitations of federated learning.
    Previous work studied a special instance of OMS-DecD
    called distributed online multi-kernel learning (OMKL)
    where $\mathcal{F}_i$ is a reproducing kernel Hilbert space (RKHS),
    and proposed three federated OMKL algorithms including
    vM-KOFL, eM-KOFL \cite{Hong2022Communication}
    and POF-MKL \cite{Ghari2022Personalized}.
    The three algorithms also suffer regret bounds that increase linearly with $M$,
    and thus can not answer the question.
    If $K=1$,
    then OMS-DecD is equivalent to distributed online learning
    \cite{Mitra2021Online,Kwon2023Tighter,Patel2023Federated}.
    A noncooperative algorithm that
    independently runs online gradient descent (OGD) on each client
    achieves the two goals simultaneously
    \cite{Patel2023Federated}.
    Collaboration is unnecessary in the case of $K=1$.

    In summary,
    previous work can not answer the question well.
    On one hand,
    previous work can not answer the question in the case of $K>1$.
    On the other hand,
    in the case of $K=1$,
    previous work has answered the question
    only using the statistical property of algorithms,
    i.e., the worst-case regret,
    but omitted the computational property
    which is very important for real-world applications.

\subsection{Main Results}

    In this paper,
    we will answer the question from a new perspective of computational constraints on the problem
    (Section \ref{sec:ICML2024:answer_to_question}).
    Our main results are as follows.
    \begin{enumerate}[(1)]
      \item \textbf{An upper bound on the regret.}~
            We propose a federated algorithm, FOMD-OMS,
            and prove an upper bound on the regret (Theorem \ref{thm:ICML2024:regret_bound:FOMD_DOKS}).
            Besides, if $\mathcal{F}_1,...,\mathcal{F}_K$ are RKHSs,
            then
            our algorithm improves the regret bounds of FOP-MKL \cite{Ghari2022Personalized}
            and eM-KOFL \cite{Hong2022Communication}
            at a smaller computational and communication cost.
            Table \ref{tab:ICML2023:comparison_results} summarizes the results.
      \item \textbf{Lower bounds on the regret.}~
            We separately prove a lower bounds on the regret of any (possibly cooperative) algorithm
            and any noncooperative algorithm (Theorem \ref{thm:NeurIPS24:lower_bound}).
      \item \textbf{A new perspective of computational constraints for Question \ref{que:ICML2024:question}.}~
            By the upper bound and lower bounds,
            we conclude that
            (i) collaboration is unnecessary when there are no computational constraints on clients,
            thereby generalizing the result for distributed online learning, i.e., $K=1$;
            (ii) collaboration is necessary if the computational cost on each client is limited to $o(K)$
             where irrelevant parameters are omitted.
            Our results clarify the unnecessary nature of collaboration
            in previous federated algorithms for distributed OMKL.
            Table \ref{tab:ICML2024:comparison_with_trivial_approach} gives several results.
    \end{enumerate}


    \begin{table*}[!t]
      \centering
      \caption{Comparison with noncooperative algorithm (NCO-OMS).
      NCO-OMS independently runs a copy of an OMS algorithm on $M$ clients.
      $\Xi_{i^\ast}=\mathfrak{C}_{i^\ast}M\sqrt{T\ln{K}}$.
      $\mathfrak{C}_i$ measures the complexity of $\mathcal{F}_i$.
      $\mathfrak{C}_{i^\ast}$ measures the complexity of $\mathcal{F}_{i^\ast}$.
      $\mathfrak{C}=\max_{i\in\{1,\ldots,K\}}\mathfrak{C}_i\geq \mathfrak{C}_{i^\ast}$.
      The communication cost is the upload cost or download cost (bits).
      Comp-cost represents the per-round time complexity (s).}
      \begin{tabular}{l|l|r|r|r}
      \toprule
        Constraint&{Algorithm}       & {Regret bound}  & Comp-cost &Communication cost\\
      \toprule
        \multirow{4}{*}{No(i.e., $R=T$)}
        & NCO-OMS   & $O\left(\mathfrak{C}_{i^\ast}M\sqrt{T\ln{(KT)}}\right)$ & $O(K)$ & $0$ \\
        & {\color{blue}FOMD-OMS} &$O\left(\mathfrak{C}_{i^\ast}M\sqrt{T\ln{(KT)}}\right)$ & $O(K)$
        & $O(KM)$ \\
        \cline{2-5}
        & NCO-OMS   & $\tilde{O}\left(\sqrt{K}\left(\Xi_{i^\ast}
        +M\sqrt{\mathfrak{C}\mathfrak{C}_{i^\ast}T}\right)\right)$ & $O(1)$ & $0$ \\
        & {\color{blue}FOMD-OMS} &{\color{blue}$\tilde{O}\left(\Xi_{i^\ast}
        +\sqrt{MK}\sqrt{\mathfrak{C}\mathfrak{C}_{i^\ast}T}\right)$} & $O(1)$
        & $O\left(M\log{K}\right)$\\
      \bottomrule
      \end{tabular}
      \label{tab:ICML2024:comparison_with_trivial_approach}
    \end{table*}

    \begin{table*}[!t]
      \centering
      \caption{Comparison with POF-MKL \cite{Ghari2022Personalized}
      and eM-KOFL \cite{Hong2022Communication}.
      $D$ is the number of random features \cite{Rahimi2007Random}.
      $R$ is the rounds of communication.
      $\tilde{O}(\cdot)$ hides polylogarithmic factor in $T$.
      For the sake of simplicity,
      we omit the factor $O(\log{K})$ in the communication cost of eM-KOFL and FOMD-OMS.
      The unit of upload cost and download cost is bits.}
      \begin{tabular}{l|l|r|r|r|r}
      \toprule
        Constraint&{Algorithm}       & {Regret bound}  & Comp-cost &Upload & download\\
      \toprule
        \multirow{4}{*}{No ($R=T$)}
        & eM-KOFL & $\tilde{O}\left(\mathfrak{C}M\sqrt{T\ln{K}}
        +\frac{\mathfrak{C}_{i^\ast}MT}{\sqrt{D}}\right)$ & $O(DK)$ &$O(DM)$&$O(DM)$\\
        & POF-MKL & $\tilde{O}\left(\mathfrak{C}M\sqrt{KT}
        +\frac{\mathfrak{C}_{i^\ast}MT}{\sqrt{D}}\right)$ & $O(DK)$ &$O(DM)$&$O(DKM)$\\
        & {\color{blue}FOMD-OMS} & {\color{blue}$\tilde{O}\left(\Xi_{i^\ast}
        +\sqrt{\mathfrak{C}\mathfrak{C}_{i^\ast}MKT}
        +\frac{\mathfrak{C}_{i^\ast}MT}{\sqrt{D}}\right)$} & {\color{blue}$O(D)$}
        & {\color{blue}$O(DM)$}& {\color{blue}$O(DM)$}\\
      \hline
        \multirow{4}{*}{Yes ($R<T$)}
        & eM-KOFL   & - & - &- &-\\
        & POF-MKL   & - & - &- &-\\
        & {\color{blue}FOMD-OMS} & {\color{blue}$\tilde{O}\left(\frac{\Xi_{i^\ast}}{\sqrt{R/T}}
        +\frac{\sqrt{\mathfrak{C}\mathfrak{C}_{i^\ast}MK}T}{\sqrt{R}}+\frac{\mathfrak{C}_{i^\ast}MT}{\sqrt{D}}\right)$}
        & {\color{blue}$O(D)$}
        &{\color{blue}$O\left(\frac{DMR}{T}\right)$}&{\color{blue}$O\left(\frac{DMR}{T}\right)$}\\
      \bottomrule
      \end{tabular}
      \label{tab:ICML2023:comparison_results}
    \end{table*}

\subsection{Technical Challenges}

    There are two main technical challenges on designing a federated online model selection algorithm.

    The first challenge lies in obtaining high-probability regret bounds
    that adapt to the complexity of individual hypothesis space,
    a fundamental problem in online model selection \cite{Foster2017Parameter}.
    While acquiring expected regret bounds
    that adapt to the complexity of individual hypothesis spaces is straightforward,
    the crux is to derive high-probability bounds from expected bounds.
    To this end,
    we introduce a new Bernstein's inequality for martingale
    (Lemma \ref{lemma:AISTATS2020:improved:Bernstein_ineq_for martingales}),
    which might be of independent interest.

    The second challenge involves achieving a per-round communication cost of $o(K)$.
    To tackle this challenge,
    we propose two techniques:
    (i) decoupling model selection and prediction;
    (ii) an algorithmic framework, named FOMD-No-LU, which might be of independent interest.
    Specifically,
    when clients execute model selection,
    server must broadcast an aggregated probability distribution,
    denoted by ${\bf p}\in\mathbb{R}^K$, to clients, naturally incurring a $O(K)$ download cost.
    Our algorithm conducts model selection on server and makes predictions on clients,
    thereby eliminating the need to broadcast the aggregated probability distribution to clients.
    Additionally,
    if we use the local updating approach \cite{Mitra2021Online,Patel2023Federated},
    then server must broadcast $K$ aggregated models to clients,
    also resulting in a $O(K)$ download cost \cite{Ghari2022Personalized}.
    By utilizing FOMD-No-LU,
    our algorithm only broadcasts the selected models to clients
    and can achieve a $o(K)$ download cost.

\section{Related work}

    Previous work has studied the necessity of collaboration for
    distributed bandit convex optimization \cite{Patel2023Federated},
    where a federated algorithmic framework named FEDPOSGD was proposed.
    Although the regret bounds of FEDPOSGD are smaller than some noncooperative algorithms,
    there is not a lower bound on the regret of any noncooperative algorithm \cite{Patel2023Federated}.
    Moreover,
    the regret analysis of FEDPOSGD is based on the analysis for federated online gradient descent
    that is not applicable to our algorithm, FOMD-OMS.
    The regret analysis of FOMD-OMS requires the
    analysis for federated online mirror descent with negative entropy regularizer.

    Our work is also different from federated bandits, such as federated $K$-armed bandits
    \cite{Wang2020Distributed} and federated linear contextual bandits \cite{Huang2021Federated}.
    For OMS-DecD,
    we do not assume that the examples $({\bf x}^{(j)}_t,y^{(j)}_t)$, $t=1,\ldots,T$,
    on each client are independent and identically distributed (i.i.d.).
    In contrast,
    in both federated $K$-armed bandits or federated linear contextual bandits,
    the rewards must be i.i.d.,
    thereby making collaboration effective.
    This is similar to the approach used in federated stochastic optimization.
    However, this may not hold true for OMS-DedD.
    Therefore,
    it is a distinctive problem for OMS-DecD to study whether collaboration is effective.

\section{Preliminaries and Problem Setting}

\subsection{Notations}

    Let $\mathcal{X}=\{\mathbf{x}\in\mathbb{R}^d\vert\Vert\mathbf{x}\Vert_2<\infty\}$
    be an instance space,
    $\mathcal{Y}=\{y\in\mathbb{R}:\vert y\vert<\infty\}$ be an output space,
    and $\mathcal{I}_T=\{(\mathbf{x}_t,y_t)\}_{t\in[T]}$ be a sequence of examples,
    where $[T] = \{1,\ldots,T\}$, $\mathbf{x}_t\in\mathcal{X}$ and $y_t\in\mathcal{Y}$.
    Let $S=\{s_1,s_2,\ldots\}$ be a finite set,
    $\mathrm{Uni}(S)$ be the uniform distribution over the elements in $S$
    and $s_{[T]}$ be the abbreviation of the sequence $s_1, s_2,\ldots,s_T$.
    Denote by $\mathbb{P}[A]$ the probability that an event $A$ occurs,
    $a\wedge b =\min\{a,b\}$, $a\vee b =\max\{a,b\}$ and $\log(a)=\log_2(a)$.
    Let $\psi_t(\cdot):\Omega\rightarrow \mathbb{R},t\in[T]$
    be a sequence of time-variant strongly convex regularizers defined on a domain $\Omega$.
    The Bregman divergence denoted by $\mathcal{D}_{\psi_t}(\cdot,\cdot)$,
    associated with $\psi_t(\cdot)$ is defined by
    \begin{align*}
        \forall {\bf u},{\bf v}\in \Omega,\quad\mathcal{D}_{\psi_t}(\mathbf{u},\mathbf{v})
        =\psi_t(\mathbf{u})-\psi_t(\mathbf{v})-\langle\nabla \psi_t(\mathbf{v}),\mathbf{u}-\mathbf{v}\rangle.
    \end{align*}

\subsection{Online Model Selection (OMS)}

    Let $\mathcal{F}=\{\mathcal{F}_1,...,\mathcal{F}_K\}$ contain $K$ hypothesis spaces
    where
    \begin{equation}
    \label{eq:ICML2024:definition:hypothesis_space}
        \mathcal{F}_i=\left\{f({\bf x})={\bf w}^\top\phi_i({\bf x}):
        \phi_i({\bf x})\in \mathbb{R}^{d_i},\Vert {\bf w}\Vert_2\leq U_i\right\},
    \end{equation}
    where $\Vert\cdot\Vert_2$ is the $L_2$ norm.
    Let $\mathcal{F}_{i^\ast}\in\mathcal{F}$ be the optimal but unknown hypothesis space
    for a given $\mathcal{I}_T$.
    OMS can be defined as follows:
    generating a sequence of hypotheses $f_{[T]}$ that minimizes the following \textit{regret},
    \begin{align*}
        \forall i\in[K],\quad
        \mathrm{Reg}(\mathcal{F}_i)
        =\sum^T_{t=1}\ell(f_t({\bf x}_t),y_t)-\min_{f\in\mathcal{F}_i}\sum^T_{t=1}\ell(f({\bf x}_t),y_t),
    \end{align*}
    where $f_t\in\cup^K_{i=1}\mathcal{F}_i$.
    The optimal hypothesis space $\mathcal{F}_{i^\ast}$
    must contain a good hypothesis and has a low complexity \cite{Foster2017Parameter,Foster2019},
    and is defined by
    \begin{align*}
        \mathcal{F}_{i^\ast}=\mathop{\arg\min}_{\mathcal{F}_i\in\mathcal{F}}
        \left[\min_{f\in\mathcal{F}_i}\sum^T_{t=1}\ell(f({\bf x}_t),y_t)
        +\Theta\left(\sqrt{T\cdot\mathfrak{C}_i}\right)\right],
    \end{align*}
    where $\mathfrak{C}_i$ measures the complexity of $\mathcal{F}_i$,
    such as $U_i$ and $d_i$.

    OMS is more challenge than online learning,
    since we not only learn the optimal hypothesis space,
    but also learn the optimal hypothesis in the space.
    Next we give some examples of OMS.
    \begin{example}[Online Hyper-parameters Tuning]
    \label{ex:ICML24:regularization_tunning}
        Let $\mathcal{F}_i$ consist of linear functions of the form
        \begin{align*}
            \mathcal{F}_i=\left\{f({\bf x})=\langle {\bf w},{\bf x}\rangle,\Vert{\bf w}\Vert_2\leq U_i\right\},
        \end{align*}
        where $U_i>0$ is a regularization parameter.
        Let $\mathcal{U}=\{U_i,i\in[K]:U_1<U_2<\ldots<U_K\}$.
        The hypothesis spaces are nested, i.e.,
        $
            \mathcal{F}_1\subseteq \mathcal{F}_2\subseteq\ldots\subseteq\mathcal{F}_K.
        $
        The optimal regularization parameter $U_{i^\ast}\in\mathcal{U}$ corresponds to
        the optimal hypothesis space $\mathcal{F}_{i^\ast}\in\mathcal{F}$.
    \end{example}
    \begin{example}[Online Kernel Selection \cite{Shen2019Random,Li2022Improved}]
    \label{ex:ICML24:OKS}
        Let $\kappa_i(\cdot,\cdot):\mathbb{R}^d \times \mathbb{R}^d \rightarrow \mathbb{R}$
        be a positive semidefinite kernel function,
        and $\phi_i:\mathbb{R}^d\rightarrow\mathbb{R}^{d_i}$ be the associated feature mapping.
        $\mathcal{F}_i$ is the RKHS associated with $\kappa_i$, i.e.,
        \begin{align*}
            \mathcal{F}_i=\left\{f({\bf x})=\langle {\bf w},\phi_i({\bf x})\rangle:
            \Vert {\bf w}\Vert_2\leq U_i\right\}.
        \end{align*}
        The optimal kernel function $\kappa_{i^\ast}\in\{\kappa_1,\ldots,\kappa_K\}$ corresponds to
        the optimal RKHS $\mathcal{F}_{i^\ast}\in\mathcal{F}$.
    \end{example}

    \begin{example}[Online Pre-trained Classifier Selection \cite{Karimi2021Online}]
        Generally,
        $\mathcal{F}_i$ can be a well-trained machine learning model.
        Let $\mathcal{F}$ contain $K$ pre-trained classifiers.
        For a new instance ${\bf x}_t$,
        we select a (combinational) pre-trained classifier and make a prediction.
        The selection of a pre-trained classifier has an important implication in practical scenarios.
    \end{example}

\subsection{Online Model Selection with Decentralized Data (OMS-DecD)}

    We formally define OMS-DecD as follows.
    Assuming that there are $M$ clients and a server.
    At any round $t$,
    each client observes an instance ${\bf x}^{(j)}_t$,
    and selects a hypothesis $f^{(j)}_t\in\cup^K_{i=1}\mathcal{F}_i$, $j\in[M]$.
    Then clients output predictions $\{f^{(j)}_t({\bf x}^{(j)}_t)\}^M_{j=1}$.
    The goal is to minimize the following regret
    \begin{align*}
        \forall i\in[K],\quad \mathrm{Reg}_{D}(\mathcal{F}_i)
        =\sum^T_{t=1}\sum^M_{j=1}\ell\left(f^{(j)}_t({\bf x}^{(j)}_t),y^{(j)}_t\right)
        -\min_{f\in\mathcal{F}_i}\sum^T_{t=1}\sum^M_{j=1}\ell\left(f({\bf x}^{(j)}_t),y^{(j)}_t\right),
    \end{align*}
    where $y^{(j)}_t$ is the label or true output.
    Each client can not share personalized data with others,
    but can share personalized models or gradients via the central server.
    For simplicity,
    we define OMS-DecD in Protocol \ref{pro:ICML2024:DOMS}.

    \begin{algorithm}[!t]
        \captionsetup{labelformat=empty} 
        \caption{\textbf{Protocol 1} OMS-DecD}
        \footnotesize
        \label{pro:ICML2024:DOMS}
        \begin{algorithmic}[1]
        \FOR{$t=1,2,\ldots,T$}
            \FOR{$j=1,\ldots,M$ in parallel}
                \STATE The adversary sends ${\bf x}^{(j)}_t$ to the $j$-th client
                \STATE The learner selects a hypothesis space $\mathcal{F}_{I_t}\in\mathcal{F}$
                \STATE The learner selects $f^{(j)}_t\in\mathcal{F}_{I_t}$
                        and outputs $f^{(j)}_t({\bf x}^{(j)}_t)$
                \STATE The learner observes the true output $y^{(j)}_t$
            \ENDFOR
        \ENDFOR
        \end{algorithmic}
    \end{algorithm}

\section{FOMD-No-LU}

    In this section,
    we propose a federated algorithmic framework,
    FOMD-No-LU (Federated Online Mirror Descent without Local Updating)
    for online collaboration.

\subsection{Federated Algorithmic Framework}

    Let $\Omega$ be a convex and bounded decision set.
    At any round $t$,
    each client $j\in[M]$ first selects a decision ${\bf u}^{(j)}_t\in \Omega$,
    and then observes a loss function $l^{(j)}_t(\cdot):\Omega\rightarrow\mathbb{R}$.
    The client computes the loss $l^{(j)}_t({\bf u}^{(j)}_t)$
    and an estimator of the gradient denoted by $\tilde{g}^{(j)}_t$
    (or the gradient denoted by $g^{(j)}_t$).
    To reduce the communication cost,
    we adopt the intermittent communication (IC) protocol \cite{Woodworth2021The},
    in which the clients communicate with the server every $N$ rounds.
    Assuming that $T=N\times R$ where $N,R\in\mathbb{Z}$,
    the IC protocol limits the rounds of communication to $R$.

    We divide $[T]$ into $R$ disjoint sub-intervals denoted by $\{T_r\}^R_{r=1}$,
    in which
    \begin{equation}
    \label{eq:ICML2024:T_r}
        T_r=\left\{(r-1)N+1,(r-1)N+2,\ldots,rN\right\}.
    \end{equation}
    For any $t\in T_r$,
    all clients always select the initial decision,
    \begin{equation}
    \label{eq:ICML2024:FOMD-No-LU:batching}
        \forall j\in[M],~\forall t\in T_r,\quad {\bf u}^{(j)}_{t}={\bf u}^{(j)}_{(r-1)N+1}.
    \end{equation}
    At the end of the $rN$-round,
    all of clients send $\frac{1}{N}\sum_{t\in T_r}\tilde{g}^{(j)}_t$, $j\in[M]$ to server.
    Then the server updates the decision using
    online mirror descent framework \cite{Bubeck2012Regret,Agarwal2017Corralling},
    \begin{align}
        {\bf u}_t&=\frac{1}{M}\sum^M_{j=1}{\bf u}^{(j)}_t,\label{eq:ICML2024:FOMD-No-LU:first_step}\\
        \bar{g}_t&=\frac{1}{M}\sum^M_{j=1}\left(\frac{1}{N}\sum_{t\in T_r}\tilde{g}^{(j)}_{t}\right),
        \label{eq:ICML2024:FOMD-No-LU:second_step}\\
        \nabla_{\bar{{\bf u}}_{t+1}}\psi_t(\bar{{\bf u}}_{t+1})
        &=\nabla_{{\bf u}_t}\psi_t({\bf u}_{t})-\bar{g}_t,
        \label{eq:ICML2024:FOMD-No-LU:third_step}\\
        {\bf u}_{t+1}&=\mathop{\arg\min}_{{\bf u}\in\Omega}
        \mathcal{D}_{\psi_t}({\bf u},\bar{{\bf u}}_{t+1}).
        \label{eq:ICML2024:FOMD-No-LU:fourth_step}
    \end{align}
    \eqref{eq:ICML2024:FOMD-No-LU:first_step}-\eqref{eq:ICML2024:FOMD-No-LU:third_step}
    is called model averaging
    \cite{McMahan2017Communication}
    and shows the collaboration among clients.
    Finally, the server may broadcast ${\bf u}_{t+1}$ to all clients, i.e.,
    $$
        \forall j\in[M],\quad {\bf u}^{(j)}_{t+1}={\bf u}_{t+1}.
    $$
    Let the initial decision ${\bf u}^{(j)}_1={\bf u}_1$ for all $j\in[M]$,
    then it must be ${\bf u}^{(j)}_t={\bf u}_t$ for all $t\in[T]$.
    Thus \eqref{eq:ICML2024:FOMD-No-LU:first_step} is unnecessary,
    and the clients do not transmit ${\bf u}^{(j)}_t$ to server.
    The pseudo-code of FOMD-No-LU is shown in Algorithm
    \ref{alg:ICML2024:FOMD-No-LU}.

    \begin{algorithm}[!t]
        \caption{FOMD-No-LU}
        \footnotesize
        \label{alg:ICML2024:FOMD-No-LU}
        \begin{algorithmic}[1]
        \REQUIRE{$\Omega$.}
        \ENSURE{${\bf u}^{(j)}_1,j\in[M]$}
        \FOR{$r=1,2,\ldots,R$}
            \FOR{$t=(r-1)N+1,\ldots,rN$}
                \FOR{$j=1,\ldots,M$ in parallel}
                    \STATE Selecting ${\bf u}^{(j)}_{(r-1)N+1}$
                    \STATE Observing loss function $l^{(j)}_t(\cdot)$
                    \STATE Computing gradient (or an estimator of gradient) $\tilde{g}^{(j)}_t$
                    \IF{$t==rN$}
                        \STATE Transmitting $\frac{1}{N}\sum_{t\in T_r}\tilde{g}^{(j)}_t$ to server
                    \ENDIF
                \ENDFOR
            \IF{$t==rN$}
                \STATE Server computes ${\bf u}_{t+1}$
                        following
                        \eqref{eq:ICML2024:FOMD-No-LU:second_step},
                        \eqref{eq:ICML2024:FOMD-No-LU:third_step}
                        and
                        \eqref{eq:ICML2024:FOMD-No-LU:fourth_step}
                \STATE Server may broadcast ${\bf u}_{t+1}$: ${\bf u}^{(j)}_{t+1}={\bf u}_{t+1}$, $j\in[M]$
            \ENDIF
            \ENDFOR
        \ENDFOR
        \end{algorithmic}
    \end{algorithm}

\subsection{Regret Bound}

    We give the regret bounds of FOMD-No-LU.
    \begin{theorem}
    \label{lemma:ICML2024:regret_OMD}
        Let $R=T$.
        Assuming that $l^{(j)}_t:\Omega\rightarrow \mathbb{R},t\in[T],j\in[M]$ is convex.
        Let $g^{(j)}_t=\nabla_{{\bf u}^{(j)}_t} l^{(j)}_t({\bf u}^{(j)}_t)$
        and $\tilde{g}^{(j)}_t$ be an estimator of $g^{(j)}_t$.
        At any round $t$,
        let ${\bf q}_{t+1}$ and ${\bf r}_{t+1}$ be two auxiliary decisions
        defined as follows,
        \begin{align}
            \nabla_{{\bf q}_{t+1}}\psi_t({\bf q}_{t+1})
            =&\nabla_{{\bf u}_{t}}\psi_t({\bf u}_{t})
            -2\sum^M_{j=1}\frac{\tilde{g}^{(j)}_t-g^{(j)}_t}{M},
            \label{eq:ICML2024:OMD_auxiliary:q}\\
            \nabla_{{\bf r}_{t+1}}\psi_t({\bf r}_{t+1})
            =&\nabla_{{\bf u}_{t}}\psi_t({\bf u}_{t})
            -\frac{2}{M}\sum^M_{j=1}g^{(j)}_t.
            \label{eq:ICML2024:OMD_auxiliary:r}
        \end{align}
        Then FOMD-No-LU guarantees that,
        \begin{align*}
            &\forall {\bf v}\in\Omega,\quad \frac{1}{M}\sum^T_{t=1}\sum^M_{j=1}\left(l^{(j)}_t({\bf u}^{(j)}_{t})-l^{(j)}_t({\bf v})\right)\\
            \leq&\underbrace{\sum^T_{t=1}\left[\mathcal{D}_{\psi_t}({\bf v},{\bf u}_t)
            -\mathcal{D}_{\psi_t}({\bf v},{\bf u}_{t+1})
            +\frac{\mathcal{D}_{\psi_t}({\bf u}_{t},{\bf r}_{t+1})}{2}\right]}_{\Xi_1}+
            \underbrace{\sum^T_{t=1}\left[\frac{\mathcal{D}_{\psi_t}({\bf u}_{t},{\bf q}_{t+1})}{2}
            +\sum^M_{j=1}\frac{\left\langle \tilde{g}^{(j)}_t-g^{(j)}_t,{\bf u}_{t}-{\bf v}\right\rangle}{M}
            \right]}_{\Xi_2}.
        \end{align*}
    \end{theorem}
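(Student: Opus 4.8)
The plan is to linearize the averaged regret, run the textbook online mirror descent (OMD) argument on the aggregated gradient $\bar{g}_t$, and then split the single divergence penalty that this argument leaves behind into the two auxiliary penalties built from ${\bf q}_{t+1}$ and ${\bf r}_{t+1}$. Since $R=T$ means $N=1$ and all clients carry the common server iterate ${\bf u}^{(j)}_t={\bf u}_t$, the aggregated gradient is $\bar{g}_t=\frac{1}{M}\sum_{j=1}^M\tilde{g}^{(j)}_t$. Convexity of each $l^{(j)}_t$ gives $\frac{1}{M}\sum_j(l^{(j)}_t({\bf u}_t)-l^{(j)}_t({\bf v}))\le\frac{1}{M}\sum_j\langle g^{(j)}_t,{\bf u}_t-{\bf v}\rangle$, so it suffices to bound $\sum_t\frac{1}{M}\sum_j\langle g^{(j)}_t,{\bf u}_t-{\bf v}\rangle$. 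I would write $\frac{1}{M}\sum_j g^{(j)}_t=\bar{g}_t-\frac{1}{M}\sum_j(\tilde{g}^{(j)}_t-g^{(j)}_t)$, which peels off exactly the linear bias term appearing in $\Xi_2$ and leaves $\langle\bar{g}_t,{\bf u}_t-{\bf v}\rangle$ to be handled by OMD.

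Next I would apply the three-point identity of the Bregman divergence to the server update. Because $\nabla\psi_t(\bar{{\bf u}}_{t+1})=\nabla\psi_t({\bf u}_t)-\bar{g}_t$, the identity gives $\langle\bar{g}_t,{\bf u}_t-{\bf v}\rangle=\mathcal{D}_{\psi_t}({\bf v},{\bf u}_t)+\mathcal{D}_{\psi_t}({\bf u}_t,\bar{{\bf u}}_{t+1})-\mathcal{D}_{\psi_t}({\bf v},\bar{{\bf u}}_{t+1})$, and the generalized Pythagorean inequality for the projection ${\bf u}_{t+1}=\mathop{\arg\min}_{{\bf u}\in\Omega}\mathcal{D}_{\psi_t}({\bf u},\bar{{\bf u}}_{t+1})$ lets me use $\mathcal{D}_{\psi_t}({\bf v},{\bf u}_{t+1})\le\mathcal{D}_{\psi_t}({\bf v},\bar{{\bf u}}_{t+1})$ for ${\bf v}\in\Omega$. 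This already produces the telescoping pair $\mathcal{D}_{\psi_t}({\bf v},{\bf u}_t)-\mathcal{D}_{\psi_t}({\bf v},{\bf u}_{t+1})$ of $\Xi_1$, leaving the single penalty $\mathcal{D}_{\psi_t}({\bf u}_t,\bar{{\bf u}}_{t+1})$ to be decomposed.

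The crux is splitting $\mathcal{D}_{\psi_t}({\bf u}_t,\bar{{\bf u}}_{t+1})$ into the ${\bf q}$- and ${\bf r}$-penalties, and this is exactly what the factor of two in \eqref{eq:ICML2024:OMD_auxiliary:q}--\eqref{eq:ICML2024:OMD_auxiliary:r} is designed for. Writing the bias as $\frac{1}{M}\sum_j(\tilde{g}^{(j)}_t-g^{(j)}_t)$ and the true average gradient as $\frac{1}{M}\sum_j g^{(j)}_t$, the dual point of $\bar{{\bf u}}_{t+1}$ is the midpoint $\nabla\psi_t(\bar{{\bf u}}_{t+1})=\frac{1}{2}\nabla\psi_t({\bf q}_{t+1})+\frac{1}{2}\nabla\psi_t({\bf r}_{t+1})$. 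Using the conjugate identity $\mathcal{D}_{\psi_t}({\bf u}_t,{\bf z})=\mathcal{D}_{\psi_t^\ast}(\nabla\psi_t({\bf z}),\nabla\psi_t({\bf u}_t))$ and the convexity of $\mathcal{D}_{\psi_t^\ast}(\cdot,\nabla\psi_t({\bf u}_t))$ in its first argument, I obtain $\mathcal{D}_{\psi_t}({\bf u}_t,\bar{{\bf u}}_{t+1})\le\frac{1}{2}\mathcal{D}_{\psi_t}({\bf u}_t,{\bf q}_{t+1})+\frac{1}{2}\mathcal{D}_{\psi_t}({\bf u}_t,{\bf r}_{t+1})$. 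Routing the $\tfrac{1}{2}\mathcal{D}_{\psi_t}({\bf u}_t,{\bf r}_{t+1})$ half into $\Xi_1$ and the $\tfrac{1}{2}\mathcal{D}_{\psi_t}({\bf u}_t,{\bf q}_{t+1})$ half together with the bias term into $\Xi_2$, then summing over $t\in[T]$, gives the claimed inequality.

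I expect this dual-convexity split to be the main obstacle. It hinges on the duality identity between $\mathcal{D}_{\psi_t}$ and $\mathcal{D}_{\psi_t^\ast}$, hence on each $\psi_t$ being of Legendre type so that $\nabla\psi_t$ is invertible and the auxiliary points ${\bf q}_{t+1},{\bf r}_{t+1}$ are well defined; and it is the precise reason the doubled step size appears in their definitions, since only then is $\bar{{\bf u}}_{t+1}$ the dual midpoint that makes convexity applicable. The remaining ingredients---convexity of the losses, the three-point identity, and the Pythagorean inequality---are standard, and the time-varying regularizer poses no difficulty because the statement keeps each round's divergences un-telescoped, so no compatibility between $\psi_t$ and $\psi_{t+1}$ is required.
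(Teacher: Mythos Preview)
Your approach is correct and genuinely different from the paper's. You work with $\langle\bar g_t,{\bf u}_t-{\bf v}\rangle$, obtain the single penalty $\mathcal{D}_{\psi_t}({\bf u}_t,\bar{\bf u}_{t+1})$ from the three-point identity plus the Bregman--Pythagorean inequality, and then split that penalty by observing that $\nabla\psi_t(\bar{\bf u}_{t+1})$ is the dual midpoint of $\nabla\psi_t({\bf q}_{t+1})$ and $\nabla\psi_t({\bf r}_{t+1})$ and invoking convexity of $\mathcal{D}_{\psi_t^\ast}(\cdot,\nabla\psi_t({\bf u}_t))$. The paper instead analyzes $\langle\bar g_t,{\bf u}_{t+1}-{\bf v}\rangle$, bounding it from above via the first-order optimality condition (which produces $-\mathcal{D}_{\psi_t}({\bf u}_{t+1},{\bf u}_t)$) and from below by writing it as $\frac{1}{M}\sum_j\langle g^{(j)}_t,{\bf u}_t-{\bf v}\rangle$ plus two cross terms of the form $\langle\cdot,{\bf u}_{t+1}-{\bf u}_t\rangle$; each cross term is handled by a three-point identity against ${\bf r}_{t+1}$ or ${\bf q}_{t+1}$, contributing $-\tfrac12\mathcal{D}_{\psi_t}({\bf u}_{t+1},{\bf u}_t)$ apiece, and these exactly cancel the $-\mathcal{D}_{\psi_t}({\bf u}_{t+1},{\bf u}_t)$ from the upper bound. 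Your route is cleaner and makes transparent \emph{why} the factor $2$ appears in the definitions of ${\bf q}_{t+1},{\bf r}_{t+1}$ (it is precisely what makes $\bar{\bf u}_{t+1}$ the dual midpoint), at the price of invoking the duality identity $\mathcal{D}_{\psi_t}({\bf u},{\bf z})=\mathcal{D}_{\psi_t^\ast}(\nabla\psi_t({\bf z}),\nabla\psi_t({\bf u}))$ and hence Legendre-type assumptions on $\psi_t$; the paper's argument is more elementary in that it uses only three-point identities and nonnegativity of Bregman divergences, but relies on a cancellation that looks fortuitous until one sees your explanation.
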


    It is intriguing that the regret bound comprises two components:
    the first part, $\Xi_1$, cannot be reduced by collaboration,
    while the second part, $\Xi_2$, highlights the benefits of collaboration.
    $\Xi_1$ is the regret induced by exact gradients,
    while
    $\Xi_2$ is the regret induced by estimated gradients
    and shows how collaboration controls the regret.
    It is worth mentioning that
    Theorem \ref{lemma:ICML2024:regret_OMD} gives a general regret bound,
    from which various types of regret bounds can be readily derived
    by instantiating the decision set $\Omega$ and the regularizer $\psi_t(\cdot)$.
    For instance,
    if $\Omega=\mathcal{F}_i$ where $\mathcal{F}_i$ follows
    Example \ref{ex:ICML24:regularization_tunning},
    $\psi_t({\bf v})=\frac{1}{2\lambda}\Vert {\bf v}\Vert^2_2$
    and $\mathbb{E}[\Vert \tilde{g}^{(j)}_t\Vert^2_2]\leq C\Vert g^{(j)}_t\Vert^2_2$,
    then FOMD-No-LU becomes a federated online descent descent.
    It is easy to give a $O(MU_i\sqrt{(1+\frac{C}{M})T})$ expected regret
    from Theorem \ref{lemma:ICML2024:regret_OMD}.
    Besides, $N>1$ increases the regret and
    shows the trade-off between communication cost and regret bound.

    Theorem \ref{lemma:ICML2024:regret_OMD} requires a novel analysis on
    how the bias of estimators, i.e.,
    $\sum^M_{j=1}\Vert \tilde{g}^{(j)}_t-g^{(j)}_t\Vert^2_2$, is controlled by cooperation.
    To this end, we introduce two virtual decisions ${\bf q}_{t+1}$ and ${\bf r}_{t+1}$
    that are updated by $2\sum^M_{j=1}\frac{\tilde{g}^{(j)}_t-g^{(j)}_t}{M}$ and
    $2\sum^M_{j=1}\frac{g^{(j)}_t}{M}$, respectively.
    Previous federated online mirror descent uses exact gradients $g^{(j)}_t,j\in[M]$ \cite{Mitra2021Online}.
    Thus its analysis is different from ours.

    \begin{theorem}
    \label{lemma:ICML2024:regret_OMD_2}
        Let $R<T$
        and $\mathcal{R}=\{N,2N,\ldots,RN\}$.
        At any round $t\in\mathcal{R}$,
        let ${\bf q}_{t+1}$ and ${\bf r}_{t+1}$ be two auxiliary decisions
        which follow \eqref{eq:ICML2024:OMD_auxiliary:q}
        and \eqref{eq:ICML2024:OMD_auxiliary:r}.
        Under the assumptions in Theorem \ref{lemma:ICML2024:regret_OMD},
        FOMD-No-LU guarantees that,
        \begin{align*}
            &\forall {\bf v}\in\Omega,\quad \frac{1}{NM}\sum^T_{t=1}\sum^M_{j=1}\left(l^{(j)}_t({\bf u}^{(j)}_{t})-l^{(j)}_t({\bf v})\right)\\
            \leq&\sum_{t\in \mathcal{R}}\left[\mathcal{D}_{\psi_t}({\bf v},{\bf u}_t)
            -\mathcal{D}_{\psi_t}({\bf v},{\bf u}_{t+1})
            +\frac{\mathcal{D}_{\psi_t}({\bf u}_{t},{\bf r}_{t+1})}{2}\right]+
            \sum_{t\in \mathcal{R}}\left[\frac{\mathcal{D}_{\psi_t}({\bf u}_{t},{\bf q}_{t+1})}{2}
            +\sum^M_{j=1}\frac{\left\langle \tilde{g}^{(j)}_t-g^{(j)}_t,{\bf u}_{t}-{\bf v}\right\rangle}{M}
            \right].
        \end{align*}
    \end{theorem}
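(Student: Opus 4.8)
The plan is to reduce the intermittent-communication case ($R<T$) to the per-round case of Theorem \ref{lemma:ICML2024:regret_OMD} ($R=T$) by a batching argument that treats each sub-interval $T_r$ as a single effective round. The starting observation is the batching identity \eqref{eq:ICML2024:FOMD-No-LU:batching}: together with the common initialization ${\bf u}^{(j)}_1={\bf u}_1$, it forces ${\bf u}^{(j)}_t={\bf u}_t={\bf u}_{(r-1)N+1}$ for every $t\in T_r$ and every $j$. Writing $\bar{\bf u}_r:={\bf u}_{(r-1)N+1}$ for the decision held fixed throughout $T_r$, I would define the effective per-interval loss $L^{(j)}_r:=\frac{1}{N}\sum_{t\in T_r}l^{(j)}_t$, which is convex as an average of convex functions. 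The key point is that, since the decision is constant on $T_r$, each per-round gradient $g^{(j)}_t=\nabla l^{(j)}_t({\bf u}^{(j)}_t)$ is evaluated at the \emph{same} point $\bar{\bf u}_r$; hence $\nabla L^{(j)}_r(\bar{\bf u}_r)=\frac{1}{N}\sum_{t\in T_r}g^{(j)}_t=:G^{(j)}_r$, and the quantity the clients actually transmit, $\frac{1}{N}\sum_{t\in T_r}\tilde{g}^{(j)}_t=:\tilde{G}^{(j)}_r$, is precisely an estimator of $G^{(j)}_r$.

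With these effective quantities in hand, I would first rewrite the left-hand side. Splitting $\sum_{t=1}^T$ into the disjoint sub-intervals $\{T_r\}_{r=1}^R$ and using that ${\bf u}_t=\bar{\bf u}_r$ on $T_r$ gives the exact identity
\begin{align*}
\frac{1}{NM}\sum_{t=1}^T\sum_{j=1}^M\left(l^{(j)}_t({\bf u}^{(j)}_t)-l^{(j)}_t({\bf v})\right)
=\frac{1}{M}\sum_{r=1}^R\sum_{j=1}^M\left(L^{(j)}_r(\bar{\bf u}_r)-L^{(j)}_r({\bf v})\right),
\end{align*}
so the factor $\frac{1}{N}$ on the left is exactly absorbed into the averaging that defines $L^{(j)}_r$. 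I would then check that the server's update at the communication round $t=rN$ -- equations \eqref{eq:ICML2024:FOMD-No-LU:second_step}--\eqref{eq:ICML2024:FOMD-No-LU:fourth_step} with aggregated gradient $\bar{g}_{rN}=\frac{1}{M}\sum_{j=1}^M\tilde{G}^{(j)}_r$ -- is literally one online-mirror-descent step, driven by the averaged estimated gradient $\tilde{G}^{(j)}_r$, that carries $\bar{\bf u}_r$ to $\bar{\bf u}_{r+1}$ under the regularizer $\psi_{rN}$. Thus the sequence $(\bar{\bf u}_r)_{r=1}^R$ is generated by FOMD-No-LU run with $N=1$ on the effective $R$-round problem $\{(L^{(j)}_r,G^{(j)}_r,\tilde{G}^{(j)}_r)\}$.

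This reduction lets me invoke Theorem \ref{lemma:ICML2024:regret_OMD} directly on the effective problem. Its right-hand side, with the sum over the $R$ effective rounds re-indexed by the communication rounds $\mathcal{R}=\{N,2N,\ldots,RN\}$ and with the auxiliary decisions ${\bf q}_{t+1},{\bf r}_{t+1}$, $t\in\mathcal{R}$, instantiated by \eqref{eq:ICML2024:OMD_auxiliary:q}--\eqref{eq:ICML2024:OMD_auxiliary:r} using the effective gradients $G^{(j)}_r,\tilde{G}^{(j)}_r$, is exactly the claimed bound. The main obstacle I anticipate is bookkeeping rather than conceptual: one must verify that the per-interval \emph{averaged} gradients (and not the single round-$rN$ gradients) are what play the role of $g^{(j)}_t,\tilde{g}^{(j)}_t$ in the auxiliary updates, so that the Bregman telescoping and the bias term $\langle\tilde{g}^{(j)}_t-g^{(j)}_t,{\bf u}_t-{\bf v}\rangle$ transfer verbatim from Theorem \ref{lemma:ICML2024:regret_OMD}. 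The enabling step throughout is the constancy of the decision on each $T_r$, which is what allows the inner sum of gradients to collapse into the single gradient $G^{(j)}_r$ of the averaged loss $L^{(j)}_r$ and thereby legitimizes the entire batched view.
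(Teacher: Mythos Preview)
Your proposal is correct and follows essentially the same batching reduction as the paper: define the averaged loss $L^{(j)}_r=\frac{1}{N}\sum_{t\in T_r}l^{(j)}_t$, use the constancy of the decision on each $T_r$ to rewrite the regret as an $R$-round sum, observe that the server update at $t=rN$ is exactly one OMD step for this effective problem, and invoke Theorem~\ref{lemma:ICML2024:regret_OMD}. Your remark that the auxiliary updates \eqref{eq:ICML2024:OMD_auxiliary:q}--\eqref{eq:ICML2024:OMD_auxiliary:r} must be read with the \emph{averaged} gradients $G^{(j)}_r,\tilde{G}^{(j)}_r$ in place of $g^{(j)}_t,\tilde{g}^{(j)}_t$ is exactly the right bookkeeping observation.
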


    It is obvious that $N>1$ increases the regret,
    that is, the reduction on the communication cost is at the cost of regret,
    which shows the trade-off between communication cost and regret bound.
    We will explicitly give the trade-off.

\subsection{Comparison with Previous Work}

    In fact, FOMD-No-LU adopts the batching technique \cite{Dekel2011Optimal},
    that is,
    it divides $[T]$ into $R$ sub-intervals
    and executes \eqref{eq:ICML2024:FOMD-No-LU:batching} during each sub-intervals.
    The batching technique (also known as mini-batch)
    has been used in the multi-armed bandit problem \cite{Dekel2012Online}
    and distributed stochastic convex optimization \cite{Karimireddy2020SCAFFOLD,Woodworth2020Minibatch}.
    We use the batching technique for the first time to distributed online learning.

    FOMD-No-LU is different from FedOMD (federated online mirror descent) \cite{Mitra2021Online}.
    (i) FedOMD only transmits exact gradients,
    while FOMD-No-LU can transmit estimators of gradient.
    Thus the regret bound of FedOMD did not contain $\Xi_2$
    in Theorem \ref{lemma:ICML2024:regret_OMD}.
    (ii) FedOMD uses local updating,
    such as local OGD \cite{Patel2023Federated} and
    local SGD \cite{McMahan2017Communication,Reddi2021Adaptive}.
    Thus FedOMD induces the client drift, i.e., ${\bf u}^{(j)}_t\neq {\bf u}_t$.
    Besides, if we use FedOMD,
    then the download cost is in $O(MK)$.

\section{OMS-DecD without Communication Constraints}

    At a high level, our algorithm comprises two components
    both of which are critical for achieving a communication cost in $o(K)$:
    (i) decoupling model selection and online prediction;
    (ii) collaboratively updating decisions within the framework of FOMD-No-LU.

\subsection{Decoupling Model Selection and Prediction}

\subsubsection{Model Selection on Server}

    At any round $t$,
    server maintains $K$ hypotheses $\{f^{(j)}_{t,i}\in\mathcal{F}_i\}^K_{i=1}$
    and a probability distribution ${\bf p}^{(j)}_t$ over the $K$ hypotheses for all $j\in[M]$.
    The model selection process aims to select a hypothesis
    from $\{f^{(j)}_{t,i}\}^K_{i=1}$ and then predicts the output of ${\bf x}^{(j)}_t$.
    An intuitive idea is that,
    for each $j\in[M]$,
    the client samples a hypothesis following ${\bf p}^{(j)}_t$.
    However,
    such an approach requires that server broadcasts ${\bf p}^{(j)}_t$ to clients,
    and will cause a download cost in $O(K)$.

    The sampling operation (or model selection process) can be executed on server.
    Specifically, server just broadcasts the selected hypotheses,
    and thus saves the communication cost.
    For each $j\in[M]$,
    server selects $J\in[2,K]$ hypotheses denoted by
    $f^{(j)}_{t,A_{t,a}}$, $a\in[J]$ where $A_{t,a}\in[K]$.
    For simplicity,
    let $O^{(j)}_t=\{A_{t,1},\ldots,A_{t,J}\}$.
    We instantiate ${\bf u}_t={\bf p}_t$ in FOMD-No-LU.
    Then FOMD-No-LU ensures ${\bf p}^{(j)}_t={\bf p}_t$ for all $j\in[M]$.
    We sample $A_{t,1},\ldots,A_{t,J}$ in order and follow \eqref{eq:ICML2024:kernel_selection}.
    \begin{equation}
    \label{eq:ICML2024:kernel_selection}
    \begin{split}
        &A_{t,1}\sim {\bf p}_t,\\
        &A_{t,a}\sim\mathrm{Uni}([K]\setminus\{A_{t,1},\ldots,A_{t,a-1}\}),~a\in[2,J].
    \end{split}
    \end{equation}
    It is easy to prove that
    $$
        \forall i\in[K],\quad
        \mathbb{P}\left[i\in O^{(j)}_t\right]
        =\frac{K-J}{K-1}p_{t,i}+\frac{J-1}{K-1}.
    $$
    Server samples $O^{(j)}_t$ for all $j\in[M]$
    and thus must independently execute \eqref{eq:ICML2024:kernel_selection} $M$ times
    which only pays an additional computational cost in $O(M\log{K})$.
    The factor $\log{K}$ arises from the process of sampling a number from $\{1,...,K\}$.
    Server only sends $f^{(j)}_{A_{t,a}}$, $a\in[J]$ to the $j$-th client.
    It is worth mentioning that server does not send ${\bf p}_t$.
    The total download cost is $O(\sum^M_{j=1}\sum^J_{a=1}(d_{A_{t,a}}+\log{K}))$.
    If $J$ is independent of $K$,
    then the download cost is only $O(M\log{K})$.

\subsubsection{Prediction on Clients}

    For each $j\in[M]$,
    the $j$-th client receives $f^{(j)}_{A_{t,a}}$, $a\in[J]$,
    and uses $f^{(j)}_{t,A_{t,1}}$ to output a prediction, i.e.,
    \begin{align*}
        \hat{y}^{(j)}_t=f^{(j)}_{t,A_{t,1}}\left({\bf x}^{(j)}_t\right)
        =\left\langle{\bf w}^{(j)}_{t,A_{t,1}},\phi_{A_{t,1}}({\bf x}^{(j)}_t)\right\rangle,
    \end{align*}
    where we assume that $f^{(j)}_{t,i}$ is parameterized by ${\bf w}^{(j)}_{t,i}\in\mathbb{R}^{d_i}$
    (see \eqref{eq:ICML2024:definition:hypothesis_space}).
    After observing the true output $y^{(j)}_t$,
    the client suffers a loss $\ell(f^{(j)}_{t,A_{t,1}}({\bf x}^{(j)}_{t}),y^{(j)}_t)$.

    It is worth mentioning that the other $J-1$ hypotheses $f^{(j)}_{t,A_{t,a}}$, $a\geq 2$
    are just used to obtain more information on the loss function.
    We will explain more in the following subsection.
    Thus we do not cumulate the loss $\ell(f^{(j)}_{t,A_{t,a}}({\bf x}^{(j)}_{t}),y^{(j)}_t)$, $a\geq 2$.

\subsection{Online Collaboration Updating}

    We use FOMD-No-LU to update the sampling probabilities and the hypotheses.

\subsubsection{Updating sampling probabilities}

    For each $j\in[M]$,
    let ${\bf c}^{(j)}_t=(c^{(j)}_{t,1},\ldots,c^{(j)}_{t,K})$
    where $c^{(j)}_{t,i}=\ell(f^{(j)}_{t,i}({\bf x}^{(j)}_t),y^{(j)}_t)$
    is the loss of $f^{(j)}_{t,i}$, $i\in[K]$.
    The $j$-th client will send $c^{(j)}_{t,i}$, $i\in O^{(j)}_t$, to server.
    Since $c^{(j)}_{t,i}$, $i\notin O^{(j)}_t$ can not be observed,
    it is necessary to construct an estimated loss vector
    $\tilde{{\bf c}}^{(j)}_t=(\tilde{c}^{(j)}_{t,1},\ldots,\tilde{c}^{(j)}_{t,K})$
    where
    $$
        \tilde{c}^{(j)}_{t,i}=\frac{c^{(j)}_{t,i}}{\mathbb{P}[i\in O^{(j)}_{t}]}
    \cdot\mathbb{I}_{i\in O^{(j)}_{t}}, i\in[K].
    $$
    It is easy to prove that $\mathbb{E}_t\left[\tilde{c}^{(j)}_{t,i}\right]
    =c^{(j)}_{t,i}$
    and $\mathbb{E}_t\left[(\tilde{c}^{(j)}_{t,i})^2\right]\leq \frac{K-1}{J-1}(c^{(j)}_{t,i})^2$
    where
    $\mathbb{E}_t[\cdot]:=\mathbb{E}\left[\cdot\vert O^{(j)}_{[t-1]}\right]$.
    Thus sampling $A_{t,a}$, $a\geq 2$ reduces the variance of the estimators
    which is equivalent to obtain more information on the true loss.

    Server aggregates $\tilde{{\bf c}}^{(j)}_t$, $j\in[M]$ and updates ${\bf p}_t$ following
    \eqref{eq:ICML2024:FOMD-No-LU:second_step}-\eqref{eq:ICML2024:FOMD-No-LU:fourth_step}.
    Let $\Delta_K$ be the $(K-1)$-dimensional simplex,
    $\Omega=\Delta_K$ and $\tilde{g}^{(j)}_t=\tilde{{\bf c}}^{(j)}_t$.
    Then the server executes \eqref{eq:ICML2024:updating_probability}.
    \begin{equation}
    \label{eq:ICML2024:updating_probability}
    \left\{
    \begin{split}
        \bar{\bf c}_t&=\frac{1}{M}\sum^M_{j=1}\tilde{{\bf c}}^{(j)}_t,\\
        \nabla_{\bar{{\bf p}}_{t+1}}\psi_t(\bar{{\bf p}}_{t+1})
        &=\nabla_{{\bf p}_t}\psi_t({\bf p}_{t})-\bar{{\bf c}}_t,\\
        {\bf p}_{t+1}&=\mathop{\arg\min}_{{\bf p}\in\Delta_K}
        \mathcal{D}_{\psi_t}({\bf p},\bar{{\bf p}}_{t+1}),\\
        \psi_t(\mathbf{p})&=\sum^K_{i=1}\frac{C_i}{\eta_t}p_{i}\ln{p_i},\\
    \end{split}
    \right.
    \end{equation}
    where $\psi_t(\mathbf{p})$ is the weighted negative entropy regularizer \cite{Bubeck2017Online},
    $C_i>0$ is the weight
    and $\eta_t>0$ is a time-variant learning rate.
    $C_i$ satisfies that $\max_tc^{(j)}_{t,i} \leq C_i$ for all $j\in[M]$.
    The server does not broadcast ${\bf p}_{t+1}$.

\subsubsection{Updating hypotheses}

    For each $j\in[M]$ and $i\in[K]$,
    let
    $
        \nabla^{(j)}_{t,i}=\nabla_{{\bf w}^{(j)}_{t,i}}
        \ell\left(\left\langle{\bf w}^{(j)}_{t,i},\phi_i({\bf x}^{(j)}_t)\right\rangle,y^{(j)}_t\right)$.
    Since $\nabla^{(j)}_{t,i},i\notin O^{(j)}_t$ are unknown,
    it is necessary to construct an estimator of the gradient, denoted by
    $$
        \tilde{\nabla}^{(j)}_{t,i} =\frac{\nabla^{(j)}_{t,i}}{\mathbb{P}[i\in O^{(j)}_{t}]}
        \cdot\mathbb{I}_{i\in O^{(j)}_{t}}
    $$
    for all $j\in[M], i\in[K]$.
    Clients send $\{\nabla^{(j)}_{t,i}, i\in O^{(j)}_t\}, j\in[M]$ to server.
    Then server aggregates $\{\tilde{\nabla}^{(j)}_{t,i},i\in[K]\}$, $j\in[M]$
    and updates the hypotheses following
    \eqref{eq:ICML2024:FOMD-No-LU:second_step}-\eqref{eq:ICML2024:FOMD-No-LU:fourth_step}.
    For each $i\in[K]$,
    let $\Omega=\mathcal{F}_i$ and $\tilde{g}^{(j)}_t=\tilde{\nabla}^{(j)}_{t,i}$.
    Server executes \eqref{eq:ICML2024:updating_hypothesis}.
    \begin{equation}
    \label{eq:ICML2024:updating_hypothesis}
    \left\{
    \begin{split}
        \bar{\nabla}_{t,i}=&\frac{1}{M}\sum^M_{j=1}\tilde{\nabla}^{(j)}_{t,i},\\
        \nabla_{\bar{{\bf w}}_{t+1,i}}\psi_{t,i}(\bar{{\bf w}}_{t+1,i})
        =&\nabla_{{\bf w}_{t,i}}\psi_{t,i}({\bf w}_{t,i})-\bar{\nabla}_{t,i},\\
        {\bf w}_{t+1,i}=&\mathop{\arg\min}_{{\bf w}\in\mathcal{F}_i}
        \mathcal{D}_{\psi_{t,i}}({\bf w},\bar{{\bf w}}_{t+1,i}),\\
        \psi_{t,i}({\bf w})=&\frac{1}{2\lambda_{t,i}}\cdot\Vert{\bf w}\Vert^2_{\mathcal{F}_i},
    \end{split}
    \right.
    \end{equation}
    where $\psi_{t,i}({\bf w})
    =\frac{1}{2\lambda_{t,i}}\Vert{\bf w}\Vert^2_2$ is the Euclidean regularizer
    and $\lambda_{t,i}$ is a time-variant learning rate.

    We name this algorithm FOMD-OMS (FOMD-No-LU for OMS-DecD)
    and show it in Algorithm \ref{alg:ICML2024:FOMD-DOKS}.

    \begin{algorithm}[!t]
        \caption{\small{FOMD-OMS}~$(R=T)$}
        \footnotesize
        \label{alg:ICML2024:FOMD-DOKS}
        \begin{algorithmic}[1]
        \REQUIRE{$T$, $J$, $\eta_1$, $\{U_i,\lambda_{1,i},i\in[K]\}$}
        \ENSURE{$f^{(j)}_{1,i}=0$, $p_{1,i}$, $i\in[K]$, $j\in[M]$}
        \FOR{$t=1,2,\ldots,T$}
            \FOR{$j=1,\ldots,M$}
                \STATE Server samples $O^{(j)}_t$ following \eqref{eq:ICML2024:kernel_selection}
                \STATE Server broadcasts $f^{(j)}_{t,i},i\in O^{(j)}_t$ to the $j$-th client
            \ENDFOR
                \FOR{$j=1,\ldots,M$ in parallel}
                    \STATE The client outputs $f^{(j)}_{t,A_{t,1}}({\bf x}^{(j)}_t)$
                    \STATE The client computes and transmits $\{\nabla^{(j)}_{t,i},c^{(j)}_{t,i}\}_{i\in O^{(j)}_t}$
                \ENDFOR
                \STATE Server computes ${\bf p}_{t+1}$ following \eqref{eq:ICML2024:updating_probability}
                \STATE Server computes ${\bf w}_{t+1,i},i\in[K]$
                        following \eqref{eq:ICML2024:updating_hypothesis}
        \ENDFOR
        \end{algorithmic}
    \end{algorithm}

\subsection{Regret bounds}

    To obtain high-probability regret bounds that adapt to the complexity of individual hypothesis space,
    we establish a new Bernstein's inequality for martingale.
    \begin{lemma}
    \label{lemma:AISTATS2020:improved:Bernstein_ineq_for martingales}
        Let $X_1,\ldots,X_n$ be a bounded martingale difference sequence w.r.t. the filtration
        $\mathcal{H}=(\mathcal{H}_k)_{1\leq k\leq n}$ and with $\vert X_k\vert\leq a$.
        Let $Z_t=\sum^t_{k=1}X_{k}$ be the associated martingale.
        Denote the sum of the conditional variances by
        $
            \Sigma^2_n=\sum^n_{k=1}\mathbb{E}\left[X^2_k\vert\mathcal{H}_{k-1}\right]\leq v,
        $
        where $v\in[0,B]$ is a random variable and $B\geq 2$ is a constant.
        Then for any constant $a>0$,
        with probability at least $1-2\lceil\log{B}\rceil\delta$,
        $$
            \max_{t=1,\ldots,n}Z_t < \frac{2a}{3}\ln\frac{1}{\delta}+\sqrt{\frac{2}{B}\ln\frac{1}{\delta}}+2\sqrt{v\ln\frac{1}{\delta}}.
        $$
    \end{lemma}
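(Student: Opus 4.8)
The plan is to combine the standard exponential-supermartingale (Freedman) argument, which handles a \emph{fixed} variance scale, with a peeling argument over a geometric grid of variance scales in order to accommodate the \emph{random} upper bound $v$ on the sum of conditional variances.

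First I would construct the exponential supermartingale. Since $|X_k|\le a$ and $\mathbb{E}[X_k\mid\mathcal{H}_{k-1}]=0$, the elementary inequality $e^{sx}\le 1+sx+\frac{e^{sa}-1-sa}{a^2}x^2$ valid for $|x|\le a$ gives, writing $\phi(s)=\frac{e^{sa}-1-sa}{a^2}$, that $\mathbb{E}[e^{sX_k}\mid\mathcal{H}_{k-1}]\le\exp(\phi(s)\,\mathbb{E}[X_k^2\mid\mathcal{H}_{k-1}])$ for every $s>0$. Hence $M_t(s)=\exp(sZ_t-\phi(s)\sum_{k\le t}\mathbb{E}[X_k^2\mid\mathcal{H}_{k-1}])$ is a nonnegative supermartingale with $\mathbb{E}[M_t(s)]\le 1$, and Ville's maximal inequality yields, for any fixed scale $\sigma^2>0$ and on the event $\{\Sigma_n^2\le\sigma^2\}$ (using that $\Sigma_t^2$ is nondecreasing in $t$), the bound $\mathbb{P}[\max_t Z_t\ge\frac{1}{s}\ln\frac1\delta+\frac{\phi(s)}{s}\sigma^2]\le\delta$. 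Bounding $\phi(s)\le\frac{s^2}{2(1-sa/3)}$ and optimizing over $s$ (equivalently, solving the resulting quadratic in $\lambda$) produces the fixed-scale Freedman bound $\mathbb{P}[\max_t Z_t\ge\frac{2a}{3}\ln\frac1\delta+\sqrt{2\sigma^2\ln\frac1\delta},\ \Sigma_n^2\le\sigma^2]\le\delta$, which already exhibits the range term $\frac{2a}{3}\ln\frac1\delta$ and a variance term of the correct shape.

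Next I would peel over variance scales to replace the fixed $\sigma^2$ by the random $v$. Define the geometric grid $\sigma_k^2=2^k/B$ for $k=0,1,\ldots,K$ with $K=\lceil 2\log B\rceil$, so that the scales cover $[1/B,B]$ and hence the whole range $v\in[0,B]$. Applying the fixed-scale bound at each $\sigma_k^2$ with confidence $\delta$ and taking a union bound over the $O(\log B)$ scales is what produces the factor $2\lceil\log B\rceil$ in the failure probability. On the event $\{v\le 1/B\}$ I would use the smallest scale $\sigma_0^2=1/B$, giving the floor term $\sqrt{\frac{2}{B}\ln\frac1\delta}$; on the event $\{\sigma_{k-1}^2<v\le\sigma_k^2\}$ with $k\ge1$ I would use scale $\sigma_k^2$ together with $\Sigma_n^2\le v\le\sigma_k^2=2\sigma_{k-1}^2\le 2v$, so that $\sqrt{2\sigma_k^2\ln\frac1\delta}\le\sqrt{4v\ln\frac1\delta}=2\sqrt{v\ln\frac1\delta}$, which is the source of the constant $2$ in the variance term. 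Retaining both the floor term and the $v$-term across all strata yields the claimed inequality.

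The main obstacle is precisely that $v$ is random rather than a fixed variance proxy, so a single invocation of Freedman's inequality does not suffice; the peeling must simultaneously (i) cover the entire range $[0,B]$ with only $O(\log B)$ scales to keep the probability factor at $2\lceil\log B\rceil$, (ii) lose only a factor $2$ when rounding $v$ up to the nearest grid scale, and (iii) handle the small-variance regime $v\le 1/B$ through the floor scale to generate the $\sqrt{2/B}$ term. Checking that the grid endpoints and rounding constants reproduce exactly these three terms is the one step requiring care; the supermartingale construction and the union bound are otherwise standard.
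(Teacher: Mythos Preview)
Your proposal is correct and follows essentially the same route as the paper: derive the fixed-scale Bernstein/Freedman bound and then peel over a dyadic grid of variance scales in $[1/B,B]$, handling the regime $v\le 1/B$ separately to produce the floor term $\sqrt{2/B\,\ln(1/\delta)}$ and losing only a factor $2$ under the square root when rounding $v$ up to the nearest grid point. The only cosmetic difference is that the paper invokes the fixed-scale bound as a black box (Lemma~A.8 of Cesa-Bianchi and Lugosi), whereas you rederive it via the exponential supermartingale; the peeling argument itself is identical.
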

    Note that $v$ is a random variable in Lemma \ref{lemma:AISTATS2020:improved:Bernstein_ineq_for martingales},
    while it is a constant in standard Bernstein's inequality for martingale
    (see Lemma A.8 \cite{Cesa-Bianchi2006Prediction}).
    Lemma \ref{lemma:AISTATS2020:improved:Bernstein_ineq_for martingales}
    is derived from the standard Bernstein's inequality along with the well-known peeling technique
    \cite{Bartlett2005Local}.

    \begin{assumption}
    \label{ass:ICML24:assumption}
        For each $i\in[K]$,
        there is a constant $b_i$ such that $\Vert\phi_i({\bf x})\Vert_2\leq b_i$
        where $\phi_i(\cdot)$ is defined in \eqref{eq:ICML2024:definition:hypothesis_space}.
    \end{assumption}

    \begin{lemma}
    \label{lemma:ICML24:lipschitz_assumption}
        Under Assumption \ref{ass:ICML24:assumption},
        for each $i\in[K]$, there are two constants $C_i>0, G_i>0$ that depend on $U_i$ or $b_i$
        such that
        $\max_{t,j}c^{(j)}_{t,i}\leq C_i$ and $\max_{t,j}\Vert\nabla^{(j)}_{t,i}\Vert_2\leq G_i$.
    \end{lemma}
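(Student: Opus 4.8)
The plan is to reduce both bounds to the single boundedness invariant that every maintained weight vector stays inside the $L_2$-ball of radius $U_i$, i.e. $\Vert{\bf w}^{(j)}_{t,i}\Vert_2\leq U_i$ for all $t,j,i$, and then to combine this invariant with Assumption \ref{ass:ICML24:assumption} and the (Lipschitz / bounded-derivative) regularity of $\ell$. First I would establish the invariant by induction on $t$. The base case holds because the algorithm is initialized with $f^{(j)}_{1,i}=0$, so $\Vert{\bf w}^{(j)}_{1,i}\Vert_2=0\leq U_i$. For the inductive step, recall that with the shared initialization FOMD-No-LU maintains a single server iterate copied by all clients, so it suffices to control ${\bf w}_{t+1,i}$; by \eqref{eq:ICML2024:updating_hypothesis} this is the Bregman projection $\arg\min_{{\bf w}\in\mathcal{F}_i}\mathcal{D}_{\psi_{t,i}}({\bf w},\bar{{\bf w}}_{t+1,i})$ onto $\mathcal{F}_i$. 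Since the feasible set $\mathcal{F}_i$ is exactly the ball $\{{\bf w}:\Vert{\bf w}\Vert_2\leq U_i\}$ by \eqref{eq:ICML2024:definition:hypothesis_space}, the projected iterate automatically satisfies $\Vert{\bf w}_{t+1,i}\Vert_2\leq U_i$, closing the induction.

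Given the invariant, the prediction is uniformly bounded: by Cauchy--Schwarz and Assumption \ref{ass:ICML24:assumption},
\[
    \bigl|f^{(j)}_{t,i}({\bf x}^{(j)}_t)\bigr|
    =\bigl|\langle{\bf w}^{(j)}_{t,i},\phi_i({\bf x}^{(j)}_t)\rangle\bigr|
    \leq \Vert{\bf w}^{(j)}_{t,i}\Vert_2\,\Vert\phi_i({\bf x}^{(j)}_t)\Vert_2
    \leq U_i b_i.
\]
Hence the first argument of $\ell$ ranges over the compact interval $[-U_ib_i,\,U_ib_i]$, and with the output $y^{(j)}_t$ confined to a bounded range the loss value $c^{(j)}_{t,i}=\ell(f^{(j)}_{t,i}({\bf x}^{(j)}_t),y^{(j)}_t)$ is bounded by a constant $C_i$ determined by $U_ib_i$ and the regularity of $\ell$, yielding $\max_{t,j}c^{(j)}_{t,i}\leq C_i$.

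For the gradient bound I would apply the chain rule, writing $\nabla^{(j)}_{t,i}=\ell'\bigl(\langle{\bf w}^{(j)}_{t,i},\phi_i({\bf x}^{(j)}_t)\rangle,y^{(j)}_t\bigr)\,\phi_i({\bf x}^{(j)}_t)$, where $\ell'$ denotes the partial derivative of $\ell$ in its first argument. Taking norms and invoking Assumption \ref{ass:ICML24:assumption},
\[
    \Vert\nabla^{(j)}_{t,i}\Vert_2
    =\bigl|\ell'\bigl(\langle{\bf w}^{(j)}_{t,i},\phi_i({\bf x}^{(j)}_t)\rangle,y^{(j)}_t\bigr)\bigr|\,\Vert\phi_i({\bf x}^{(j)}_t)\Vert_2
    \leq b_i\cdot\sup|\ell'|,
\]
where the supremum of $|\ell'|$ is taken over the compact range of the first argument established above. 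Since $\ell'$ is bounded on that range, this supremum is finite and we may set $G_i:=b_i\sup|\ell'|$, which gives $\max_{t,j}\Vert\nabla^{(j)}_{t,i}\Vert_2\leq G_i$.

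The main obstacle is making the boundedness invariant fully rigorous across the FOMD-No-LU machinery: one must verify that the batching rule \eqref{eq:ICML2024:FOMD-No-LU:batching} and the model-averaging steps never push the iterate outside $\mathcal{F}_i$, which here is immediate because the final step of each hypothesis update is a projection onto $\mathcal{F}_i$ and the clients only ever copy the projected server iterate (so no client drift can violate $\Vert{\bf w}\Vert_2\leq U_i$). A secondary subtlety is pinning down precisely which regularity of $\ell$ the constants $C_i,G_i$ are permitted to depend on; once the prediction range is compact, any of the standard loss assumptions (Lipschitzness of $\ell(\cdot,y)$, or continuity together with a uniform bound on $|y^{(j)}_t|$) suffices, and the resulting constants depend only on $U_i$ and $b_i$ through the product $U_ib_i$, exactly as claimed.
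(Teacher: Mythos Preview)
Your proposal is correct and follows essentially the same approach as the paper's proof: bound the prediction via $\lvert\langle{\bf w}^{(j)}_{t,i},\phi_i({\bf x}^{(j)}_t)\rangle\rvert\leq U_ib_i$ from Assumption~\ref{ass:ICML24:assumption} and the constraint $\Vert{\bf w}^{(j)}_{t,i}\Vert_2\leq U_i$, then invoke boundedness of $\ell$ and of $\ell'$ on the resulting compact range, together with the chain rule $\nabla^{(j)}_{t,i}=\ell'(\cdot)\,\phi_i({\bf x}^{(j)}_t)$. The paper simply takes the norm constraint $\Vert{\bf w}^{(j)}_{t,i}\Vert_2\leq U_i$ for granted (it is enforced by the projection onto $\mathcal{F}_i$ in \eqref{eq:ICML2024:updating_hypothesis}), whereas you spell out the inductive argument explicitly; this extra rigor is welcome but does not change the substance of the proof.
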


    \begin{theorem}
    \label{thm:ICML2024:regret_bound:FOMD_DOKS}
        Let $\ell(\cdot,\cdot)$ be convex.
        Under Assumption \ref{ass:ICML24:assumption},
        denote by $A_m=\mathrm{argmin}_{i\in[K]}C_i$.
        Let ${\bf p}_1$ satisfy
        $$
            p_{1,i}=\frac{1-\frac{\sqrt{K}}{\sqrt{T}}}{\vert A_m\vert}+\frac{1}{\sqrt{KT}}, i\in A_m,
            p_{1,i}=\frac{1}{\sqrt{KT}},i\neq A_m.
        $$
        Let $K\geq J\geq 2$ and
        \begin{align*}
            \forall t\in[T],~\eta_t=&\frac{\sqrt{\ln{(KT)}}}{2\sqrt{\left(1+\frac{K-J}{(J-1)M}\right)T}}\wedge\frac{J-1}{2(K-J)},\\
            \lambda_{t,i}=&\frac{U_i}{2G_i\sqrt{\left(1+\frac{K-J}{(J-1)M}\right)
            \cdot\left(\frac{(K-J)^2}{(J-1)^2}\vee t\right)}}.
        \end{align*}
        With probability at least $1-\Theta\left(M\log(T)+\log(KT/M)\right)\cdot\delta$,
        the regret of FOMD-OMS ($R=T$) satisfies:
        $\forall i\in[K]$,
        \begin{align*}
            \mathrm{Reg}_D&(\mathcal{F}_i)=
            O\left(MB_{i,1}\sqrt{\left(1+\frac{K-J}{(J-1)M}\right)T}+
            \frac{B_{i,2}(K-J)}{J-1}\ln\frac{1}{\delta}+
            B_{i,3}\sqrt{\frac{(K-J)MT}{J-1}\ln\frac{1}{\delta}}\right),
        \end{align*}
        where $B_{i,1}=U_iG_i+C_i\sqrt{\ln(KT)}$,
        $B_{i,2}=MC+U_iG_i$, $B_{i,3}=U_iG_i+\sqrt{CC_i}$ and $C=\max_{i\in[K]}C_i$.
    \end{theorem}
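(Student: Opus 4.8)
The plan is to split $\mathrm{Reg}_D(\mathcal{F}_i)$ into three pieces and control each, invoking Theorem~\ref{lemma:ICML2024:regret_OMD} for the two mirror-descent components and Lemma~\ref{lemma:AISTATS2020:improved:Bernstein_ineq_for martingales} to upgrade expectation bounds to high probability. Write $c^{(j)}_{t,i}=\ell(f^{(j)}_{t,i}({\bf x}^{(j)}_t),y^{(j)}_t)$ and let $f^\ast_i=\arg\min_{f\in\mathcal{F}_i}\sum_{t,j}\ell(f({\bf x}^{(j)}_t),y^{(j)}_t)$, parameterized by ${\bf w}^\ast_i$. Since the client plays $f^{(j)}_{t,A_{t,1}}$ with $A_{t,1}\sim{\bf p}_t$, the played loss is $c^{(j)}_{t,A_{t,1}}$, and I would insert $\langle{\bf p}_t,{\bf c}^{(j)}_t\rangle$ and $c^{(j)}_{t,i}$ to obtain the exact decomposition
\begin{align*}
\mathrm{Reg}_D(\mathcal{F}_i)
&=\underbrace{\sum_{t=1}^T\sum_{j=1}^M\left(c^{(j)}_{t,A_{t,1}}-\langle{\bf p}_t,{\bf c}^{(j)}_t\rangle\right)}_{\mathrm{(I)}}
+\underbrace{\sum_{t=1}^T\sum_{j=1}^M\left(\langle{\bf p}_t,{\bf c}^{(j)}_t\rangle-c^{(j)}_{t,i}\right)}_{\mathrm{(II)}}\\
&\quad+\underbrace{\sum_{t=1}^T\sum_{j=1}^M\left(c^{(j)}_{t,i}-\ell(f^\ast_i({\bf x}^{(j)}_t),y^{(j)}_t)\right)}_{\mathrm{(III)}}.
\end{align*}
Here $\mathrm{(I)}$ is the deviation from sampling a single played model, $\mathrm{(II)}$ is the model-selection regret over the simplex, and $\mathrm{(III)}$ is the within-space learning regret for $\mathcal{F}_i$.

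For $\mathrm{(II)}$ I would apply Theorem~\ref{lemma:ICML2024:regret_OMD} with $\Omega=\Delta_K$, linear losses $l^{(j)}_t({\bf p})=\langle{\bf p},{\bf c}^{(j)}_t\rangle$ (so $g^{(j)}_t={\bf c}^{(j)}_t$, $\tilde g^{(j)}_t=\tilde{\bf c}^{(j)}_t$), the weighted negative-entropy $\psi_t$, and comparator ${\bf v}={\bf e}_i$; since there is no client drift (${\bf p}^{(j)}_t={\bf p}_t$), this yields $\mathrm{(II)}\le M(\Xi_1+\Xi_2)$. In $\Xi_1$ the telescoped KL terms $\sum_t[\mathcal{D}_{\psi_t}({\bf e}_i,{\bf p}_t)-\mathcal{D}_{\psi_t}({\bf e}_i,{\bf p}_{t+1})]$ are controlled using the prescribed ${\bf p}_1$, which makes $\ln\frac{1}{p_{1,i}}=\Theta(\ln(KT))$ and absorbs the time-varying $\eta_t$, while the stability term $\mathcal{D}_{\psi_t}({\bf p}_t,{\bf r}_{t+1})/2$ for negative entropy is of order $\eta_t\sum_l p_{t,l}(c^{(j)}_{t,l}/C_l)^2$. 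In $\Xi_2$ the variance term $\mathcal{D}_{\psi_t}({\bf p}_t,{\bf q}_{t+1})/2$ has expectation governed by the estimator second moment $\tfrac1M\cdot\tfrac{K-1}{J-1}$ (the $1/M$ from averaging independent clients, the $\tfrac{K-1}{J-1}$ from inverse-probability weighting), and the remaining bias term $\tfrac1M\sum_j\langle\tilde{\bf c}^{(j)}_t-{\bf c}^{(j)}_t,{\bf p}_t-{\bf e}_i\rangle$ is a martingale difference in $t$. Analogously, for $\mathrm{(III)}$ I would use convexity of $\ell$ to write $c^{(j)}_{t,i}-\ell(f^\ast_i(\cdot),\cdot)\le\langle\nabla^{(j)}_{t,i},{\bf w}_{t,i}-{\bf w}^\ast_i\rangle$ and apply Theorem~\ref{lemma:ICML2024:regret_OMD} with $\Omega=\mathcal{F}_i$, Euclidean $\psi_{t,i}$, $g^{(j)}_t=\nabla^{(j)}_{t,i}$, $\tilde g^{(j)}_t=\tilde\nabla^{(j)}_{t,i}$, comparator ${\bf w}^\ast_i$; now $\Xi_1$ telescopes to $O(U_i^2/\lambda_{T,i})$ plus stability $O(\lambda_{t,i}\|\nabla^{(j)}_{t,i}\|^2)$, and $\Xi_2$ again splits into a variance term scaling like $\tfrac1M\tfrac{K-1}{J-1}G_i^2$ and a martingale bias term.

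The high-probability step handles the three martingale-difference sums: term $\mathrm{(I)}$ (conditional variance $\le\sum_l p_{t,l}(c^{(j)}_{t,l})^2$, range $\le C$), and the two bias martingales from $\Xi_2$ in $\mathrm{(II)}$ and $\mathrm{(III)}$. For each I would invoke Lemma~\ref{lemma:AISTATS2020:improved:Bernstein_ineq_for martingales}, bounding the realized conditional variance $v$ by the comparator's scale: the dominant contribution for $\mathrm{(II)}$ comes from coordinate $i$ and is of order $\tfrac{1}{M}\tfrac{K-1}{J-1}C_i^2 T$, and for $\mathrm{(III)}$ of order $\tfrac{1}{M}\tfrac{K-1}{J-1}U_i^2G_i^2 T$, so the deviation term $2\sqrt{v\ln\frac1\delta}$, after the factor $M$, produces $B_{i,3}\sqrt{\tfrac{(K-J)MT}{J-1}\ln\frac1\delta}$, while the range term $\tfrac{2a}{3}\ln\frac1\delta$ gives the additive $\tfrac{B_{i,2}(K-J)}{J-1}\ln\frac1\delta$. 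Applying the lemma per client to $\mathrm{(I)}$ contributes the $\Theta(M\log T)$ factor to the failure probability, while the aggregated bias martingales each contribute a $\Theta(\log(KT/M))$ factor, matching the stated $1-\Theta(M\log T+\log(KT/M))\delta$. Plugging in the prescribed $\eta_t,\lambda_{t,i}$ balances the deterministic parts of $\Xi_1$ and the variance parts of $\Xi_2$ to yield the leading $MB_{i,1}\sqrt{(1+\frac{K-J}{(J-1)M})T}$.

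The hard part will be keeping every estimate adaptive to the single comparator space $\mathcal{F}_i$ rather than to the worst space. A naive deviation bound would replace $C_i,G_i,U_i$ by their maxima and collapse the adaptivity that makes $\mathrm{Reg}_D(\mathcal{F}_i)$ scale with $\mathfrak{C}_{i^\ast}$ at $i=i^\ast$. The leverage is precisely that Lemma~\ref{lemma:AISTATS2020:improved:Bernstein_ineq_for martingales} carries the \emph{random} conditional-variance bound $v$ inside the square root, so I must show that the realized variances of the estimator-bias martingales are pinned to the comparator coordinate's scale ($C_i$ for $\mathrm{(II)}$, $U_iG_i$ for $\mathrm{(III)}$) times the inflation $\tfrac{K-1}{J-1}$, and that independence across the $M$ clients genuinely divides these variances by $M$. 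Simultaneously ensuring the weighted-entropy analysis is scaled by $C_i$ rather than $C=\max_i C_i$ — which is where the $C_i$ weights in $\psi_t$ and the tailored ${\bf p}_1$ are used — is the delicate coupling; the remaining telescoping and learning-rate tuning are routine.
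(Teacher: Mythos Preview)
Your decomposition and choice of tools match the paper's proof exactly: the paper writes $\mathrm{Reg}_D(\mathcal{F}_i)=\Xi_4+\Xi_5$ with $\Xi_4=\mathrm{(I)}+\mathrm{(II)}$ and $\Xi_5=\mathrm{(III)}$, applies Theorem~\ref{lemma:ICML2024:regret_OMD} with the weighted negative entropy for $\mathrm{(II)}$ and the Euclidean regularizer for $\mathrm{(III)}$, and controls the three martingales with Lemma~\ref{lemma:AISTATS2020:improved:Bernstein_ineq_for martingales}, just as you describe.

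There is one genuine gap in the step you yourself flag as hard. Your claim that the conditional variance of the bias martingale in $\mathrm{(II)}$ is ``of order $\tfrac{1}{M}\tfrac{K-1}{J-1}C_i^2 T$'' does not hold directly. Splitting ${\bf p}_t-{\bf e}_i$, the ${\bf e}_i$-part indeed has variance $\le\tfrac{1}{M}\tfrac{K-1}{J-1}\sum_{t,j}(c^{(j)}_{t,i})^2$, but the ${\bf p}_t$-part contributes $\tfrac{1}{M}\tfrac{K-1}{J-1}\sum_{t,j}\sum_l p_{t,l}(c^{(j)}_{t,l})^2\le \tfrac{1}{M}\tfrac{K-1}{J-1}\,C\,\bar L_T$, where $\bar L_T=\sum_{t,j}\langle{\bf p}_t,{\bf c}^{(j)}_t\rangle$ is the algorithm's own cumulative mixture loss. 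The same $\bar L_T$ shows up in the expectation bound for $\mathcal{D}_{\psi_t}({\bf p}_t,{\bf q}_{t+1})$ and in the variance of $\mathrm{(I)}$. Bounding $\bar L_T$ crudely by $CMT$ would put $C$ (not $C_i$) into $B_{i,3}$ and destroy the multi-scale adaptivity; your claimed $C_i^2$ bound would instead put $C_i$ into $B_{i,3}$, which is \emph{stronger} than what the theorem asserts and is not achievable here.

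The missing device is a self-bounding step: the paper collects all pieces into a single inequality of the form
\[
\bigl(1-\eta(1+\tfrac{g_{K,J}}{M})\bigr)\bar L_T \;-\; a\sqrt{\bar L_T}\;\le\; \bar L_T({\bf e}_i)+b,
\]
solves this quadratic in $\sqrt{\bar L_T}$, and substitutes back. Because $\bar L_T({\bf e}_i)\le C_iMT$, the $\sqrt{C\bar L_T}$ terms become $\sqrt{CC_iMT}$ after the bootstrap---which is exactly why $B_{i,3}$ carries $\sqrt{CC_i}$ rather than either $C$ or $C_i$. Without this quadratic-solving step your argument for $\mathrm{(II)}$ and $\mathrm{(I)}$ will not close with the stated constants.
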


    Both $C_i$ and $G_i$ depend on $U_i$ or $b_i$
    (see Lemma \ref{lemma:ICML24:lipschitz_assumption}).
    Let $\mathfrak{C}_i=\Theta(U_iG_i+ C_i)$.
    Thus $\mathfrak{C}_i$ measures the complexity of $\mathcal{F}_i$.
    Then our regret bound adapts to $\sqrt{\mathfrak{C}\mathfrak{C}_i}$
    where $\mathfrak{C}=\max_{i\in[K]}\mathfrak{C}_i$,
    while previous regret bounds depend on $\mathfrak{C}$ \cite{Ghari2022Personalized,Hong2022Communication}
    that is, they can not adapt to the complexity of individual hypothesis space.
    If $\mathfrak{C}_{i^\ast}\ll \mathfrak{C}$,
    then our regret bound is much better.

    The regret bound in Theorem \ref{thm:ICML2024:regret_bound:FOMD_DOKS}
    is also called multi-scale regret bound \cite{Bubeck2017Online}.
    However, previous regret analysis can not yield a high-probability multi-scale bound.
    The reason is the lack of the new Bernstein's inequality for martingale
    (Lemma \ref{lemma:AISTATS2020:improved:Bernstein_ineq_for martingales}).
    If we use the new Freedman's inequality for martingale \cite{Lee2020Bias},
    then a high-probability bound can still be obtained,
    but is worse than the bound in Theorem \ref{thm:ICML2024:regret_bound:FOMD_DOKS}
    by a factor of order $O(\mathrm{poly}(\ln{T}))$.

\subsection{Complexity Analysis}
\label{sec:ICML2024:complexity_analysis}

    For each $j\in[M]$,
    the $j$-th client makes prediction and computes gradients in time $O(\sum_{i\in O^{(j)}_t}d_i)$.
    Server samples $O^{(j)}_{t},j\in[M]$,
    aggregates gradients and updates global models.
    The per-round time complexity on server is $O(\sum^M_{j=1}\sum_{i\in O^{(j)}_t}d_i+\sum^K_{i=1}d_i+JM\log{K})$.

    \textbf{Upload}~At any round $t\in[T]$,
    the $j$-th client transmits $\tilde{c}^{(j)}_{t,i}, \tilde{\nabla}^{(j)}_{t,i}$,
    $i\in O^{(j)}_t$ and the corresponding indexes to server.
    It requires $J(\sum_{i\in O^{(j)}_t}d_i+1)$ floating-point numbers and $J$ integers.
    If we use $32$ bits to represent a float,
    and use $\log{K}$ bits to represent an integer in $[K]$.
    Each client transmits $(32J(\sum_{i\in O^{(j)}_t}d_i+1)+J\log{K})$ bits to server.

    \textbf{Download}~Server broadcasts ${\bf w}_{t,i}\in\mathbb{R}^{d_i},i\in O^{(j)}_t$
    and the corresponding indexes to clients.
    The total download cost is $(32MJ(\sum_{i\in O^{(j)}_t}d_i+1)+MJ\log{K})$ bits.

\subsection{Answers to Question \ref{que:ICML2024:question}}
\label{sec:ICML2024:answer_to_question}

    Before discussing Question \ref{que:ICML2024:question},
    we give two lower bounds on the regret.

    \begin{theorem}[Lower Bounds]
    \label{thm:NeurIPS24:lower_bound}
        Assuming that $5\leq K\leq \min\{d,T\}$. For each $i\in[K]$,
        let $\mathcal{F}_i=\{f_i({\bf x})={\bf e}^\top_i{\bf x}\}$
        and $\mathcal{D}_i=[\min_{{\bf x}\in\mathcal{X}}f_i({\bf x}),\max_{{\bf x}\in\mathcal{X}}f_i({\bf x})]$,
        where ${\bf e}_i$ is the standard basis vector in $\mathbb{R}^d$.
        Denote by $\sup$ the supremum over all examples.

        (i)
        There are no computational constraints on clients.
        Let $\ell(v,y)=\vert v-y\vert$.
        The regret of any algorithm for OMS-DecD satisfies:
        $\lim_{T\rightarrow\infty}\sup\max_{i\in[K]}\mathrm{Reg}_{D}(\mathcal{F}_i)\geq 0.25M\sqrt{T\ln{K}}$;

        (ii) The per-round time complexity on each client is limited to $O(J)$.
        Let $\ell(v,y)=1- v\cdot y$.
        The regret of any, possibly randomized, noncooperative algorithm with outputs in $\cup_{i\in[K]}\mathcal{D}_i$
        satisfies: with probability at least $1-\delta$,
        $\sup\mathbb{E}[\max_{i\in[K]}\mathrm{Reg}_{D}(\mathcal{F}_i)]\geq 0.1M\sqrt{KTJ^{-1}}
        +M\sqrt{0.5T\ln{(M/\delta)}}$,
        where the expectation is taken over the randomization of algorithm.
    \end{theorem}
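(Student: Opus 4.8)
The plan is to prove both bounds by reduction to lower bounds for prediction with expert advice, exploiting that under the stated construction each $\mathcal{F}_i=\{f_i(\mathbf{x})=\mathbf{e}_i^\top\mathbf{x}\}$ is a singleton, so $\min_{f\in\mathcal{F}_i}$ is trivial and $\max_{i\in[K]}\mathrm{Reg}_{D}(\mathcal{F}_i)$ is exactly the regret of the played sequence against the single best coordinate-expert in hindsight. In both parts I would use the probabilistic method: exhibit a randomized sequence of examples, lower bound the expected regret over this randomization, and then pass to the supremum over deterministic sequences, since the sup dominates any average.

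For part (i) there are no computational constraints, so this is a full-information $K$-expert problem replicated on $M$ clients. I would set, on the first $K$ coordinates, $x^{(j)}_{t,i}\in\{0,1\}$ i.i.d. fair coins and $y^{(j)}_t=0$, so that with $\ell(v,y)=|v-y|$ the per-expert loss $\ell(f_i(\mathbf{x}^{(j)}_t),y^{(j)}_t)=x^{(j)}_{t,i}$ is an independent $\mathrm{Bernoulli}(1/2)$, and crucially identical across clients, $x^{(j)}_{t,i}=x_{t,i}$, so the benchmark coordinate is common to all $M$ clients. Since each fresh loss is independent of the past, any (even cooperative) algorithm has expected cumulative loss exactly $MT/2$, while the benchmark $M\min_{i}\sum_t x_{t,i}$ has expectation $MT/2-\frac{M}{2}\mathbb{E}[\max_i\sum_t(1-2x_{t,i})]$. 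The expected maximum of $K$ symmetric random walks of length $T$ is $(1+o(1))\sqrt{2T\ln K}$ by the central limit theorem and the Gaussian maximal-expectation estimate, so the expected regret is $(1+o(1))\tfrac{M}{\sqrt2}\sqrt{T\ln K}$. Letting $T\to\infty$ and using that the supremum over example sequences is at least the expectation yields the claimed $0.25M\sqrt{T\ln K}$, the constant being a safe slack below $1/\sqrt2$.

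For part (ii) the key is to convert the per-round budget $O(J)$ into an information constraint: computing any single loss $\ell(f_i(\mathbf{x}),y)=1-x_i y$ costs $\Omega(1)$, so within budget $O(J)$ each client can read off the losses of at most $O(J)$ of the $K$ experts per round. This places every noncooperative client in the limited-advice/bandit regime with $J$ observations among $K$ arms, and the restriction of outputs to $\cup_{i}\mathcal{D}_i$ makes the reduction to coordinate-experts clean. I would use a stochastic gap construction (one planted coordinate with mean advantage $\epsilon\asymp\sqrt{K/(JT)}$, the rest tied) and the standard information-theoretic machinery (per-arm KL controlled by its number of observations, then Bretagnolle--Huber / Pinsker) to show that with only $J$ observations per round no client identifies the planted arm often enough, forcing expected per-client regret $\Omega(\sqrt{KT/J})$; summing over the $M$ independent noncooperative clients gives the $0.1M\sqrt{KT/J}$ term. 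The additive high-probability term comes from a separate anti-concentration argument: on each client the best coordinate in hindsight beats the mean by at least $\sqrt{0.5T\ln(M/\delta)}$ with probability $1-\delta/M$ (a lower-tail bound for the maximum of the empirical coordinate walks), and a union bound over the $M$ clients makes all these events hold simultaneously with probability $1-\delta$, contributing $M\sqrt{0.5T\ln(M/\delta)}$ after summation.

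The main obstacle is the rigorous formalization of part (ii): first, arguing cleanly that an $O(J)$-time client can extract information from at most $O(J)$ losses, which needs a precise computational model so the reduction to a $J$-observation bandit is airtight rather than heuristic; second, proving the bandit lower bound with $J$ simultaneous observations per round, where the change-of-measure argument must account for up to $J$ samples per step, rescaling the effective horizon and fixing the optimal gap $\epsilon$; and third, certifying that the exploration term and the anti-concentration term genuinely add under a single construction and a single high-probability event, rather than being mutually exclusive regimes. Part (i) and the expected-value skeleton of (ii) are standard; the combined high-probability, computation-constrained statement is where the care lies.
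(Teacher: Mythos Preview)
For part (i), your approach is essentially identical to the paper's: make all $M$ clients see the same i.i.d.\ Bernoulli coordinates so that the benchmark is shared, the problem collapses to a single full-information expert instance scaled by $M$, and then invoke the CLT together with the Gaussian-maximum lower bound. The paper also randomizes $y_t$ rather than fixing $y_t=0$, but this is immaterial.

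For the $\Omega(M\sqrt{KT/J})$ piece of part (ii), your plan again matches the paper's: a stochastic environment with one planted coordinate of gap $\rho\asymp\sqrt{K/(JT)}$, identical across clients, followed by a change-of-measure argument. The paper cites Seldin--Lugosi's prediction-with-limited-advice lower bound instead of redoing the KL/Pinsker calculation, but the content is the same.

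The real divergence is in how the additive $M\sqrt{0.5T\ln(M/\delta)}$ term is obtained, and here your proposal has a gap. Your anti-concentration-of-the-benchmark idea does not close: even if on each client the best coordinate's empirical loss beats its mean by $\sqrt{0.5T\ln(M/\delta)}$, the algorithm's empirical loss can fluctuate down by the same order, and both fluctuations are driven by the \emph{same} environment randomness $(b_{t,i})$, so there is no reason they add in the regret. The paper uses a different device. Because outputs lie in $\cup_i\mathcal{D}_i$, any prediction $f^{(j)}_t(\mathbf{x}_t)$ can be written as a convex combination $\sum_{i\in O^{(j)}_t} w_{t,i}\, f_i(\mathbf{x}_t)$ of the $J$ selected coordinates; the paper then introduces a \emph{virtual} randomized algorithm that samples $I^{(j)}_t\in O^{(j)}_t$ according to $w_{t,\cdot}$ and plays the single arm $f_{I^{(j)}_t}$. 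Linearity of $\ell$ gives $\mathbb{E}_{I^{(j)}_t}\bigl[\ell(f_{I^{(j)}_t})\bigr]=\ell(f^{(j)}_t)$, so $\sum_t\bigl[\ell(f^{(j)}_t)-\ell(f_{I^{(j)}_t})\bigr]$ is a bounded martingale in the \emph{fresh} randomness $I^{(j)}_t$, and a Hoeffding--Azuma bound on this martingale is what produces the $\sqrt{T\ln(1/\delta)}$ contribution per client. The bandit lower bound is then applied to the virtual algorithm, which genuinely plays one arm per round. This single construction simultaneously (a) reduces a mixed-output algorithm to a single-arm algorithm---which is precisely where the hypothesis on $\cup_i\mathcal{D}_i$ is used---and (b) makes the additive term arise from randomness independent of the environment, so the two terms combine under one construction. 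You correctly identified both (a) and the additivity question as obstacles; the virtual-strategy trick is the missing mechanism that resolves them together, and without it your argument for the additive term would not go through.
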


    The assumption that
    the outputs of any noncooperative algorithm belong to $\cup_{i=\in[K]}\mathcal{D}_i$
    is natural,
    and can be removed in the case of $J=1$.
    Next we define a noncooperative algorithm, NCO-OMS.
    \begin{definition}[NCO-OMS]
    \label{def:ICML2024:trivial_approach}
        NCO-OMS independently samples $O^{(j)}_t$ following \eqref{eq:ICML2024:kernel_selection}
        and executes
        \begin{align*}
            \forall j\in[M],\quad
        \nabla_{\bar{{\bf p}}_{t+1}}\psi_t(\bar{{\bf p}}_{t+1})
        =&\nabla_{{\bf p}^{(j)}_t}\psi_t\left({\bf p}^{(j)}_t\right)-\tilde{{\bf c}}^{(j)}_t,\quad\qquad
        {\bf p}^{(j)}_{t+1}=\mathop{\arg\min}_{{\bf p}\in\Delta_K}
        \mathcal{D}_{\psi_t}({\bf p},\bar{{\bf p}}_{t+1}).\\
        \nabla_{\bar{{\bf w}}_{t+1,i}}\psi_{t,i}(\bar{{\bf w}}_{t+1,i})
        =&\nabla_{{\bf w}^{(j)}_{t,i}}\psi_{t,i}\left({\bf w}^{(j)}_{t,i}\right)-\tilde{\nabla}^{(j)}_{t,i},\quad
        {\bf w}^{(j)}_{t+1,i}=\mathop{\arg\min}_{{\bf w}\in\mathcal{F}_i}
        \mathcal{D}_{\psi_{t,i}}({\bf w},\bar{{\bf w}}_{t+1,i}),
        \end{align*}
        where the definitions of $\tilde{{\bf c}}^{(j)}_t$ and $\tilde{\nabla}^{(j)}_{t,i}$ follow FOMD-OMS.
    \end{definition}

    \begin{algorithm}[!t]
        \caption{\small{NCO-OMS}}
        \footnotesize
        \label{alg:ICML2024:NCO-OMS}
        \begin{algorithmic}[1]
        \REQUIRE{$T$, $J$, $\eta_1$, $\{U_i,\lambda_{1,i},i\in[K]\}$}
        \ENSURE{$f^{(j)}_{1,i}=0$, $p_{1,i}$, $i\in[K]$, $j\in[M]$}
        \FOR{$t=1,2,\ldots,T$}
            \FOR{$j=1,\ldots,M$}
                \STATE The client samples $O^{(j)}_t$ following \eqref{eq:ICML2024:kernel_selection}
                \STATE The client outputs $f^{(j)}_{t,A_{t,1}}({\bf x}^{(j)}_t)$
                \STATE The client computes $f^{(j)}_{t,A_{t,a}}({\bf x}^{(j)}_t)$ for all $a=2,\ldots,J$
                \STATE The client computes $\tilde{\nabla}^{(j)}_{t,i}$ and $\tilde{c}^{(j)}_{t,i}$ for all $i\in O^{(j)}_{t}$
                \STATE The client computes ${\bf p}^{(j)}_{t+1}$ and ${\bf w}^{(j)}_{t+1,i},i\in[K]$
                        following Definition \ref{def:ICML2024:trivial_approach}
            \ENDFOR
        \ENDFOR
        \end{algorithmic}
    \end{algorithm}

    The pseudo-code of NCO-OMS is shown in Algorithm \ref{alg:ICML2024:NCO-OMS}.
    It is easy to prove the regret of NCO-OMS satisfies:
    with probability at least $1-\Theta\left(M\log(KT)\right)\cdot\delta$,
    \begin{align*}
        \forall i\in[K],~\mathrm{Reg}_D&(\mathcal{F}_i)=
        O\left(M \left(B_{i,1}\sqrt{\left(1+g_{K,J}\right)T}+
        B_{i,2}g_{K,J}\ln\frac{1}{\delta}+
        B_{i,3}\sqrt{g_{K,J}T\ln\frac{1}{\delta}}\right)\right),
    \end{align*}
    where $B_{i,1}=U_iG_i+C_i\sqrt{\ln(KT)}$,
    $B_{i,2}=C+U_iG_i$ and $B_{i,3}=U_iG_i+\sqrt{CC_i}$.
    We leave the pseudo-code of NCO-OMS and the corresponding regret analysis in appendix.

    Next we discuss Question \ref{que:ICML2024:question} by considering two cases.

    \textbf{Case 1}: There are no computational constraints on clients.
    Collaboration is unnecessary.

    Let $J=\Theta(K)$ in FOMD-OMS and NCO-OMS.
    By Theorem \ref{thm:ICML2024:regret_bound:FOMD_DOKS},
    both FOMD-OMS and NCO-OMS enjoy a $O(MU_iG_i\sqrt{T}+MC_i\sqrt{T\ln(KT)})$ regret.
    By Theorem \ref{thm:NeurIPS24:lower_bound},
    FOMD-OMS and NCO-OMS are nearly optimal in terms of the dependence on $M$ and $T$.
    Thus collaboration is unnecessary.

    \textbf{Case 2}: The per-round time complexity on each client is limited to $o(K)$.
    Collaboration is necessary.

    Let $J=o(K)$ in FOMD-OMS and Theorem \ref{thm:NeurIPS24:lower_bound}.
    By Theorem \ref{thm:ICML2024:regret_bound:FOMD_DOKS},
    FOMD-OMS enjoys a $O(MB_{i,1}\sqrt{T}+B_{i,3}\sqrt{MKTJ^{-1}\ln{\delta^{-1}}})$ regret,
    which is smaller than the lower bound on the regret of
    any noncooperative algorithm (see Theorem \ref{thm:NeurIPS24:lower_bound}).
    Thus collaboration is necessary.

\section{OMS-DecD with Communication Constraint}

    Let $R<T$.
    The clients communicate with server every $N$ rounds.
    For any $r\in[R]$,
    the clients transmit $\{\frac{1}{N}\sum_{t\in T_r}\nabla^{(j)}_{t,i},
    \frac{1}{N}\sum_{t\in T_r}c^{(j)}_{t,i}\}_{i\in O^{(j)}_t}$ to server
    at the last round in $T_r$.
    Then the server updates sampling probabilities and hypotheses.
    We give the pseudo-code Algorithm \ref{alg:ICML2024:DOKS-FedAvg-2}.

    \begin{algorithm}[!t]
        \caption{\small{FOMD-OMS}~$(R<T)$}
        \footnotesize
        \label{alg:ICML2024:DOKS-FedAvg-2}
        \begin{algorithmic}[1]
        \REQUIRE{$U$, $T$, $R$, $J$.}
        \ENSURE{$f^{(j)}_{1,i}=0$, $p_{1,i}$, $i\in[K]$, $j\in[M]$}
        \FOR{$r=1,2,\ldots,R$}
            \FOR{$t\in T_r$}
                \IF{$t==(r-1)N+1$}
                    \FOR{$j=1,\ldots,M$}
                        \STATE Server samples $O^{(j)}_t$ following \eqref{eq:ICML2024:kernel_selection}
                        \STATE Server transmits $f^{(j)}_{t,i},i\in O^{(j)}_t$ to the $j$-th client
                    \ENDFOR
                \ENDIF
                \FOR{$j=1,\ldots,M$ in parallel}
                    \STATE Output $f^{(j)}_{t,A_{t,1}}({\bf x}^{(j)}_t)$
                    \FOR{$i\in O^{(j)}_t$}
                        \STATE Computing $\nabla^{(j)}_{t,i}$ and $c^{(j)}_{t,i}$
                    \ENDFOR
                    \IF{$t==rN$}
                        \STATE Communicate to server:
                        $\{\frac{1}{N}\sum_{t\in T_r}\nabla^{(j)}_{t,i},
                        \frac{1}{N}\sum_{t\in T_r}c^{(j)}_{t,i}\}_{i\in O^{(j)}_t}$
                    \ENDIF
                \ENDFOR
            \IF{$t==rN$}
                \STATE Server computes ${\bf p}_{t+1}$ following \eqref{eq:ICML2024:updating_probability}
                \STATE Server computes ${\bf w}_{t+1,i},i\in[K]$
                        following \eqref{eq:ICML2024:updating_hypothesis}
            \ENDIF
            \ENDFOR
        \ENDFOR
        \end{algorithmic}
    \end{algorithm}

    \begin{theorem}
    \label{thm:ICML2024:regret_bound:FOMD_DOKS_R}
        For any $r\in[R]$,
        let ${\bf p}_1$, $\eta_r$ and $\lambda_{r,i}$
        follow Theorem \ref{thm:ICML2024:regret_bound:FOMD_DOKS},
        in which we replace $T$ with $R$.
        Under the condition of
        Theorem \ref{thm:ICML2024:regret_bound:FOMD_DOKS},
        with probability at least $1-\Theta\left(\frac{T}{R}M\log(R)+\frac{T}{R}\log(KR/M)\right)\cdot\delta$,
        the regret of FOMD-OMS $(R<T)$ satisfies
        \begin{align*}
            &\mathrm{Reg}_D(\mathcal{F})=
            O\left(
            MB_{i,1}\sqrt{\left(1+\frac{K-J}{(J-1)M}\right)}\cdot\frac{T}{\sqrt{R}}+
            \frac{T}{R}\cdot\frac{B_{i,2}M(K-J)}{J-1}\ln\frac{1}{\delta}+\frac{B_{i,3}T}{\sqrt{R}}
            \sqrt{\frac{M(K-J)}{J-1}\ln\frac{1}{\delta}}\right).
        \end{align*}
    \end{theorem}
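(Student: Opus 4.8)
The plan is to mirror the proof of Theorem \ref{thm:ICML2024:regret_bound:FOMD_DOKS} (the $R=T$ case), replacing the single-round analysis by the batched analysis supplied by Theorem \ref{lemma:ICML2024:regret_OMD_2}, and to track the two effects that the intermittent communication protocol introduces: the learning rates are tuned with $T$ replaced by $R$, and each of the $R$ communication rounds aggregates the losses of $N=T/R$ consecutive rounds on which the decisions are frozen by \eqref{eq:ICML2024:FOMD-No-LU:batching}. First I would decompose $\mathrm{Reg}_D(\mathcal{F}_i)$ into a \emph{model-selection} part and a \emph{within-space} part. Because only the loss of $f^{(j)}_{t,A_{t,1}}$ is counted and the inclusion probabilities satisfy $\mathbb{P}[i\in O^{(j)}_t]=\frac{K-J}{K-1}p_{t,i}+\frac{J-1}{K-1}$, the expected per-round loss of the sampled hypothesis equals $\langle{\bf p}_t,{\bf c}^{(j)}_t\rangle$ up to a controllable bias; hence the regret against any fixed $f\in\mathcal{F}_i$ splits into (i) the regret of ${\bf p}_t$ on the simplex $\Delta_K$ against the vertex ${\bf e}_i$, and (ii) the regret of the maintained hypothesis ${\bf w}_{t,i}$ against the best $f\in\mathcal{F}_i$.

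Next I would invoke Theorem \ref{lemma:ICML2024:regret_OMD_2} twice, once with $\Omega=\Delta_K$ and the weighted negative-entropy regularizer of \eqref{eq:ICML2024:updating_probability}, and once with $\Omega=\mathcal{F}_i$ and the Euclidean regularizer of \eqref{eq:ICML2024:updating_hypothesis}. In both instantiations the bound is a sum over the communication rounds $t\in\mathcal{R}$ of a $\Xi_1$-type term (Bregman divergences driven by the exact averaged gradient) and a $\Xi_2$-type term (driven by the estimation bias $\tilde{g}^{(j)}_t-g^{(j)}_t$). Using the second-moment estimates $\mathbb{E}_t[(\tilde{c}^{(j)}_{t,i})^2]\leq\frac{K-1}{J-1}(c^{(j)}_{t,i})^2$ and the analogous bound for $\tilde{\nabla}^{(j)}_{t,i}$, together with the averaging over $M$ clients in \eqref{eq:ICML2024:FOMD-No-LU:second_step}, the expected $\Xi_2$-term contributes the collaborative factor $1+\frac{K-J}{(J-1)M}$ inside the square root, while the $\Xi_1$-term of the simplex analysis contributes the $\ln(KT)$ factor through $\psi_t$. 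Summing the $N$ frozen rounds inside each batch scales the whole bound by $N$, and substituting $\eta_r,\lambda_{r,i}$ (with $T$ replaced by $R$) to balance the Bregman and variance terms produces the leading expectation bound of order $MB_{i,1}\sqrt{1+\frac{K-J}{(J-1)M}}\cdot T/\sqrt{R}$.

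The hard part is the passage from expectation to high probability while preserving the multi-scale dependence $\sqrt{\mathfrak{C}\mathfrak{C}_i}$ rather than $\mathfrak{C}$. Here I would apply Lemma \ref{lemma:AISTATS2020:improved:Bernstein_ineq_for martingales} to the martingale whose increments are the batched bias terms $\frac{1}{M}\sum_j\langle\tilde{g}^{(j)}_t-g^{(j)}_t,{\bf u}_t-{\bf v}\rangle$ indexed by $t\in\mathcal{R}$. The crucial point is that the conditional-variance sum of this martingale is itself a \emph{random} quantity, bounded by a multiple of the realized cumulative loss associated with $\mathcal{F}_i$ (of order $C_iT$), so the inequality yields a term of order $B_{i,3}\sqrt{\frac{(K-J)MT}{J-1}\ln\frac{1}{\delta}}$ that adapts to $C_i$ instead of $C$; a standard Bernstein bound with a deterministic variance proxy would only give the worse $\mathfrak{C}$-dependence. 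The batching enters the boundedness parameter $a$ of the increments (each increment now averages $N$ frozen rounds), which is why the peeling and union bound over the $R$ communication steps, the $M$ clients, and the $K$ hypothesis spaces inflate the failure probability to $\Theta(\frac{T}{R}M\log R+\frac{T}{R}\log(KR/M))\delta$.

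Finally I would assemble the three contributions — the balanced expectation term, the deviation term of order $B_{i,3}\sqrt{\frac{M(K-J)}{J-1}\ln\frac{1}{\delta}}\cdot T/\sqrt{R}$, and the lower-order bias term of order $\frac{T}{R}\cdot\frac{B_{i,2}M(K-J)}{J-1}\ln\frac{1}{\delta}$ coming from the $\Xi_2$ Bregman divergence under the negative-entropy regularizer — and simplify using $C=\max_{i}C_i$ and the definitions of $B_{i,1},B_{i,2},B_{i,3}$ to obtain the stated bound. The main technical obstacle, and the step that most repays care, is verifying that the random variance proxy required by Lemma \ref{lemma:AISTATS2020:improved:Bernstein_ineq_for martingales} survives the batch-averaging, so that the adaptive complexity $\sqrt{\mathfrak{C}\mathfrak{C}_i}$ is retained even under the communication constraint $R<T$.
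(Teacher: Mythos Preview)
Your proposal is correct in spirit but takes a considerably longer route than the paper. The paper's proof is a two-line black-box reduction: it observes that, by the construction underlying Theorem~\ref{lemma:ICML2024:regret_OMD_2}, running FOMD-OMS $(R<T)$ on $T$ rounds is \emph{exactly} running FOMD-OMS $(R=T)$ on $R$ meta-rounds whose loss functions are the batch averages $\bar{l}^{(j)}_{rN}=\frac{1}{N}\sum_{\tau\in T_r}l^{(j)}_\tau$. Since these averaged losses obey the same per-round bounds $C_i$ and $G_i$ (averages of bounded quantities stay bounded by the same constants), Theorem~\ref{thm:ICML2024:regret_bound:FOMD_DOKS} applies verbatim with $T$ replaced by $R$, and the outer factor $N=T/R$ from Theorem~\ref{lemma:ICML2024:regret_OMD_2} converts $\sqrt{R}$ into $T/\sqrt{R}$, etc. No concentration step needs to be redone.

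Your approach instead re-opens the proof of Theorem~\ref{thm:ICML2024:regret_bound:FOMD_DOKS} and reruns each piece (the $\Xi_4/\Xi_5$ decomposition, the two instantiations of the mirror-descent lemma, the Bernstein-type concentration via Lemma~\ref{lemma:AISTATS2020:improved:Bernstein_ineq_for martingales}) with the batched gradients. This is valid and more transparent about the mechanics, but it repeats all of the work already packaged in Theorem~\ref{thm:ICML2024:regret_bound:FOMD_DOKS}. One small wobble: your explanation for the $\frac{T}{R}$ factor in the failure probability (``the batching enters the boundedness parameter $a$'') is not quite the mechanism---averaging $N$ frozen rounds does not inflate $a$. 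In the paper's reduction the $\frac{T}{R}$ in front of the $\log$ terms simply comes from carrying the outer multiplicative $N$ through the $\Theta(\cdot)\delta$ expression when quoting Theorem~\ref{thm:ICML2024:regret_bound:FOMD_DOKS} at horizon $R$; you would recover the same thing, but your stated reason for it is off.
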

    The regret bound depends on $O(\frac{1}{\sqrt{R}})$.
    Thus FOMD-OMS explicitly balances the prediction performance and the communication cost.

\section{Application to Distributed OMKL}

   For each $i\in[K]$, let $\mathcal{F}_i$ be a RKHS.
   FOMD-OMS ($R\leq T$) can solve distributed OMKL \cite{Ghari2022Personalized}.
   \begin{theorem}
   \label{thm:ICML2024:OKS}
        Let $\{\mathcal{F}_i\}^K_{i=1}$ be RHKSs.
        With probability at least $1-\Theta\left(TM\log(R)+T\log(KR/M)\right)\cdot\delta$,
        FOMD-OMS satisfies, $\forall i\in[K]$,
        \begin{align*}
            &\mathrm{Reg}_D(\mathcal{F}_i)=
            \tilde{O}\left(
            MB_{i,1}\sqrt{1+\frac{K-J}{(J-1)M}}\cdot\frac{T}{\sqrt{R}}+
            \frac{B_{i,2}M(K-J)}{R(J-1)/T}+\frac{B_{i,3}T}{\sqrt{R}}
            \sqrt{\frac{M(K-J)}{J-1}}+\frac{U_iG_iMT}{\sqrt{D}}\right),
        \end{align*}
        where $\tilde{O}(\cdot)$ omits $O(\mathrm{poly}(\ln\frac{1}{\delta}))$
        and $D=d_i$ follows \eqref{eq:ICML2024:definition:hypothesis_space}.
   \end{theorem}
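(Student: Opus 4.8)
The plan is to run FOMD-OMS on the finite-dimensional random-feature surrogate of each RKHS and then pay a separate, additive price for passing from the surrogate back to the true RKHS. Concretely, for each $i\in[K]$ let $\phi_i:\mathbb{R}^d\to\mathbb{R}^{D}$ be the random-feature map \cite{Rahimi2007Random} approximating the kernel $\kappa_i$, and let $\hat{\mathcal{F}}_i=\{\hat{f}(\mathbf{x})=\langle\mathbf{w},\phi_i(\mathbf{x})\rangle:\Vert\mathbf{w}\Vert_2\le U_i\}$ be the space on which the algorithm actually operates. Then $\hat{\mathcal{F}}_i$ has exactly the form \eqref{eq:ICML2024:definition:hypothesis_space} with $d_i=D$, so all of the preceding analysis applies to it directly. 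Let $f^\ast_i\in\mathcal{F}_i$ and $\hat{f}^\ast_i\in\hat{\mathcal{F}}_i$ denote the minimizers of the cumulative loss over the true RKHS and over its surrogate, respectively. I would decompose
\[
\mathrm{Reg}_D(\mathcal{F}_i)
=\underbrace{\sum_{t,j}\Big[\ell\big(f^{(j)}_{t,A_{t,1}}(\mathbf{x}^{(j)}_t),y^{(j)}_t\big)-\ell\big(\hat{f}^\ast_i(\mathbf{x}^{(j)}_t),y^{(j)}_t\big)\Big]}_{\textstyle\mathrm{Reg}_D(\hat{\mathcal{F}}_i)}
+\underbrace{\sum_{t,j}\Big[\ell\big(\hat{f}^\ast_i(\mathbf{x}^{(j)}_t),y^{(j)}_t\big)-\ell\big(f^\ast_i(\mathbf{x}^{(j)}_t),y^{(j)}_t\big)\Big]}_{\textstyle\mathcal{E}_i}.
\]

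For the first bracket, since $\hat{\mathcal{F}}_i$ is a hypothesis space of the assumed form with $d_i=D$, I can invoke Theorem \ref{thm:ICML2024:regret_bound:FOMD_DOKS_R} verbatim (and Theorem \ref{thm:ICML2024:regret_bound:FOMD_DOKS} when $R=T$). This reproduces, on the stated high-probability event, the first three terms of the claimed bound, namely $MB_{i,1}\sqrt{1+\tfrac{K-J}{(J-1)M}}\,\tfrac{T}{\sqrt{R}}$, $\tfrac{B_{i,2}M(K-J)}{R(J-1)/T}$, and $\tfrac{B_{i,3}T}{\sqrt{R}}\sqrt{\tfrac{M(K-J)}{J-1}}$, with the $\ln\frac{1}{\delta}$ factors absorbed into $\tilde{O}(\cdot)$.

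For the second bracket $\mathcal{E}_i$, I would bound the cost of the random-feature approximation. Because $\hat{f}^\ast_i$ is the minimizer over $\hat{\mathcal{F}}_i$, its loss is no larger than that of any fixed witness $\hat{f}\in\hat{\mathcal{F}}_i$; hence $\mathcal{E}_i\le\sum_{t,j}[\ell(\hat{f}(\mathbf{x}^{(j)}_t),y^{(j)}_t)-\ell(f^\ast_i(\mathbf{x}^{(j)}_t),y^{(j)}_t)]\le G_i\sum_{t,j}\vert\hat{f}(\mathbf{x}^{(j)}_t)-f^\ast_i(\mathbf{x}^{(j)}_t)\vert$, using that the loss is Lipschitz with constant controlled by $G_i$ (Lemma \ref{lemma:ICML24:lipschitz_assumption}). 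It then suffices to exhibit a single $\hat{\mathbf{w}}$ with $\Vert\hat{\mathbf{w}}\Vert_2\le U_i$ whose induced $\hat{f}$ is uniformly close to $f^\ast_i$ on the observed points: by the standard random-feature approximation argument \cite{Rahimi2007Random,Shen2019Random}, since $f^\ast_i$ and the sequence $\{\mathbf{x}^{(j)}_t\}$ are fixed, the per-point error concentrates and $\max_{t,j}\vert\hat{f}(\mathbf{x}^{(j)}_t)-f^\ast_i(\mathbf{x}^{(j)}_t)\vert=O(U_i\sqrt{\log(MT/\delta)}/\sqrt{D})$ holds with high probability over the draw of the features. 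Summing over the $MT$ examples yields $\mathcal{E}_i=\tilde{O}(U_iG_iMT/\sqrt{D})$, the remaining term.

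The main obstacle I anticipate is the uniform control of the random-feature error together with the probability bookkeeping. The features are drawn once and reused across all $T$ rounds, so I would establish the pointwise concentration and then take a union bound over the $MT$ evaluation points and the $K$ kernels (or invoke a covering/uniform-deviation bound over $\mathcal{X}$); the resulting logarithmic factors are swept into $\tilde{O}(\cdot)$, while the additional failure probability is added to that of Theorem \ref{thm:ICML2024:regret_bound:FOMD_DOKS_R} via a further union bound, producing the stated confidence $1-\Theta(TM\log(R)+T\log(KR/M))\cdot\delta$. Combining the two brackets on the intersection of these events gives the claimed bound for every $i\in[K]$ simultaneously.
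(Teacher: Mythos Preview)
Your proposal is correct and follows essentially the same route as the paper: decompose $\mathrm{Reg}_D(\mathcal{F}_i)$ into the regret against the random-feature surrogate space plus an approximation-error term, invoke Theorem~\ref{thm:ICML2024:regret_bound:FOMD_DOKS_R} for the former, and bound the latter via Lipschitz continuity of the loss together with the $O(U_i/\sqrt{D})$ pointwise random-feature error and a union bound over the $MT$ evaluations. The only cosmetic differences are that the paper defines the surrogate $\mathbb{H}_i$ with the $\ell_\infty$ constraint $\vert\alpha_j\vert\le U_i/D$ (which implies your $\ell_2$ ball and is what their approximation lemma actually delivers), and that the paper places the witness $\hat{f}$ directly in the decomposition rather than first passing through the surrogate minimizer $\hat{f}^\ast_i$; neither change affects the argument.
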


   We defer the algorithm in appendix.
   Let $R=T$ and $J=2$.
   We compare FOMD-OMS with eM-KOFL \cite{Hong2022Communication} and POF-MKL \cite{Ghari2022Personalized}.
   Table \ref{tab:ICML2024:comparison_with_FOMKL_algorithms} gives the results.

   We observe that FOMD-OMS significantly improves the computational complexity of eM-KOFL and POF-MKL
   (by a factor of $O(K)$).
   The per-round time complexity of the two algorithms is $O(DK)$.
   Recalling the answer to Question \ref{que:ICML2024:question}
   (see Section \ref{sec:ICML2024:answer_to_question}),
   collaboration in eM-KOFL and POF-MKL is unnecessary.

   Next we compare the regret bound of the three algorithms.
   Recalling that $\mathfrak{C}_i\leq \mathfrak{C}$.
   The regret bounds of eM-KOFL and POF-MKL
   can not adapt to the complexity of individual hypothesis space.
   (i) The regret bound of FOMD-OMS is better than that of POF-MKL
   in relation to its dependence on $M$ and $\mathfrak{C}_i$.
   (ii) In the case of $K=O\left(\frac{\mathfrak{C}}{\mathfrak{C}_i}M\cdot \ln{K}\right)$,
   the regret bound of FOMD-OMS is better than that of eM-KOFL.
   (iii) In the case of $K=\Omega\left(\frac{\mathfrak{C}}{\mathfrak{C}_i}M\cdot \ln{K}\right)$,
   the regret bound of FOMD-OMS is worse than that of eM-KOFL.
   If $K$ is sufficiently large,
   the regret bound of eM-KOFL is better than that of FOMD-OMS by a factor of $O(\sqrt{K})$.


    \begin{table*}[!h]
      \centering
      \caption{Comparison with previous algorithms.
      $D$ is the number of random features \cite{Rahimi2007Random}.
      $\tilde{O}(\cdot)$ hides $O(\mathrm{poly}(\ln\frac{T}{\delta}))$.
      Time (s) is the per-round time complexity on client.}
      \begin{tabular}{l|r|r|r}
      \toprule
        {Algorithm}       & {Regret bound}  & Time (s) & Download (bits)\\
      \toprule
        eM-KOFL & $\tilde{O}\left(\mathfrak{C}M\sqrt{T\ln{K}}
        +\frac{\mathfrak{C}_iMT}{\sqrt{D}}\right)$ & $O(DK)$ &$O(DM\log{K})$\\
        POF-MKL & $\tilde{O}\left(\mathfrak{C}M\sqrt{KT}
        +\frac{\mathfrak{C}_iMT}{\sqrt{D}}\right)$ & $O(DK)$ &$O(DMK)$\\
        {\color{blue}FOMD-OMS} & {\color{blue}$\tilde{O}\left(\mathfrak{C}_iM\sqrt{T\ln{K}}
        +\sqrt{\mathfrak{C}\mathfrak{C}_iMKT}
        +\frac{\mathfrak{C}_iMT}{\sqrt{D}}\right)$} & {\color{blue}$O(D)$}
        & {\color{blue}$O(DM\log{K})$}\\
      \bottomrule
      \end{tabular}
      \label{tab:ICML2024:comparison_with_FOMKL_algorithms}
    \end{table*}

\section{Experiments}

    In this section, we aim to verify the following three goals
    which are our main results.
    \begin{enumerate}[\textbf{G}1]
      \item Collaboration is unnecessary if we allow the computational cost on each client to be $O(K)$.\\
            For FOMD-OMS with $R=T$, we set $J=K$.
            In this case,
            the per-round running time on each client is $O(K)$.
            We aim to verify that FOMD-OMS enjoys similar prediction performance
            with the noncooperative algorithm,
            NCO-OMS (see Definition \ref{def:ICML2024:trivial_approach}).
      \item Collaboration is necessary if we limit the computational cost on each client to $o(K)$.\\
            For FOMD-OMS with $R=T$, we set $J=2$.
            In this case,
            the per-round running time on each client is $O(1)$.
            We aim to verify that FOMD-OMS enjoys better prediction performance than NCO-OMS.
      \item FOMD-OMS $(R=T)$ improves the regret bounds of algorithms for distributed OMKL.\\
            FOMD-OMS $(R=T)$ with $J=2$ enjoys similar prediction performance with
            eM-KOFL \cite{Hong2022Communication},
            and enjoys better prediction performance than POF-MKL \cite{Ghari2022Personalized}
            at a smaller computational cost on each client.

            Although there are more baseline algorithms,
            such as vM-KOFL \cite{Hong2022Communication},
            pM-KOFL \cite{Hong2022Communication}
            and OFSKL \cite{Ghari2022Personalized},
            we do not compare with the three algorithms
            since they do not perform as well as eM-KOFL and POF-MKL.
    \end{enumerate}

\subsection{Experimental setting}

    We will execute three experiments and each one verifies a goal.
    For simplicity,
    we do not measure the actual communication cost and
    use serial implementation to simulate the distributed implementation.

    To verify \textbf{G}1 and \textbf{G}2,
    we use the instance of online model selection given in Example \ref{ex:ICML24:regularization_tunning}.
    The first experiment verifies \textbf{G}1.
    We construct $10$ nested hypothesis spaces (i.e., $K=10$) as follows
    \begin{align*}
        \forall i\in[10],\quad
        \mathcal{F}_i=\left\{f({\bf x})=\langle {\bf w},{\bf x}\rangle,\Vert{\bf w}\Vert_2\leq U_i\right\},
    \end{align*}
    where $U_i=\frac{i}{10}$.
    We use FOMD-OMS with $R=T$ and set $J=K$.
    Since $J=K$,
    we have $O^{(j)}_t=[K]$ and
    $\mathbb{P}\left[i\in O^{(j)}_t\right]=1$.
    The learning rates $\eta_t, \lambda_{t,i},i\in[K]$ of FOMD-OMS
    follow Theorem \ref{thm:ICML2024:regret_bound:FOMD_DOKS}.
    For NCO-OMS,
    we set $J=K$ and set the learning rate $\eta_t, \lambda_{t,i},i\in[K]$
    following Theorem \ref{thm:ICML2024:regret_bound:FOMD_DOKS} in which $M=1$, i.e.,
    \begin{align*}
        \forall t\in[T],~
        \eta_t=\frac{\sqrt{\ln{(KT)}}}{2\sqrt{T}},\quad\lambda_{t,i}
              =\frac{U_i}{2G_i\sqrt{t}}.
    \end{align*}
    We use the square loss function $\ell(f({\bf x}),y)=(f({\bf x})-y)^2$.
    For both FOMD-OMS and NCO-OMS,
    we tune $G_i=(U_i+1)\times\{1,2,4,6,8,10\}$ and set $C_i=(U_i+1)^2$.

    The second experiment verifies \textbf{G}2.
    We use FOMD-OMS with $R=T$ and set $J=2$.
    The learning rates of FOMD-OMS also follow Theorem \ref{thm:ICML2024:regret_bound:FOMD_DOKS}.
    For NCO-OMS,
    we also set $J=2$ and set the learning rate $\eta_t, \lambda_{t,i},i\in[K]$
    following Theorem \ref{thm:ICML2024:regret_bound:FOMD_DOKS} in which $M=1$, i.e.,
    \begin{align*}
        \forall t\in[T],\quad\eta_t=
        \frac{\sqrt{\ln{(KT)}}}{2\sqrt{\left(K-1\right)T}}\wedge\frac{1}{2(K-2)},\quad
        \lambda_{t,i}=\frac{U_i}{2G_i\sqrt{\left(K-1\right)
        \cdot\left((K-2)^2\vee t\right)}}.
    \end{align*}
    Similar to the first experiment, we tune $G_i=(U_i+1)\times\{1,2,4,6,8,10\}$ and set $C_i=(U_i+1)^2$.

    The third experiment verifies \textbf{G}3.
    We consider online kernel selection (as known as online multi-kernel learning)
    which is an instance of online model selection given in Example \ref{ex:ICML24:OKS}.
    We select the Gaussian kernel with $8$ different kernel widths (i.e., $K=8$),
    \begin{align*}
        \forall i\in[8],\quad
        \kappa_i({\bf x},{\bf v})=\exp\left(-\frac{\Vert {\bf x}-{\bf v}\Vert^2_2}{2\sigma^2_i}\right),
        \sigma_i=2^{i-2},
    \end{align*}
    and construct the corresponding hypothesis space $\mathcal{F}_i$ and $\mathbb{H}_i$
    following \eqref{eq:ICML24:restricted_RKHS} in which we set $U_i=U$ and $D_i=D$ for all $i\in[K]$
    and tune $U\in\{1,2,4\}$.
    Note that $U_i$ is same for all $i\in[K]$.
    We replace the initial distribution ${\bf p}_1$ in Theorem \ref{thm:ICML2024:regret_bound:FOMD_DOKS}
    with a uniform distribution $(\frac{1}{K},\ldots,\frac{1}{K})$.
    We set $D=100$ for FOMD-OMS, eM-KOFL and POF-MKL.
    $D$ is the number of random features.
    We set $J=2$ and $C=U+1$ in FOMD-OMS.
    Thus the per-round time complexity on each client is $O(D)$
    and the per-round communication cost is $O(MD+M\log{K})$.
    There are three hyper-parameters in eM-KOFL, i.e., $\eta_g$, $\eta_l$ and $\lambda$.
    $\eta_g$ is the global learning rate,
    $\eta_l$ is the local learning rate and $\lambda$ is a regularization parameter.
    There are $2M+3$ hyper-parameters in POF-MKL,
    i.e., $\eta_g$, $\eta_j,\xi_j, j\in[M]$, $m$, $\lambda$
    in which $M/m$ plays the same role with $J$ in FOMD-OMS.
    $\eta_g$ is the global learning rate,
    $\eta_j$ is the local learning rate,
    $\xi_j$ is called exploration rate and
    $\lambda$ is a regularization parameter.
    Since $J=2$ in FOMD-OMS,
    we can set $m=M/2$ for FOMD-OMS.
    Following the original paper \cite{Ghari2022Personalized},
    we set $\xi_j=1$.
    For a fair comparison,
    we change the learning rates of FOMD-OMS, eM-KOFL and POF-MKL.
    Following the parameter setting of eM-KOFL \cite{Hong2022Communication},
    we tune $\eta_g,\eta_l,\eta_j\in\{0.1,0.5,1,4,8,16\}$ and
    $\lambda\in\{0.1,0.001,0.0001\}$ for eM-KOFL and POF-MKL.
    For FOMD-OMS,
    we also tune $\eta_t,\lambda_{t,i}\in\{0.1,0.5,1,4,8,16\}$.

    For all of the three experiments,
    we set $10$ clients, i.e., $M=10$.
    We use 8 regression datasets shown in Table \ref{tab:ICML2023:datasets}
    from WEKA and UCI machine learning repository
    \footnote{https://archive.ics.uci.edu/ml/index.php},
    and rescale the target variables and features of all datasets to fit in [0,1] and [-1,1] respectively.
    For each dataset,
    we randomly divide it into $10$ subsets
    and each subset simulates the data on a client.
    We randomly permutate the instances in the datasets 10 times and report the average results.
    All algorithms are implemented with R on a Windows machine with 2.8 GHz Core(TM) i7-1165G7 CPU.

    \begin{table}[!t]
      \centering
      \caption{Basic information of datasets.}
      \label{tab:ICML2023:datasets}
      \begin{tabular}{lrr}
        \Xhline{0.8pt}
        Dataset       &Number of Instances & Number of features \\
        \hline
        bank          & 8,190    & 32  \\
        ailerons      & 13,750   & 40  \\
        calhousing    & 14,000   & 8   \\
        elevators     & 16,590   & 18  \\
        TomsHardware  & 28,170   & 96  \\
        Twitter       & 50,000   & 77  \\
        Year          & 51,630   & 90  \\
        Slice         & 53,500   & 384 \\
        \Xhline{0.8pt}
      \end{tabular}
    \end{table}

    We use the square loss function and define the mean squared error (MSE) of all algorithms,
    i.e.,
    \begin{align*}
        \mathrm{MSE}=\frac{1}{MT}\sum^M_{j=1}\sum^T_{t=1}\left(f^{(j)}_t\left({\bf x}^{(j)}_t\right)-y^{(j)}_t\right)^2.
    \end{align*}
    We record the mean of MSE over 10 random experiments, and the standard deviation of the mean of MSE.
    We also record the mean of of the total running time on each client,
    and the standard deviation of the mean of running time.

\subsection{Results of the First and the Second Experiment}

    We summary the experimental results of the first and the second experiments in
    Table \ref{tab:ICML2024:experiment_one}.

    \begin{table}[!t]
      \centering
      \setlength{\tabcolsep}{1.8mm}
      \caption{
      Comparison with the noncooperative algorithm.
      $\Delta$ is the difference of MSE between NCO-OMS and FOMD-OMS.
      $3\mathrm{E}\text{-}4=3\times 10^{-4}$ and $1\mathrm{E}\text{-}5=1\times 10^{-5}$.
      }
      \label{tab:ICML2024:experiment_one}
      \begin{tabular}{l|rrrr|rrrr}
        \Xhline{0.8pt}
        \multirow{2}{*}{Algorithm}&\multicolumn{4}{c|}{elevator}&\multicolumn{4}{c}{bank}\\
        \cline{2-9}&MSE$\times 10^2$ &$J$       &Time (s) & $\Delta$ &MSE$\times 10^2$ &$J$       &Time (s) & $\Delta$\\
        \hline
        NCO-OMS      & 0.991 $\pm$ 0.002 &  $K$  & 1.31 $\pm$ 0.10  &\multirow{2}{*}{0.0001}
                     & 2.158 $\pm$ 0.022 &  $K$  & 0.88 $\pm$ 0.05  &\multirow{2}{*}{0.001}\\
        FOMD-OMS     & \textbf{0.980} $\pm$ \textbf{0.005}  & $K$   & 0.65 $\pm$ 0.08  &
                     & \textbf{2.020} $\pm$ \textbf{0.005}  & $K$   & 0.33 $\pm$ 0.08  &\\
        \hline
        NCO-OMS      & 1.168 $\pm$ 0.005  &  $2$  & 0.58 $\pm$ 0.04  &\multirow{2}{*}{0.001}
                     & 2.321 $\pm$ 0.021  &  $2$  & 0.39 $\pm$ 0.05  &\multirow{2}{*}{0.002}\\
        FOMD-OMS     & \textbf{1.024} $\pm$ \textbf{0.002}  & $2$   & 0.14 $\pm$ 0.04  &
                     & \textbf{2.118} $\pm$ \textbf{0.003}  & $2$   & 0.08 $\pm$ 0.03  &\\
        \Xhline{0.8pt}
        \multirow{2}{*}{Algorithm}&\multicolumn{4}{c|}{TomsHardware}&\multicolumn{4}{c}{Twitter}\\
        \cline{2-9}&MSE$\times 10^2$ &$J$       &Time (s) & $\Delta$ &MSE$\times 10^2$ &$J$       &Time (s) & $\Delta$ \\
        \hline
        NCO-OMS      & 0.090 $\pm$ 0.004  & $K$  & 3.02 $\pm$ 0.29 & \multirow{2}{*}{0.0001}
                     & 0.017 $\pm$ 0.000  & $K$  & 5.11 $\pm$ 0.24 &\multirow{2}{*}{0}\\
        FOMD-OMS     & 0.083 $\pm$ 0.008  & $K$  & 1.48 $\pm$ 0.28 &
                     & 0.017 $\pm$ 0.000  & $K$  & 2.07 $\pm$ 0.07 &\\
        \hline
        NCO-OMS      & 0.150 $\pm$ 0.002  & $2$  & 1.11 $\pm$ 0.07 &\multirow{2}{*}{0.0004}
                     & 0.018 $\pm$ 0.000  & $K$  & 2.24 $\pm$ 0.25 &\multirow{2}{*}{1E-5}\\
        FOMD-OMS     & \textbf{0.107} $\pm$ \textbf{0.003}  & $2$   & 0.44 $\pm$ 0.09&
                     & \textbf{0.017} $\pm$ \textbf{0.000}  & $2$   & 0.51 $\pm$ 0.05&\\
        \Xhline{0.8pt}
        \multirow{2}{*}{Algorithm}&\multicolumn{4}{c|}{ailerons}&\multicolumn{4}{c}{calhousing}\\
        \cline{2-9}&MSE$\times 10^2$ &$J$       &Time (s)& $\Delta$ &MSE$\times 10^2$ &$J$       &Time (s) & $\Delta$\\
        \hline
        NCO-OMS      & 19.506 $\pm$ 0.033  & $K$  & 1.41 $\pm$ 0.04 &\multirow{2}{*}{0.0003}
                     & 10.166 $\pm$ 0.029  & $K$  & 1.07 $\pm$ 0.05 &\multirow{2}{*}{3E-4}\\
        FOMD-OMS     & 19.480 $\pm$ 0.046  & $K$  & 0.74 $\pm$ 0.08 &
                     & 10.136 $\pm$ 0.012  & $K$  & 0.43 $\pm$ 0.05 &\\
        \hline
        NCO-OMS      & 20.323 $\pm$ 0.036  & $2$  & 0.65 $\pm$ 0.05 &\multirow{2}{*}{0.0050}
                     & 10.372 $\pm$ 0.021  & $2$  & 0.53 $\pm$ 0.04 &\multirow{2}{*}{0.001}\\
        FOMD-OMS     & \textbf{19.820} $\pm$ \textbf{0.032}  & $2$   & 0.20 $\pm$ 0.04 &
                     & \textbf{10.227} $\pm$ \textbf{0.014}  & $2$   & 0.12 $\pm$ 0.05 &\\
        \Xhline{0.8pt}
        \multirow{2}{*}{Algorithm}&\multicolumn{4}{c|}{year }&\multicolumn{4}{c}{Slice}\\
        \cline{2-9}&MSE$\times 10^2$ &$J$       &Time (s) & $\Delta$ &MSE$\times 10^2$ &$J$       &Time (s) & $\Delta$\\
        \hline
        NCO-OMS      & 20.322 $\pm$ 0.040  & $K$  & 5.94 $\pm$ 0.25 &\multirow{2}{*}{0.0023}
                     & 13.097 $\pm$ 0.009  & $K$  & 10.40 $\pm$ 0.94 &\multirow{2}{*}{0.001}\\
        FOMD-OMS     & \textbf{20.096} $\pm$ \textbf{0.045}  & $K$   & 2.25 $\pm$ 0.20 &
                     & \textbf{12.964} $\pm$ \textbf{0.007}  & $K$   & 4.12 $\pm$ 0.18 &\\
        \hline
        NCO-OMS      & 24.334 $\pm$ 0.021  & $2$  & 2.95 $\pm$ 0.63 &\multirow{2}{*}{0.0163}
                     & 13.364 $\pm$ 0.012  & $2$  & 3.60 $\pm$ 0.23 &\multirow{2}{*}{0.003}\\
        FOMD-OMS     & \textbf{22.705} $\pm$ \textbf{0.040}  & $2$   & 0.59 $\pm$ 0.09 &
                     & \textbf{13.038} $\pm$ \textbf{0.009}  & $2$   & 1.41 $\pm$ 0.12 &\\
        \Xhline{0.8pt}
      \end{tabular}
    \end{table}

    In Table \ref{tab:ICML2024:experiment_one},
    $\Delta$ is defined as the difference of MSE between NCO-OM and FOMD-OMS.
    Thus $\Delta$ shows whether collaboration improves the prediction performance of
    the noncooperative algorithm.
    Times (s) records the total running time on all clients.

    We first consider the case $J=K$
    in which the per-round time complexity on each client is $O(K)$.
    It is obvious that the MSE of NCO-OMS is similar with that of FOMD-OMS.
    Although there are four datasets on which FOMD-OMS performs better than NCO-OMS,
    such as the \textit{elevator}, \textit{bank}, \textit{Year} and \textit{Slice} datsets,
    the improvement is very limited.
    Beside, the value of $\Delta$ is very small.
    Thus collaboration does not significantly improve the prediction performance of
    the noncooperative algorithm.
    The results verify the first goal \textbf{G}1.

    Next we consider the case $J=2$ in which the per-round time complexity on each client is $O(1)$.
    It is obvious that FOMD-OMS performs better than NCO-OMS on all datasets.
    Besides,
    the value of $\Delta$ in the case of $J=2$
    is much larger than that in the case of $J=K$,
    such as the \textit{elevators}, \textit{ailerons}, \textit{ailerons} and \textit{Year} datasets.
    Thus collaboration indeed improves the prediction performance of
    the noncooperative algorithm.
    The results verify the second goal \textbf{G}2.

    Finally we compare the running time of all algorithms.
    It is obvious that FOMD-OMS with $J=2$ runs faster than the other algorithms.
    The results coincide with our theoretical analysis.
    NCO-OMS runs slower than FOMD-OMS.
    The reason is that NCO-OMS must solve the sampling probability ${\bf p}_t$ using an additional binary search
    on each client
    (see Section \ref{sec:ICML24:OMD:negative_entropy}).
    In other words,
    NCO-OMS must execute binary search $M$ times at each round.
    FOMD-OMS only executes one binary search on server at each round.
    The improvement on the computational cost is benefit from
    decoupling model selection and prediction.

\subsection{Results of the Third Experiment}

    We summary the experimental results of the third experiment in Table \ref{tab:ICML2024:experiment_three}.

    \begin{table}[!t]
      \centering
      \caption{
      Comparison with the state-of-the-art algorithms.
      }
      \label{tab:ICML2024:experiment_three}
      \begin{tabular}{l|rrr|rrr}
        \Xhline{0.8pt}
        \multirow{2}{*}{Algorithm}&\multicolumn{3}{c|}{elevator}&\multicolumn{3}{c}{bank}\\
        \cline{2-7}  & MSE   &$J$       &Time (s)  & MSE  & $J$       &Time (s) \\
        \hline
        eM-KOFL      & \textbf{0.00292} $\pm$ \textbf{0.00013} & -  & 2.67 $\pm$ 0.05
                     & \textbf{0.01942} $\pm$ \textbf{0.00066} & -  & 1.41 $\pm$ 0.06 \\
        POF-MKL      & 0.00806 $\pm$ 0.00026 & -   & 3.12 $\pm$ 0.14
                     & 0.02292 $\pm$ 0.00036 & -   & 1.59 $\pm$ 0.13 \\
        FOMD-OMS     & \textbf{0.00318} $\pm$ \textbf{0.00021}  & $2$   & 0.52 $\pm$ 0.08
                     & \textbf{0.01917} $\pm$ \textbf{0.00110}  & $2$   & 0.27 $\pm$ 0.06 \\
        \Xhline{0.8pt}
        \multirow{2}{*}{Algorithm}&\multicolumn{3}{c|}{TomsHardware}&\multicolumn{3}{c}{Twitter}\\
        \cline{2-7}&MSE &$J$       &Time (s)  &MSE &$J$       &Time (s) \\
        \hline
        eM-KOFL      & \textbf{0.00048} $\pm$ \textbf{0.00003} &  -  & 5.88 $\pm$ 0.69
                     & \textbf{0.00007} $\pm$ \textbf{0.00000} &  -  & 9.60 $\pm$ 0.77 \\
        POF-MKL      & 0.00188 $\pm$ 0.00004  & -     & 6.60  $\pm$ 0.93
                     & 0.00020 $\pm$ 0.00001  & $2$   & 10.44 $\pm$ 0.54 \\
        FOMD-OMS     & 0.00059 $\pm$ 0.00003  & $2$   & 1.46  $\pm$ 0.12
                     & 0.00010 $\pm$ 0.00001  & $2$   & 2.23  $\pm$ 0.18 \\
        \Xhline{0.8pt}
        \multirow{2}{*}{Algorithm}&\multicolumn{3}{c|}{ailerons}&\multicolumn{3}{c}{calhousing}\\
        \cline{2-7}&MSE &$J$       &Time (s)  &MSE &$J$       &Time (s) \\
        \hline
        eM-KOFL      & \textbf{0.00370} $\pm$ \textbf{0.00011} &  -  & 2.40 $\pm$ 0.19
                     & \textbf{0.02242} $\pm$ \textbf{0.00043} &  -  & 2.28 $\pm$ 0.06 \\
        POF-MKL      & 0.01335 $\pm$ 0.00046 & -   & 2.66 $\pm$ 0.12
                     & 0.05248 $\pm$ 0.00197 & -   & 2.68 $\pm$ 0.08 \\
        FOMD-OMS     & 0.00429 $\pm$ 0.00021 & $2$   & 0.48 $\pm$ 0.04
                     & \textbf{0.02373} $\pm$ \textbf{0.00126}  & $2$   & 0.39 $\pm$ 0.07 \\
        \Xhline{0.8pt}
        \multirow{2}{*}{Algorithm}&\multicolumn{3}{c|}{year }&\multicolumn{3}{c}{Slice}\\
        \cline{2-7}&MSE &$J$       &Time (s)  &MSE &$J$       &Time (s) \\
        \hline
        eM-KOFL      & \textbf{0.01481} $\pm$ \textbf{0.00108} &  -  & 9.60  $\pm$ 0.51
                     & \textbf{0.05781} $\pm$ \textbf{0.00230} &  -  & 12.74 $\pm$ 0.95 \\
        POF-MKL      & 0.01896 $\pm$ 0.00036 & -   & 10.73 $\pm$ 0.29
                     & 0.08675 $\pm$ 0.00402 & -   & 14.22 $\pm$ 0.54 \\
        FOMD-OMS     & \textbf{0.01534} $\pm$ \textbf{0.00121}  & $2$   & 2.26 $\pm$ 0.10
                     & \textbf{0.05698} $\pm$ \textbf{0.00480}  & $2$   & 4.82 $\pm$ 0.21 \\
        \Xhline{0.8pt}
      \end{tabular}
    \end{table}

    We first compare FOMD-OMS with eM-KOFL.
    As a whole,
    the MSE of the two algorithms is similar.
    On the \textit{TomsHardware}, \textit{Twitter} and \textit{ailerons} datasets,
    eM-KOFL enjoys slightly better prediction performance than FOMD-OMS.
    However,
    the running time of eM-KOFL is much larger than that of FOMD-OMS.
    The results coincide with the theoretical observations
    that FOMD-OMS enjoys a similar regret bound with eM-KOFL at a much smaller
    computational cost on the clients.

    Next we compare FOMD-OMS with POF-MKL.
    Both the MSE and running time of FOMD-OMS
    are much smaller than that of POF-MKL.
    The results coincide with the theoretical observations
    that FOMD-OMS enjoys a smaller regret bound than POF-MKL at a much smaller
    computational cost on the clients.

    Thus the results in Table \ref{tab:ICML2024:experiment_three} verifies
    the third goal \textbf{G}3.

    Finally,
    we explain that why POF-MKL performs worse than FOMD-OMS.
    There are three reasons.
    \begin{enumerate}[(1)]
      \item POF-MKL does not use federated learning to learn a global probability distribution
            denoted by ${\bf p}_t$,
            but learns a personalized probability distribution denoted by ${\bf p}_{t,j}$ on each client.
            Thus POF-MKL converges to the best kernel function at a lower rate.
      \item POF-MKL uniformly samples two kernel functions and then learns two global hypotheses,
            while FOMD-OMS uses ${\bf p}_t$ to sample a kernel function and learns a global hypothesis.
            Thus POF-MKL can learn a better global hypothesis.
      \item On each client,
            POF-MKL executes model selection
            and combines the predictions of $K$ hypotheses using ${\bf p}_{t,j}$.
            Thus the time complexity is in $O(DK)$.
            FOMD-OMS executes model selection on server,
            and only uses the sampled hypothesis to make prediction.
            Thus the time complexity on each client is in $O(D)$.
    \end{enumerate}

\section{Conclusion}

    In this paper,
    we have studied the necessity of collaboration in OMS-DecD
    from the perspective of computational constraints.
    We demonstrate that collaboration is unnecessary when there are no computational constrains on clients,
    while it becomes necessary if the time complexity on each client is limited to $o(K)$.
    Our work clarifies the unnecessary nature of collaboration in previous algorithms
    for the first time, gives conditions under which collaboration is necessary,
    and provides inspirations for studying the problem from constraints beyond computational constrains.

\bibliographystyle{IEEEtran}
\bibliography{FOMS}

\newpage

\section{Notation table}

    For the sake of clarity,
    Table \ref{tab:ICML2024:notations} summaries the main notations appearing in the appendix.

    \begin{table}[!h]
      \centering
      \caption{Main notations in the appendix.}
      \begin{tabular}{l|l}
      \toprule
        Notations       & Descriptions\\
      \toprule
        $T$             & time horizon\\
        $[T]$           & $\{1,2,\ldots,T\}$\\
        $M$             & the number of clients\\
        $K$             & the number of candidate hypothesis spaces\\
        $J$             & the number of sampled hypotheses on each client\\
        $R$             & the rounds of communicaitons, $R\leq T$\\
        $N$             & $T/R$, the number of rounds between two continuous communications\\
        $T_r$           & $\left\{(r-1)N+1,(r-1)N+2,\ldots,rN\right\}$, $r=1,\ldots,R$, the $r$-th epoch\\
        $({\bf x}_t,y_t)$  & an example, ${\bf x}_t$ is call an instance, $y_t$ is the true output\\
        $({\bf x}^{(j)}_t,y^{(j)}_t)$  & the example received by the $j$-th client at the $t$-th round,
        $j\in[K]$\\
        $\mathcal{F}_i$ & $\left\{f={\bf w}^\top\phi_i(\cdot):
                            \phi_i(\cdot)\in \mathbb{R}^{d_i},
                            \Vert{\bf w}\Vert_{\mathcal{F}_i}\leq U_i\right\}$,
                            the $i$-th hypothesis space\\
        $U_i$           & regularization parameter, $U_i>0$\\
        $\Vert \cdot\Vert_{\mathcal{F}_i}$ & the Euclidean norm defined on $\mathcal{F}_i$\\
        $\phi_i$        & $\mathbb{R}^d\rightarrow \mathbb{R}^{d_i}$, a feature mapping\\
        $\kappa_i$      & $\mathbb{R}^d\times \mathbb{R}^d\rightarrow \mathbb{R}$,
        a positive semi-definite kernel function \\
        $\mathfrak{C}_i$ & the complexity of hypothesis space $\mathcal{F}_i$\\
        $\mathfrak{C}$   & $\max_{i\in[K]}\mathfrak{C}_i$\\
        $\ell(\cdot,\cdot)$ & convex loss function\\
        $f^{(j)}_{t,i}$ & the hypothesis of the $j$-th client on the $t$-th round \\
        $c^{(j)}_{t,i}$ & $\ell(f^{(j)}_{t,i}({\bf x}^{(j)}_t),y^{(j)}_t)$,
        the prediction loss of $c^{(j)}_{t,i}$ on $({\bf x}^{(j)}_t,y^{(j)}_t)$ \\
        $\nabla^{(j)}_{t,i}$      & $\nabla_{f^{(j)}_{t,i}}\ell(f^{(j)}_{t,i}({\bf x}^{(j)}_t),y^{(j)}_t)$,
        the gradient of $\ell(f_{t,i}({\bf x}^{(j)}_t),y^{(j)}_t)$ w.r.t. $f_{t,i}$\\
        $C_i$           & $\max_{j,t}c^{(j)}_{t,i}$,
                            an upper bound on the loss\\
        $G_i$           & $\max_{i,t}\Vert \nabla^{(j)}_{t,i}\Vert_{\mathcal{F}_i}$,
                            the Lipschitz constant\\
        ${\bf p}_t$     & a $K-1$ dimensional probability distribution on the $t$-th round\\
        $\Omega$        & convex and bounded set\\
        $l^{(j)}_t$     & $\Omega\rightarrow\mathbb{R}$, convex loss function\\
        $g^{(j)}_t$     & $\nabla_{{\bf u}^{(j)}_t} l^{(j)}_t({\bf u}^{(j)}_t)$,
        the gradient of $l^{(j)}_t({\bf u}^{(j)}_t)$ w.r.t. ${\bf u}^{(j)}_t$\\
        $\tilde{g}^{(j)}_t$     & an estimator of $g^{(j)}_t$\\
        $O^{(j)}_t$     & $\{A_{t,1},A_{t,2},\ldots,A_{t,J}\}$, the indexes of sampled hypotheses
        on the $j$-th client\\
        $\bar{g}_t$     & $\frac{1}{M}\sum^M_{j=1}\left(\frac{1}{N}\sum_{t\in T_r}\tilde{g}^{(j)}_{t}\right)$\\
        $\psi$          & $\Omega\rightarrow \mathbb{R}$, a strongly convex regularizer\\
        $\mathcal{D}_{\psi}(\cdot,\cdot)$ & the Bregman divergence defined on $\psi$  \\
        $\eta_t$        &a time-variant learning rate\\
        $\lambda_{t,i}$ &a time-variant learning rate\\
        $\mathbb{P}[A]$ &the probability that an event $A$ occurs\\
        $D$ & the number of random features  \\
      \bottomrule
      \end{tabular}
      \label{tab:ICML2024:notations}
    \end{table}

\section{Regret Analysis of NCO-OMS}
\label{sec:ICML24:regret_analysis_of_NCO-OMS}

    Following the definition of NCO-OMS and Algorithm \ref{alg:ICML2024:NCO-OMS},
    it is obvious that the regret bound of NCO-OMS on each client
    is same with Theorem \ref{thm:ICML2024:regret_bound:FOMD_DOKS} in which we set $M=1$.
    The regret bound on $M$ clients is $M$ times of that of a client.
    Thus we have Theorem \ref{thm:ICML2024:regret_bound:NCO-OMS}.
    \begin{theorem}[Regret Bound of NCO-OMS]
    \label{thm:ICML2024:regret_bound:NCO-OMS}
        Let the learning rate $\eta$, $\lambda_{t,i}$
        and the initial distribution ${\bf p}_1$ be same for each client $j\in[M]$.
        The values of $\eta$, $\lambda_{t,i}$ and ${\bf p}_1$
        follow Theorem \ref{thm:ICML2024:regret_bound:FOMD_DOKS}
        in which $M=1$.
        With probability at least $1-\Theta\left(M\log(KT)\right)\cdot\delta$,
        the regret of NCO-OMS satisfies:
        \begin{align*}
            \forall i\in[K],~\mathrm{Reg}_D&(\mathcal{F}_i)=
            O\left(M\times \left(B_{i,1}\sqrt{\left(1+\frac{K-J}{(J-1)}\right)T}+
            \frac{B_{i,2}(K-J)}{J-1}\ln\frac{1}{\delta}+
            B_{i,3}\sqrt{\frac{(K-J)T}{J-1}\ln\frac{1}{\delta}}\right)\right),
        \end{align*}
        where $B_{i,1}=U_iG_i+C_i\sqrt{\ln(KT)}$,
        $B_{i,2}=C+U_iG_i$ and $B_{i,3}=U_iG_i+\sqrt{CC_i}$ and $C=\max_iC_i$.

    \end{theorem}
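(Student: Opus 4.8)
The plan is to reduce the claim entirely to Theorem \ref{thm:ICML2024:regret_bound:FOMD_DOKS} evaluated at $M=1$, exploiting two facts: (i) NCO-OMS is literally $M$ independent copies of the single-client instance of FOMD-OMS, and (ii) the OMS-DecD regret is dominated by the sum of the per-client regrets. First I would verify the exact correspondence of the dynamics. Inspecting Definition \ref{def:ICML2024:trivial_approach}, the client-$j$ update of ${\bf p}^{(j)}_{t+1}$ uses only its own estimated loss $\tilde{{\bf c}}^{(j)}_t$, and the update of ${\bf w}^{(j)}_{t+1,i}$ uses only its own estimated gradient $\tilde{\nabla}^{(j)}_{t,i}$; there is no cross-client aggregation. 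This is precisely \eqref{eq:ICML2024:updating_probability} and \eqref{eq:ICML2024:updating_hypothesis} once the averages $\bar{{\bf c}}_t=\frac{1}{M}\sum_j\tilde{{\bf c}}^{(j)}_t$ and $\bar{\nabla}_{t,i}=\frac{1}{M}\sum_j\tilde{\nabla}^{(j)}_{t,i}$ collapse to their single summand at $M=1$. Since ${\bf p}_1$, the learning rates $\eta_t,\lambda_{t,i}$, and the sampling rule \eqref{eq:ICML2024:kernel_selection} are identical across clients and the client randomizations are independent, each client's trajectory is a faithful, independent copy of a single-client FOMD-OMS run.

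Next I would invoke Theorem \ref{thm:ICML2024:regret_bound:FOMD_DOKS} with $M=1$. Writing $\mathrm{Reg}^{(j)}(\mathcal{F}_i):=\sum_{t=1}^T\ell(f^{(j)}_{t,A_{t,1}}({\bf x}^{(j)}_t),y^{(j)}_t)-\min_{f\in\mathcal{F}_i}\sum_{t=1}^T\ell(f({\bf x}^{(j)}_t),y^{(j)}_t)$ for the regret of client $j$ in isolation, the theorem gives, with probability at least $1-\Theta(\log(KT))\cdot\delta$ (the failure bound $M\log T+\log(KT/M)$ of Theorem \ref{thm:ICML2024:regret_bound:FOMD_DOKS} reduces to $\Theta(\log(KT))$ at $M=1$),
\[
\mathrm{Reg}^{(j)}(\mathcal{F}_i)=O\!\left(B_{i,1}\sqrt{\left(1+\tfrac{K-J}{J-1}\right)T}+\tfrac{B_{i,2}(K-J)}{J-1}\ln\tfrac{1}{\delta}+B_{i,3}\sqrt{\tfrac{(K-J)T}{J-1}\ln\tfrac{1}{\delta}}\right),
\]
where the constant $B_{i,2}=MC+U_iG_i$ appearing in Theorem \ref{thm:ICML2024:regret_bound:FOMD_DOKS} becomes $C+U_iG_i$ at $M=1$, matching the statement; $B_{i,1}$ and $B_{i,3}$ are unchanged.

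Finally I would assemble the per-client bounds. By definition $\mathrm{Reg}_D(\mathcal{F}_i)=\sum_{j=1}^M\sum_{t=1}^T\ell(\cdot)-\min_{f\in\mathcal{F}_i}\sum_{j=1}^M\sum_{t=1}^T\ell(f,\cdot)$. The key comparison is that the single joint minimizer across all clients pays at least the sum of the per-client minima, i.e. $\min_{f}\sum_{j}\sum_{t}\ell(f,\cdot)\geq\sum_{j}\min_{f}\sum_{t}\ell(f,\cdot)$, so subtracting a larger quantity yields $\mathrm{Reg}_D(\mathcal{F}_i)\leq\sum_{j=1}^M\mathrm{Reg}^{(j)}(\mathcal{F}_i)$. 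A union bound over the $M$ clients makes all per-client bounds hold simultaneously with probability at least $1-\Theta(M\log(KT))\cdot\delta$; on that event the total regret is at most $M$ times the single-client bound, which is exactly the stated expression.

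I do not expect a deep obstacle here, as the result is a reduction rather than a new analysis; the two points that require care are precisely where the statement's structure comes from. First, the competitor direction must be the right way around: the joint minimum exceeds the sum of per-client minima, so the OMS-DecD regret is \emph{upper} bounded (not lower bounded) by the sum of per-client regrets, legitimizing the factor-$M$ inflation of the bound. Second, the union bound over the $M$ independent clients is what degrades the confidence from $1-\Theta(\log(KT))\cdot\delta$ to $1-\Theta(M\log(KT))\cdot\delta$, so one must confirm that the single-client high-probability event from Theorem \ref{thm:ICML2024:regret_bound:FOMD_DOKS} is self-contained per client and can be intersected across clients without additional coupling.
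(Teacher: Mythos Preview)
Your proposal is correct and follows essentially the same approach as the paper: reduce each client's NCO-OMS run to a single-client instance of FOMD-OMS, apply Theorem~\ref{thm:ICML2024:regret_bound:FOMD_DOKS} with $M=1$, and sum the $M$ per-client bounds via a union bound. The paper's own argument is a one-paragraph sketch stating exactly this reduction; your version is more explicit about the competitor inequality $\min_{f}\sum_{j}\sum_{t}\ell(f,\cdot)\geq\sum_{j}\min_{f}\sum_{t}\ell(f,\cdot)$ and about how the failure probability scales under the union bound, but the underlying idea is identical.
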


\section{Proof of Theorem \ref{lemma:ICML2024:regret_OMD}}

        We first state a technical lemma.
        \begin{lemma}[\cite{Boyd2004Convex}]
        \label{lemma:ICML2024:first_order_optimization_condition}
            Assuming that $\psi(\cdot):\mathcal{X}\rightarrow\mathbb{R}$
            is a convex and differential function,
            and $\mathcal{X}$ is a convex domain.
            Let $f^\ast=\mathrm{argmin}_{f\in\mathcal{X}}\psi(f)$.
            Then it must be
            $$
                \forall g\in\mathcal{X},\quad
                \langle \nabla\psi(f^\ast),g-f^\ast\rangle \geq 0.
            $$
        \end{lemma}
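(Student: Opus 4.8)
The plan is to exploit the convexity of the domain $\mathcal{X}$ to reduce the multivariate optimality statement to a one-dimensional argument along the segment joining $f^\ast$ to an arbitrary competitor $g$. First I would fix any $g\in\mathcal{X}$ and, invoking convexity of $\mathcal{X}$, observe that the whole segment $f_t := f^\ast + t(g-f^\ast) = (1-t)f^\ast + tg$ lies in $\mathcal{X}$ for every $t\in[0,1]$. This is the only place where convexity of the domain enters, and it is precisely what restricts the perturbation to the forward direction $t\geq 0$.

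Next I would define the scalar function $h(t) := \psi(f_t)$ on $[0,1]$ and record that, since $f^\ast$ minimizes $\psi$ over $\mathcal{X}$ and each $f_t\in\mathcal{X}$, we have $h(t)\geq h(0)$ for all $t\in[0,1]$. Consequently the difference quotient $(h(t)-h(0))/t$ is nonnegative for every $t\in(0,1]$, so its limit as $t\downarrow 0$, the right-hand derivative $h'(0^+)$, is also nonnegative whenever it exists. The remaining step is to identify $h'(0^+)$ with a directional derivative of $\psi$: because $\psi$ is differentiable at $f^\ast$, the chain rule gives $h'(0) = \langle \nabla\psi(f^\ast), g - f^\ast\rangle$, and combining this with $h'(0^+)\geq 0$ yields exactly the claimed inequality $\langle \nabla\psi(f^\ast), g-f^\ast\rangle \geq 0$. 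Since $g\in\mathcal{X}$ was arbitrary, the conclusion holds for all $g$.

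The main subtlety, rather than a genuine obstacle, is that the segment constraint permits only $t\geq 0$, so one obtains a one-sided derivative and hence an inequality, not the equality $\langle\nabla\psi(f^\ast),g-f^\ast\rangle = 0$ that would hold at an interior unconstrained minimum. It is worth noting that convexity of $\psi$ itself is not needed for this necessity direction; differentiability of $\psi$ together with convexity of $\mathcal{X}$ does all the work, whereas convexity of $\psi$ would instead be used to prove the condition \emph{sufficient} for global optimality. I would therefore keep the chain-rule and limit step explicit, so that existence of $h'(0^+)$ is inherited cleanly from the differentiability hypothesis, and so that the role of the domain's convexity in forcing the sign is transparent.
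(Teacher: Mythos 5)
Your proof is correct. The paper does not actually prove this lemma --- it is imported verbatim with a citation to Boyd and Vandenberghe, so there is no in-paper argument to compare against; your segment argument ($h(t)=\psi(f^\ast+t(g-f^\ast))$, nonnegative difference quotients, chain rule at $t=0^+$) is essentially the standard proof from that reference, which phrases the same idea contrapositively: if $\langle\nabla\psi(f^\ast),g-f^\ast\rangle<0$ then moving slightly along the segment would decrease $\psi$, contradicting optimality. Your side remark is also accurate and worth keeping: convexity of $\psi$ plays no role in this necessity direction (only differentiability of $\psi$ and convexity of $\mathcal{X}$ are used), whereas convexity is what makes the condition sufficient for global optimality --- and it is the sufficiency-free, necessity-only form that the paper actually invokes in the proof of Theorem \ref{lemma:ICML2024:regret_OMD}.
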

    Lemma \ref{lemma:ICML2024:first_order_optimization_condition} gives the first-order optimality condition.
    \begin{proof}[Proof of Theorem \ref{lemma:ICML2024:regret_OMD}]
        The main idea is to give an lower bound and upper bound on
        $\langle \bar{g}_t,{\bf u}_{t+1}-{\bf v}\rangle$, respectively.

        We first give an upper bound.
        \begin{align*}
            \forall {\bf v}\in\Omega\quad&\langle \bar{g}_t,{\bf u}_{t+1}-{\bf v}\rangle\\
            =&\langle \nabla_{{\bf u}_t}\psi_t({\bf u}_t)
            -\nabla_{\bar{{\bf u}}_{t+1}}\psi_t(\bar{{\bf u}}_{t+1}),{\bf u}_{t+1}-{\bf v}\rangle\\
            =&\langle \nabla_{{\bf u}_t}\psi_t({\bf u}_t)
            -\nabla_{{\bf u}_{t+1}}\psi_t({\bf u}_{t+1}),{\bf u}_{t+1}-{\bf v}\rangle
            +\langle\nabla_{{\bf u}_{t+1}}\psi_t({\bf u}_{t+1})
            -\nabla_{\bar{{\bf u}}_{t+1}}\psi_t(\bar{{\bf u}}_{t+1}),{\bf u}_{t+1}-{\bf v}\rangle\\
            =&\mathcal{D}_{\psi_t}({\bf v},{\bf u}_t)
            -\mathcal{D}_{\psi_t}({\bf v},{\bf u}_{t+1})
            -\mathcal{D}_{\psi_t}({\bf u}_{t+1},{\bf u}_t)
            -\langle\nabla_{{\bf u}_{t+1}}\mathcal{D}_{\psi_t}({\bf u}_{t+1},\bar{{\bf u}}_{t+1}),
            {\bf v}-{\bf u}_{t+1}\rangle\\
            \leq&\mathcal{D}_{\psi_t}({\bf v},{\bf u}_t)
            -\mathcal{D}_{\psi_t}({\bf v},{\bf u}_{t+1})
            -\mathcal{D}_{\psi_t}({\bf u}_{t+1},{\bf u}_t).
        \end{align*}
        The last inequality comes from Lemma \ref{lemma:ICML2024:first_order_optimization_condition}.

        Next we give a lower bound.
        \begin{align*}
            \langle \bar{g}_t,{\bf u}_{t+1}-{\bf v}\rangle
            =&
            \frac{1}{M}\sum^M_{j=1}\left[\left\langle g^{(j)}_t,{\bf u}_{t+1}-{\bf v}\right\rangle
            +\left\langle \tilde{g}^{(j)}_t-g^{(j)}_t,
            {\bf u}_{t+1}-{\bf v}\right\rangle\right]\\
            =&
            \frac{1}{M}\sum^M_{j=1}\left\langle g^{(j)}_t,{\bf u}^{(j)}_{t}-{\bf v}\right\rangle
            +\underbrace{\frac{1}{M}\sum^M_{j=1}\left\langle g^{(j)}_t,{\bf u}_{t+1}-{\bf u}_{t}\right\rangle}_{\Xi_1}
            +\underbrace{\frac{1}{M}\sum^M_{j=1}\left\langle \tilde{g}^{(j)}_t-g^{(j)}_t,
            {\bf u}_{t+1}-{\bf v}\right\rangle}_{\Xi_2},
        \end{align*}
        where ${\bf u}^{(j)}_{t}={\bf u}_{t}$.

        Next we analyze $\Xi_1$ and $\Xi_2$.

        To analyze $\Xi_1$,
        we introduce an auxiliary variable ${\bf r}_{t+1}$ defined as follows
        $$
            \nabla_{{\bf r}_{t+1}}\psi_t({\bf r}_{t+1})
            =\nabla_{{\bf u}_{t}}\psi_t({\bf u}_{t})-\frac{2}{M}\sum^M_{j=1}g^{(j)}_t.
        $$
        Then we have
        \begin{align*}
            \Xi_1
            =&\frac{1}{2}\left\langle \frac{2}{M}\sum^M_{j=1}g^{(j)}_t,{\bf u}_{t+1}-{\bf u}_{t}\right\rangle\\
            =&\frac{1}{2}\left\langle \nabla_{{\bf u}_{t}}\psi_t({\bf u}_{t})-\nabla_{{\bf r}_{t+1}}\psi_t({\bf r}_{t+1}),{\bf u}_{t+1}-{\bf u}_{t}\right\rangle\\
            =&\frac{1}{2}\left(\mathcal{D}_{\psi}({\bf u}_{t+1},{\bf r}_{t+1})
            -\mathcal{D}_{\psi}({\bf u}_{t+1},{\bf u}_{t})
            -\mathcal{D}_{\psi}({\bf u}_{t},{\bf r}_{t+1})\right)\\
            \geq&-\frac{1}{2}\left(
            \mathcal{D}_{\psi}({\bf u}_{t+1},{\bf u}_{t})
            +\mathcal{D}_{\psi}({\bf u}_{t},{\bf r}_{t+1})\right).
        \end{align*}
        Before analyzing $\Xi_2$, we introduce also an auxiliary variable ${\bf q}_{t+1}$ defined as follows
        $$
            \nabla_{{\bf q}_{t+1}}\psi_t({\bf q}_{t+1})
            =\nabla_{{\bf u}_{t}}\psi_t({\bf u}_{t})
            -\frac{2}{M}\sum^M_{j=1}\left(\tilde{g}^{(j)}_t-g^{(j)}_t\right).
        $$
        Now we can analyze $\Xi_2$.
        We have
        \begin{align*}
            \Xi_2=&
            \frac{1}{2}\left\langle \frac{2}{M}\sum^M_{j=1}\left(\tilde{g}^{(j)}_t-g^{(j)}_t\right),
            {\bf u}_{t+1}-{\bf u}_t\right\rangle
            +\underbrace{\left\langle \frac{1}{M}\sum^M_{j=1}\left(\tilde{g}^{(j)}_t-g^{(j)}_t\right),
            {\bf u}_{t}-{\bf v}\right\rangle}_{\Xi_3}\\
            =&\frac{1}{2}
            \left\langle \nabla_{{\bf u}_{t}}\psi_t({\bf u}_{t})-\nabla_{{\bf q}_{t+1}}\psi_t({\bf u}_{t+1}),
            {\bf u}_{t+1}-{\bf u}_{t}\right\rangle+\Xi_3\\
            =&\frac{1}{2}\left(\mathcal{D}_{\psi}({\bf u}_{t+1},{\bf q}_{t+1})
            -\mathcal{D}_{\psi}({\bf u}_{t+1},{\bf u}_{t})
            -\mathcal{D}_{\psi}({\bf u}_{t},{\bf q}_{t+1})\right)+\Xi_3\\
            \geq&-\frac{1}{2}\left(\mathcal{D}_{\psi}({\bf u}_{t+1},{\bf u}_{t})
            +\mathcal{D}_{\psi}({\bf u}_{t},{\bf q}_{t+1})\right)+\Xi_3.
        \end{align*}
        Combining the lower bound and upper bound gives
        \begin{align*}
            \frac{1}{M}\sum^M_{j=1}\left[\left\langle g^{(j)}_t,{\bf u}^{(j)}_{t}-{\bf v}\right\rangle\right]
            \leq\mathcal{D}_{\psi_t}({\bf v},{\bf u}_t)
            -\mathcal{D}_{\psi_t}({\bf v},{\bf u}_{t+1})
            +\Xi_3+\frac{1}{2}\mathcal{D}_{\psi}({\bf u}_{t},{\bf q}_{t+1})
            +\frac{1}{2}\mathcal{D}_{\psi}({\bf u}_{t},{\bf r}_{t+1}).
        \end{align*}
        Using the convexity of $l^{(j)}_t$,
        that is, $l^{(j)}_t({\bf u}^{(j)}_{t})-l^{(j)}_t({\bf v})
        \leq\left\langle g^{(j)}_t,{\bf p}^{(j)}_{t}-{\bf v}\right\rangle$,
        we further obtain
        \begin{align*}
            \frac{1}{M}\sum^M_{j=1}\left(l^{(j)}_t({\bf u}^{(j)}_{t})-l^{(j)}_t({\bf v})\right)
            \leq\mathcal{D}_{\psi_t}({\bf v},{\bf u}_t)
            -\mathcal{D}_{\psi_t}({\bf v},{\bf u}_{t+1})
            +\frac{1}{2}\mathcal{D}_{\psi}({\bf u}_{t},{\bf q}_{t+1})
            +\frac{1}{2}\mathcal{D}_{\psi}({\bf u}_{t},{\bf r}_{t+1})+\Xi_3,
        \end{align*}
        which concludes the proof.
    \end{proof}

\section{Proof of Theorem \ref{lemma:ICML2024:regret_OMD_2}}

    \begin{proof}
    Recalling that $\mathcal{R}=\{N,2N,3N\ldots,RN\}$
    and
    $$
        T_r=\{(r-1)N+1,(r-1)N+2,\ldots,rN\},\quad r=1,\ldots,R.
    $$
    For any batch $T_r$, $r=1,\ldots,R$,
    we define a new loss function $\bar{l}^{(j)}_{rN}(\cdot)$ at the end of this batch,
    $$
        \forall j\in[M],~\forall {\bf u}\in\Omega,\quad
        \bar{l}^{(j)}_{rN}({\bf u})=\frac{1}{N}\sum_{\tau\in T_r}l^{(j)}_{\tau}({\bf u}).
    $$
    During each batch,
    our algorithmic framework does not change the decision, i.e.,
    $$
        \forall j\in[M],~t\in T_r,\quad {\bf u}^{(j)}_{t}={\bf u}^{(j)}_{(r-1)N+1}.
    $$
    Thus the regret can be decomposed as follows,
    \begin{align*}
        \frac{1}{M}\sum^T_{t=1}\sum^M_{j=1}\left(l^{(j)}_t({\bf u}^{(j)}_{t})-l^{(j)}_t({\bf v})\right)
        =&\frac{1}{M}\sum^R_{r=1}\left[\sum_{t\in T_r}
        \sum^M_{j=1}\left(l^{(j)}_t({\bf u}^{(j)}_{(r-1)N+1})-l^{(j)}_t({\bf v})\right)\right]\\
        =&\frac{N}{M}\sum^R_{r=1}\left[
        \sum^M_{j=1}\sum_{t\in T_r}\frac{1}{N}\left(l^{(j)}_t({\bf u}^{(j)}_{(r-1)N+1})-l^{(j)}_t({\bf v})\right)\right]\\
        =&\frac{N}{M}\sum^R_{r=1}
        \sum^M_{j=1}
        \left(\bar{l}^{(j)}_{rN}({\bf u}^{(j)}_{(r-1)N+1})-\bar{l}^{(j)}_{rN}({\bf v})\right).
    \end{align*}
    Now we can use FOMD-No-LU with $T=R$ to the new loss functions
    $\{\bar{l}^{(1)}_{rN},\ldots,\bar{l}^{(M)}_{rN}\}_{r=1,\ldots,R}$,
    and use Theorem \ref{lemma:ICML2024:regret_OMD} to obtain
    \begin{align*}
        \frac{1}{M}\sum^T_{t=1}\sum^M_{j=1}\left(l^{(j)}_t({\bf u}^{(j)}_{t})-l^{(j)}_t({\bf v})\right)
        &\leq N
        \cdot\left(\sum_{t\in\mathcal{R}}\left(\mathcal{D}_{\psi_t}({\bf v},{\bf u}_t)
            -\mathcal{D}_{\psi_t}({\bf v},{\bf u}_{t+1})\right)+\frac{1}{2}\sum_{t\in\mathcal{R}}\mathcal{D}_{\psi_t}({\bf u}_{t},{\bf q}_{t+1})
            +\right.\\
            &\left.\frac{1}{2}\sum_{t\in\mathcal{R}}\mathcal{D}_{\psi_t}({\bf u}_{t},{\bf r}_{t+1})+
            \frac{1}{M}\sum_{t\in\mathcal{R}}\sum^M_{j=1}
            \left\langle \tilde{g}^{(j)}_t-g^{(j)}_t,{\bf u}_{t}-{\bf v}\right\rangle\right),
    \end{align*}
    which concludes the proof.
    \end{proof}

\section{Proof of Lemma \ref{lemma:AISTATS2020:improved:Bernstein_ineq_for martingales}}

    \begin{lemma}[Bernstein's inequality for martingale]
    \label{lemma:ALT2021:BOKS:Bernstein_ineq_for martingales}
        Let $X_1,\ldots,X_n$ be a bounded martingale difference sequence w.r.t. the filtration
        $\mathcal{H}=(\mathcal{H}_k)_{1\leq k\leq n}$ and with $\vert X_k\vert\leq a$.
        Let $Z_t=\sum^t_{k=1}X_{k}$ be the associated martingale.
        Denote the sum of the conditional variances by
        $$
            \Sigma^2_n=\sum^n_{k=1}\mathbb{E}\left[X^2_k\vert\mathcal{H}_{k-1}\right]\leq v.
        $$
        Then for all constants $a,v>0$,
        with probability at least $1-\delta$,
        $$
            \max_{t=1,\ldots,n}Z_t < \frac{2}{3}a\ln\frac{1}{\delta}+\sqrt{2v\ln\frac{1}{\delta}}.
        $$
    \end{lemma}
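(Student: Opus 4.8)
The plan is to establish Lemma~\ref{lemma:ALT2021:BOKS:Bernstein_ineq_for martingales} by the classical exponential-supermartingale (Chernoff) method for martingales, tuned by a free parameter $\lambda>0$. Because the claim controls the running maximum $\max_{t\le n}Z_t$ rather than the single endpoint $Z_n$, the decisive tool is a maximal (Ville-type) inequality applied to a nonnegative supermartingale built from $Z_t$, rather than a plain Markov bound.

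First I would control the one-step conditional moment generating function. Writing $\sigma_k^2=\mathbb{E}[X_k^2\mid\mathcal{H}_{k-1}]$ and using $\mathbb{E}[X_k\mid\mathcal{H}_{k-1}]=0$ together with $|X_k|\le a$, the higher conditional moments satisfy $\mathbb{E}[|X_k|^m\mid\mathcal{H}_{k-1}]\le \sigma_k^2\,a^{m-2}$ for $m\ge 2$. Expanding $e^{\lambda X_k}=\sum_{m\ge0}\lambda^m X_k^m/m!$, taking conditional expectations, and summing the resulting geometric series (using $m!\ge 2\cdot 3^{\,m-2}$) gives, for every $0<\lambda<3/a$, the bound $\mathbb{E}[e^{\lambda X_k}\mid\mathcal{H}_{k-1}]\le\exp\!\big(\tfrac{\lambda^2\sigma_k^2}{2(1-\lambda a/3)}\big)$. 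Setting $\Sigma_t^2=\sum_{k\le t}\sigma_k^2$ and $M_t=\exp\!\big(\lambda Z_t-\tfrac{\lambda^2\Sigma_t^2}{2(1-\lambda a/3)}\big)$, this inequality shows that $(M_t)_{t\le n}$ is a nonnegative supermartingale with $\mathbb{E}[M_0]=1$, hence $\mathbb{E}[M_t]\le1$ for all $t$.

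Next I would apply Ville's maximal inequality, $\mathbb{P}[\max_{t\le n}M_t\ge 1/\delta]\le\delta$. On the complementary event every $t$ obeys $\lambda Z_t<\ln\tfrac1\delta+\tfrac{\lambda^2\Sigma_t^2}{2(1-\lambda a/3)}$, and since $\Sigma_t^2\le\Sigma_n^2\le v$ with $v$ a deterministic constant, this yields $\max_{t\le n}Z_t<\tfrac1\lambda\ln\tfrac1\delta+\tfrac{\lambda v}{2(1-\lambda a/3)}$. It then remains to minimise the right-hand side over $\lambda\in(0,3/a)$. This optimisation is equivalent to choosing the threshold $s$ solving the quadratic $s^2=2\ln\tfrac1\delta\,(v+\tfrac{a}{3}s)$; solving and relaxing via $\sqrt{x+y}\le\sqrt{x}+\sqrt{y}$ gives $s=\tfrac{2a}{3}\ln\tfrac1\delta+\sqrt{2v\ln\tfrac1\delta}$, which is exactly the claimed bound. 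I expect the main obstacle to be the per-step MGF estimate with the precise $(1-\lambda a/3)$ denominator: the constant $1/3$ there is what ultimately produces the $\tfrac{2a}{3}\ln\tfrac1\delta$ term and the clean $\sqrt{2v\ln\tfrac1\delta}$ term, so the moment bound and the $\lambda$-optimisation must be carried out with matching constants. The supermartingale and maximal-inequality steps are then routine.
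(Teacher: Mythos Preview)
Your proof is correct and follows the standard exponential-supermartingale route for Freedman/Bernstein-type martingale inequalities. However, note that the paper does not actually supply a proof of this lemma: it treats the statement as a known result, writing only that it ``is derived from Lemma~A.8 in \cite{Cesa-Bianchi2006Prediction}.'' The lemma is used in the paper solely as a black box input to the peeling argument that establishes Lemma~\ref{lemma:AISTATS2020:improved:Bernstein_ineq_for martingales} (the new version with random $v$). So there is nothing to compare against on the paper's side; what you have written is essentially the classical argument underlying the cited reference, and the constants you track (the $1/3$ in the MGF denominator leading to the $\tfrac{2a}{3}\ln\tfrac{1}{\delta}$ term after solving the quadratic and applying $\sqrt{x+y}\le\sqrt{x}+\sqrt{y}$) are handled correctly.
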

    Note that $v$ must be a constant.
    Lemma \ref{lemma:ALT2021:BOKS:Bernstein_ineq_for martingales}
    is derived from Lemma A.8 in \cite{Cesa-Bianchi2006Prediction}.

    \begin{proof}
        Let $v\in[0,B]$ is a random variable and $B\geq 2$ is a constant.
        We use the well-known peeling technique \cite{Bartlett2005Local}.
        We divide the interval $[0,B]$ as follows
        $$
            [0,B]\subseteq\left[0,2^{-\lceil\log{B}\rceil}\right]
            \bigcup^{\lceil\log{B}\rceil}_{j=-\lceil\log{B}\rceil+1}
            \left(2^{j-1},2^j\right].
        $$
        First, we consider the case $v>2^{-\lceil\log{B}\rceil}$.
        Let
        \begin{align*}
            \epsilon=\frac{2}{3}a\ln\frac{1}{\delta}+2\sqrt{v\ln\frac{1}{\delta}}
            >\frac{2}{3}a\ln\frac{1}{\delta}+2\sqrt{2^{-1-\log{B}}\ln\frac{1}{\delta}}
            =\frac{2}{3}a\ln\frac{1}{\delta}+\sqrt{\frac{2}{B}\ln\frac{1}{\delta}}.
        \end{align*}
        We decompose the random event as follows,
        \begin{align*}
            &\mathbb{P}\left[\max_{t=1,\ldots,n}Z_t>\epsilon,
            \Sigma^2_n\leq v,v>2^{-\lceil\log{B}\rceil}\right]\\
            =&\mathbb{P}\left[\max_{t\leq n}Z_t>\epsilon,
            \Sigma^2_n\leq v,\cup^{\lceil\log{B}\rceil}_{i=-\lceil\log{B}\rceil+1}
            2^{i-1}< v\leq 2^i\right]\\
            \leq&\mathbb{P}\left[\max_{t\leq n}Z_t>\epsilon_i,
            \Sigma^2_n\leq v,\cup^{\lceil\log{B}\rceil}_{i=-\lceil\log{B}\rceil+1}
            2^{i-1}< v\leq 2^i\right]\\
            \leq&\sum^{\lceil\log{B}\rceil}_{i=-\lceil\log{B}\rceil+1}
            \mathbb{P}\left[\max_{t\leq n}Z_t>\epsilon_i,\Sigma^2_n\leq v,2^{i-1}<v\leq 2^i\right],
        \end{align*}
        where
        $\epsilon_i=\frac{2}{3}a\ln\frac{1}{\delta}
            +2\sqrt{2^{i-1}\ln\frac{1}{\delta}}$.
        For each sub-event,
        Lemma \ref{lemma:ALT2021:BOKS:Bernstein_ineq_for martingales} yields
        \begin{align*}
            \mathbb{P}\left[\max_{t\leq n}Z_t>\epsilon_i,\Sigma^2_n\leq v,2^{i-1}<v\leq 2^i\right]
            \leq \delta.
        \end{align*}
        Thus we have
        \begin{align*}
            \mathbb{P}\left[\max_{t\in[n]}Z_t>\epsilon,
            \Sigma^2_n\leq v,v>2^{-\lceil\log{B}\rceil}\right]
            \leq\sum^{\lceil\log{B}\rceil}_{i=-\lceil\log{B}\rceil+1}\delta
            \leq 2\lceil\ln{B}\rceil\delta.
        \end{align*}
        Then we consider the case
        $v\leq2^{-\lceil\log{B}\rceil}\leq\frac{1}{B}$.
        Lemma \ref{lemma:ALT2021:BOKS:Bernstein_ineq_for martingales} yields,
        with probability at least $1-\delta$,
        \begin{align*}
            \max_{t=1,\ldots,n}Z_t
            \leq& \frac{2}{3}a\ln\frac{1}{\delta}
            +\sqrt{2^{1-\lceil\log{B}\rceil}\ln\frac{1}{\delta}}
            \leq \frac{2}{3}a\ln\frac{1}{\delta}
            +\sqrt{\frac{2}{B}\ln\frac{1}{\delta}}.
        \end{align*}
        Combining the two cases,
        with probability at least $1-2\lceil\log{B}\rceil\delta$,
        \begin{align*}
            \max_{t=1,\ldots,n}Z_t\leq \frac{2a}{3}\ln\frac{1}{\delta}
            +\sqrt{\frac{2}{B}\ln\frac{1}{\delta}}+2\sqrt{v\ln\frac{1}{\delta}},
        \end{align*}
        which concludes the proof.
    \end{proof}

\section{Properties of OMD}

\subsection{OMD with the weighted negative entropy regularizer}
\label{sec:ICML24:OMD:negative_entropy}

    Let $\Omega=\Delta_K$ and $\psi_t({\bf p})=\sum^K_{i=1}\frac{C_i}{\eta_t}p_i\ln{p_i}$.
    Then we have
    $$
        \forall {\bf p}\in\mathbb{R}^K,\quad
        \nabla_{p_i}\psi_t({\bf p})=\frac{C_i}{\eta_t}\left(\ln{p_i}+1\right),\quad
        \nabla^2_{i,i}\psi_t({\bf p})=\frac{C_i}{\eta_t p_i}.
    $$
    The Bregman divergence associated with the negative entropy regularizer is
    \begin{align}
        \mathcal{D}_{\psi_t}({\bf p},{\bf q})
        =&\psi_t({\bf p})-\psi_t({{\bf q}})-\langle \nabla_{{\bf q}}\psi_t({\bf q}),{\bf p}-{\bf q}\rangle\nonumber\\
        =&\frac{1}{\eta_t}\sum^K_{i=1}C_i\left(p_i\ln\frac{p_i}{q_i} +q_i-p_i\right).
        \label{eq:ICML2024:Bregman_divergance_Tsallis}
    \end{align}
        Denote by $\bar{\bf c}_t=\frac{1}{M}\sum^M_{j=1}\tilde{{\bf c}}^{(j)}_t$.
        Recalling that the OMD is defined as follows,
        $$
            \nabla_{\bar{\mathbf{p}}_{t+1}}\psi_t(\bar{\mathbf{p}}_{t+1})
            =\nabla_{\mathbf{p}_{t}}\psi_t(\mathbf{p}_{t})-\bar{{\bf c}}_t,\quad
            \mathbf{p}_{t+1}=\mathop{\arg\min}_{\mathbf{p}\in\Delta_{K}}
            \mathcal{D}_{\psi_t}(\mathbf{p},\bar{\mathbf{p}}_{t+1}).
        $$
        Substituting into the gradient of $\psi_t$,
        the mirror updating can be simplified.
        \begin{align*}
        \forall i\in[K],\quad\bar{p}_{t+1,i} =p_{t,i}\cdot\exp\left(-\frac{\eta_t\bar{c}_{t,i}}{C_i}\right).
        \end{align*}
        Now we use the Lagrangian multiplier method to solve the projection associated with Bregman divergence.
        $$
            L({\bf p},\lambda)=\frac{1}{\eta_t}\sum^K_{i=1}C_i\left(p_i\ln\frac{p_i}{\bar{p}_{t+1,i}} +\bar{p}_{t+1,i}-p_i\right)
            +\lambda\left(\sum^K_{i=1}p_{i}-1\right)-\sum^K_{i=1}\beta_ip_i.
        $$
        The KKT conditions are
        \begin{align*}
            &\frac{\partial L}{\partial p_i}
            =C_i\frac{\ln{p_i}+1-\ln{\bar{p}_{t+1,i}}-1}{\eta_t}+\lambda=0,\\
            &\frac{\partial L}{\partial\lambda}=\left(\sum^K_{i=1}p_{i}-1\right)=0,\\
            &\beta_ip_i=0.
        \end{align*}
        Let ${\bf p}_{t+1}$, $\lambda^\ast$ and $\{\beta^\ast_i\}^K_{i=1}$ be the optimal solution.
        \begin{align*}
            p_{t+1,i}=&\bar{p}_{t+1,i}\cdot\exp\left(-\frac{\eta_t\lambda^\ast}{C_i}\right),\\ \sum^K_{i=1}\bar{p}_{t+1,i}\cdot\exp\left(-\frac{\eta_t\lambda^\ast}{C_i}\right)
            =&\sum^K_{i=1}p_{t,i}\cdot\exp\left(-\frac{\eta_t(\lambda^\ast+\bar{c}_{t,i})}{C_i}\right)=1,\quad
            \beta^\ast_i=0,i\in[K].
        \end{align*}
        Then we can obtain the solution ${\bf p}_{t+1}$, i.e.,
        \begin{equation}
        \label{eq:ICML2024:probability_updating}
            \forall i\in[K],\quad p_{t+1,i}
            =p_{t,i}\cdot\exp\left(-\frac{\eta_t(\lambda^\ast+\bar{c}_{t,i})}{C_i}\right).
        \end{equation}
        Next we prove that $\lambda^\ast$ can be found by the binary search.

        If $\lambda^\ast\geq 0$,
        then $\sum^K_{i=1}p_{t,i}\cdot\exp\left(-\frac{\eta_t(\lambda^\ast+\bar{c}_{t,i})}{C_i}\right)
        \leq \sum^K_{i=1}p_{t,i} \leq1$.\\
        If $\lambda^\ast\leq -\max_i\bar{c}_{t,i}$,
        then $\sum^K_{i=1}p_{t,i}\cdot\exp\left(-\frac{\eta_t(\lambda^\ast+\bar{c}_{t,i})}{C_i}\right)
        \geq \sum^K_{i=1}p_{t,i} \geq1$.\\
        Thus it must be $-\max_i\bar{c}_{t,i}\leq\lambda^\ast\leq 0$.
        For any $0\geq \lambda_1\geq \lambda_2 \geq -\max_i\bar{c}_{t,i}$,
        we can obtain
        $$
            \sum^K_{i=1}p_{t,i}\cdot\exp\left(-\frac{\eta_t(\lambda_1+\bar{c}_{t,i})}{C_i}\right)
            \leq \sum^K_{i=1}p_{t,i}\cdot\exp\left(-\frac{\eta_t(\lambda_2+\bar{c}_{t,i})}{C_i}\right).
        $$
        Thus $\sum^K_{i=1}p_{t,i}\cdot\exp\left(-\frac{\eta_t(\lambda^\ast+\bar{c}_{t,i})}{C_i}\right)$
        is non-increasing w.r.t. $\lambda^\ast$.\\
        We can use the binary search to find $\lambda^\ast$.

\subsection{OMD with the Euclidean regularizer}

    Let $\Omega=\mathcal{F}_i$ and $\psi_{t,i}({\bf w})=\frac{1}{2\lambda_{t,i}}\Vert{\bf w}\Vert^2_{\mathcal{F}_i}$.
    Then we have
    $$
        \forall {\bf w}\in\mathbb{R}^{d_i},\quad
        \nabla_{{\bf w}}\psi_{t,i}({\bf w})=\frac{1}{\lambda_{t,i}}{\bf w},\quad
        \nabla^2_{{\bf w}}\psi_{t,i}({\bf w})=\frac{1}{\lambda_{t,i}},\quad
        \mathcal{D}_{\psi_{t,i}}({\bf w},{\bf v})
        =\frac{1}{2\lambda_{t,i}}\Vert {\bf w}-{\bf v}\Vert^2_{\mathcal{F}_i}.
    $$
    Recalling that the OMD is defined as follows,
    $$
        \nabla_{\bar{{\bf w}}_{t+1,i}}\psi_{t,i}(\bar{{\bf w}}_{t+1,i})
        =\nabla_{{\bf w}_{t,i}}\psi_t({\bf w}_{t,i})-\bar{\nabla}_{t,i},\quad
        {\bf w}_{t+1,i}=\mathop{\arg\min}_{{\bf w}\in\mathcal{F}_i}
        \mathcal{D}_{\psi_{t,i}}({\bf w},\bar{{\bf w}}_{t+1,i}).
    $$
    The mirror updating is as follows,
    \begin{align*}
        \forall i\in[K],\quad\bar{{\bf w}}_{t+1,i} =&{\bf w}_{t,i}
        -\lambda_{t,i}\cdot \bar{\nabla}_{t,i},\\
        {\bf w}_{t+1,i}=&\min\left\{1,\frac{U_i}{\Vert\bar{{\bf w}}_{t+1,i}\Vert_{\mathcal{F}_i}}\right\}\cdot\bar{{\bf w}}_{t+1,i}.
    \end{align*}
    Thus OMD with the Euclidean regularizer is OGD \cite{Zinkevich2003Online}.

\section{Proof of Lemma \ref{lemma:ICML24:lipschitz_assumption}}

    Recalling that $c^{(j)}_{t,i}=\ell(f^{(j)}_{t,i}({\bf x}^{(j)}_t),y^{(j)}_t)$,
    in which
    \begin{align*}
        f^{(j)}_{t,i}({\bf x}^{(j)}_t)
        =\langle {\bf w}^{(j)}_{t,i},\phi_i({\bf x}^{(j)}_t)\rangle
        \leq U_ib_i.
    \end{align*}
    Since $\vert y^{(j)}_t\vert$ is uniformly bounded for all $j\in[M]$ and $t\in[T]$,
    there is a constant $C_i$ that depends on $U_i$ and $b_i$
    such that $c^{(j)}_{t,i}\leq C_i$.

    Recalling that
    $\nabla^{(j)}_{t,i}=\ell'(f^{(j)}_{t,i}({\bf x}^{(j)}_t),y^{(j)}_t)\cdot\phi_i({\bf x}^{(j)}_t)$.
    Since $\ell(f^{(j)}_{t,i}({\bf x}^{(j)}_t),y^{(j)}_t)$ can be upper bounded by $C_i$
    and $\Vert \phi_i({\bf x}^{(j)}_t)\Vert_2\leq b_i$,
    there is a constant $G_i$ that depends on $U_i$ and $b_i$
    such that $\Vert \nabla^{(j)}_{t,i}\Vert_2\leq G_i$.

\section{Proof of Theorem \ref{thm:ICML2024:regret_bound:FOMD_DOKS}}

    The regret w.r.t. any $f\in\mathcal{F}_i$ can be decomposed as follows.
    \begin{align*}
            &\sum^T_{t=1}\sum^M_{j=1}\ell\left(f^{(j)}_{t,A_{t,1}}(\mathbf{x}^{(j)}_t),y^{(j)}_t\right)
            -\sum^T_{t=1}\sum^M_{j=1}\ell\left(f(\mathbf{x}^{(j)}_t),y^{(j)}_t\right)\\
        =&\sum^T_{t=1}\sum^M_{j=1}
        \left[\ell\left(f^{(j)}_{t,A_{t,1}}(\mathbf{x}^{(j)}_t),y^{(j)}_t\right)
        -\ell\left(f^{(j)}_{t,i}(\mathbf{x}^{(j)}_t),y^{(j)}_t\right)\right]
        +\sum^T_{t=1}\sum^M_{j=1}\left[\ell\left(f^{(j)}_{t,i}(\mathbf{x}^{(j)}_t),y^{(j)}_t\right)
        -\ell\left(f(\mathbf{x}^{(j)}_t),y^{(j)}_t\right)\right]\\
        =& \underbrace{\sum^T_{t=1}\sum^M_{j=1}\left[c^{(j)}_{t,A_{t,1}}-c^{(j)}_{t,i}\right]}_{\Xi_4}+
        \underbrace{\sum^T_{t=1}\sum^M_{j=1}\left[\ell\left(f^{(j)}_{t,i}(\mathbf{x}^{(j)}_t),y^{(j)}_t\right)
        -\ell\left(f(\mathbf{x}^{(j)}_t),y^{(j)}_t\right)\right]}_{\Xi_5}.
    \end{align*}

    Next we separately give an upper bound on $\Xi_4$ and $\Xi_5$.

\subsection{Analyzing $\Xi_4$}

    We start with Lemma \ref{lemma:ICML2024:regret_OMD} and instantiate some notations.
    \begin{align*}
        &\Omega=\Delta_K,\quad {\bf v}={\bf v}\in\Delta_K,\\
        \forall t\in[T],\quad &g^{(j)}_t={\bf c}^{(j)}_t,\quad
        \tilde{g}^{(j)}_t=\tilde{{\bf c}}^{(j)}_t,\quad
        \bar{g}_t=\bar{{\bf c}}_t,\quad
        {\bf u}^{(j)}_t={\bf p}^{(j)}_t,\quad
        {\bf u}_t={{\bf p}}_t,\\
        &l^{j}_{t}({\bf u}^j_t)=\left\langle {\bf c}^{(j)}_t,{\bf p}^{(j)}_t\right\rangle,\quad
        l^{j}_{t}({\bf v})=\left\langle{\bf c}^{(j)}_t,{\bf v}\right\rangle.
    \end{align*}
    Lemma \ref{lemma:ICML2024:regret_OMD} gives
        \begin{equation}
        \label{eq:ICML2024:general_regret:LimA}
        \begin{split}
            \forall {\bf v}\in\Delta_K,\quad \frac{1}{M}&
                \sum^T_{t=1}\sum^M_{j=1}\left\langle {\bf c}^{(j)}_t,{\bf p}^{(j)}_{t}-{\bf v}\right\rangle\\
            \leq&\sum^T_{t=1}\left(\mathcal{D}_{\psi_t}({\bf v},{\bf p}_t)
            -\mathcal{D}_{\psi_t}({\bf v},{\bf p}_{t+1})\right)+
            \frac{1}{2}\sum^T_{t=1}\mathcal{D}_{\psi_t}({\bf p}_{t},{\bf q}_{t+1})
            +\frac{1}{2}\sum^T_{t=1}\mathcal{D}_{\psi_t}({\bf p}_{t},{\bf r}_{t+1})+\\
            &\frac{1}{M}\sum^T_{t=1}\sum^M_{j=1}
            \left\langle \tilde{{\bf c}}^{(j)}_t-{\bf c}^{(j)}_t,{\bf p}_{t}-{\bf v}\right\rangle.
        \end{split}
        \end{equation}
        In \eqref{eq:ICML2024:OMD_auxiliary:q},
        we redefine $\Omega=\Delta_K$ and $\psi_t({\bf p})=\sum^K_{i=1}\frac{C_i}{\eta_t}p_i\ln{p_i}$,
        and in \eqref{eq:ICML2024:OMD_auxiliary:r},
        we redefine $\Omega=\Delta_K$ and $\psi_t({\bf p})=\sum^K_{i=1}\frac{2C_i}{\eta_t}p_i\ln{p_i}$.
        Using the results in Section \ref{sec:ICML24:OMD:negative_entropy}, we can obtain
        \begin{equation}
        \label{eq:ICML2024:auxiliary_variable_mirror_updating}
        \begin{split}
            \forall i\in[K],\quad
            q_{t+1,i} =&p_{t,i}\exp\left(-\frac{\eta_t\delta_{t,i}}{C_i}\right),\quad
            \delta_{t,i}=\frac{2}{M}\sum^M_{j=1}\left(\tilde{c}^{(j)}_{t,i}-c^{(j)}_{t,i}\right),\\
            r_{t+1,i} =&p_{t,i}\exp\left(-\frac{\eta_t\hat{c}_{t,i}}{2C_i}\right),\quad
            \hat{c}_{t,i}=\frac{2}{M}\sum^M_{j=1}c^{(j)}_{t,i}.
        \end{split}
        \end{equation}
         It can be verified that $\delta_{t,i}\in[-2C_i,2\frac{K-J}{J-1}C_i]$
         and $\hat{c}_{t,i}\in[0,2C_i]$.

        Recalling the definition of learning rate $\eta_t$ in Theorem \ref{thm:ICML2024:regret_bound:FOMD_DOKS}.
        We can obtain $\frac{\eta_t\delta_{t,i}}{C_i}\geq -1$ and $\frac{\eta_t\hat{c}_{t,i}}{2C_i}\geq -1$.

        Next we use \eqref{eq:ICML2024:Bregman_divergance_Tsallis}
        and \eqref{eq:ICML2024:auxiliary_variable_mirror_updating}
        to analyze the following two Bregman divergences.
        \begin{align*}
            \sum^T_{t=1}\mathcal{D}_{\psi_t}({\bf p}_{t},{\bf r}_{t+1})
            =&\sum^T_{t=1}\frac{1}{\eta_t}\sum^K_{i=1}2C_i\cdot
            \left(p_{t,i}\ln\frac{p_{t,i}}{r_{t+1,i}} +r_{t+1,i}-p_{t,i}\right)\\
            =&\sum^T_{t=1}\frac{1}{\eta_t}\sum^K_{i=1}2C_i\cdot\left(\frac{p_{t,i}\eta_t\hat{c}_{t,i}}{2C_i} +p_{t,i}\cdot\exp\left(-\frac{\eta_t\hat{c}_{t,i}}{2C_i}\right)-p_{t,i}\right)\\
            \leq&\sum^T_{t=1}\frac{1}{\eta_t}\sum^K_{i=1}2C_i\cdot\left(\frac{p_{t,i}\eta_t\hat{c}_{t,i}}{2C_i} +p_{t,i}\cdot\left(1-\frac{\eta_t\hat{c}_{t,i}}{2C_i}
            +\left(\frac{\eta_t\hat{c}_{t,i}}{2C_i}\right)^2\right)-p_{t,i}\right)\\
            \leq&\sum^T_{t=1}\eta_t\sum^K_{i=1}\frac{p_{t,i}}{2C_i}
            \left(\frac{2}{M}\sum^M_{j=1}c^{(j)}_{t,i}\right)^2\\
            \leq&2\sum^T_{t=1}\eta_t\cdot \frac{1}{M}\sum^M_{j=1}\sum^K_{i=1}p_{t,i}c^{(j)}_{t,i},
        \end{align*}
        and
        \begin{align*}
            \sum^T_{t=1}\mathcal{D}_{\psi_t}({\bf p}_{t},{\bf q}_{t+1})
            =&\sum^T_{t=1}\frac{1}{\eta_t}\sum^K_{i=1}C_i\left(p_{t,i}\ln\frac{p_{t,i}}{q_{t+1,i}} +q_{t+1,i}-p_{t,i}\right)\\
            =&\sum^T_{t=1}\frac{1}{\eta_t}\sum^K_{i=1}C_i\left(\frac{p_{t,i}\eta_t\delta_{t,i}}{C_i} +p_{t,i}\cdot\exp\left(-\frac{\eta_t\delta_{t,i}}{C_i}\right)-p_{t,i}\right)\\
            \leq&4\sum^T_{t=1}\eta_t\sum^K_{i=1}\frac{p_{t,i}}{C_i}\left(\frac{1}{M}\sum^M_{j=1}
            \left(\tilde{c}^{(j)}_{t,i}-c^{(j)}_{t,i}\right)\right)^2,
        \end{align*}
        in where we use the fact $\exp(-x)\leq 1-x+x^2$ for all $x\geq -1$.

        Substituting the two upper bounds into \eqref{eq:ICML2024:general_regret:LimA}
        gives
        \begin{align*}
            \forall {\bf v}\in\Delta_K,\quad& \frac{1}{M}
                \underbrace{\sum^T_{t=1}\sum^M_{j=1}\left\langle {\bf c}^{(j)}_t,{\bf p}^{(j)}_{t}-{\bf v}\right\rangle}
                _{\Xi_{4,1}}\\
            \leq&\underbrace{\sum^T_{t=1}\left(\mathcal{D}_{\psi_t}({\bf v},{\bf p}_t)
            -\mathcal{D}_{\psi_t}({\bf v},{\bf p}_{t+1})\right)}_{\Xi_{4,2}}+
            2\underbrace{\sum^T_{t=1}\eta\sum^K_{i=1}\frac{p_{t,i}}{C_i}\left(\frac{1}{M}\sum^M_{j=1}
            \left(\tilde{c}^{(j)}_{t,i}-c^{(j)}_{t,i}\right)\right)^2}_{\Xi_{4,3}}
            +\sum^T_{t=1}\frac{\eta}{M}\sum^M_{j=1}\sum^K_{i=1}p_{t,i}c^{(j)}_{t,i}+\\
            &\underbrace{\frac{1}{M}\sum^T_{t=1}\sum^M_{j=1}
            \left\langle \tilde{{\bf c}}^{(j)}_t-{\bf c}^{(j)}_t,{\bf p}_{t}-{\bf v}\right\rangle}_{\Xi_{4,4}}.
        \end{align*}

\subsection*{Bounding $\Xi_{4,1}$}

        We define a random variable $X_t$ as follows,
        $$
            X_t=c^{(j)}_{t,A_{t,1}}-\left\langle {\bf c}^{(j)}_t,{\bf p}^{(j)}_{t}\right\rangle.
        $$
        Let $H_t=\{O^{(1)}_t,\ldots,O^{(M)}_t\}$.
        Then we have
        $\mathbb{E}[X_t\vert H_{[t-1]}]=0$ and $\vert X_t\vert\leq C$
        where $C=\max_iC_i$.
        Thus $X_{[T]}$ is a bounded martingale difference sequence w.r.t. the filtration
        $H_{[T]}$.
        The sum of condition variance
        $$
            \sum^T_{t=1}\mathbb{E}\left[\left\vert X_t\right\vert^2\vert H_{[t-1]}\right]
            \leq\sum^T_{t=1}\mathbb{E}\left[\left\vert c^{(j)}_{t,A_{t,1}}\right\vert^2\vert H_{[t-1]}\right]
            \leq C\cdot\sum^T_{t=1}\left\langle {\bf c}^{(j)}_t,{\bf p}^{(j)}_{t}\right\rangle
            \leq C^2T.
        $$
        The upper bound is a random variable.
        Lemma \ref{lemma:AISTATS2020:improved:Bernstein_ineq_for martingales} yields,
        with probability at least $1-M\log(C^2T)\delta$,
        $$
            \Xi_{4,1}\geq \sum^T_{t=1}\sum^M_{j=1}c^{(j)}_{t,A_{t,1}}
            -\sum^T_{t=1}\sum^M_{j=1}\left\langle {\bf c}^{(j)}_t,{\bf v}\right\rangle
            -\frac{2CM}{3}\ln\frac{1}{\delta}
            -2\sqrt{CM\cdot
            \sum^M_{j=1}\sum^T_{t=1}\left\langle {\bf c}^{(j)}_t,{\bf p}^{(j)}_{t}\right\rangle\cdot\ln\frac{1}{\delta}},
        $$
        where the fail probability comes from the union-of-events.

\subsection*{Bounding $\Xi_{4,2}$}

    According to \eqref{eq:ICML2024:Bregman_divergance_Tsallis},
    we have
        $$
             \Xi_{4,2}
            \leq\mathcal{D}_{\psi_1}({\bf v},{\bf p}_1)
            =\frac{1}{\eta}\sum^K_{i=1}C_i\left(v_i\ln\frac{v_i}{p_{1,i}} +p_{1,i}-v_i\right)
            \leq \frac{C_i}{\eta}\ln{\frac{1}{p_{1,i}}}
            +\frac{1}{\eta}\sum^K_{k=1}C_kp_{1,k}-\frac{C_i}{\eta}.
        $$

\subsection*{Bounding $\Xi_{4,3}$}

    We define a random variable $X_t$ as follows,
    $$
        X_t=
        \sum^K_{i=1}\frac{p_{t,i}}{C_i}\left(\frac{1}{M}\sum^M_{j=1}
            \left(\tilde{c}^{(j)}_{t,i}-c^{(j)}_{t,i}\right)\right)^2
            -\mathbb{E}_t\left[\sum^K_{i=1}\frac{p_{t,i}}{C_i}\left(\frac{1}{M}\sum^M_{j=1}
            \left(\tilde{c}^{(j)}_{t,i}-c^{(j)}_{t,i}\right)\right)^2
            \right].
    $$
    It can be verified that
    $\mathbb{E}[X_t\vert H_{[t-1]}]=0$
    and
    $\vert X_t\vert \leq \frac{K-J}{J-1}C$.
    Next we upper bound the sum of condition variance.
    \begin{align*}
        \sum^T_{t=1}\mathbb{E}_t[X^2_t]
        \leq&\sum^T_{t=1}\mathbb{E}_t\left[\left(\sum^K_{i=1}\frac{p_{t,i}}{C_i}\left(\frac{1}{M}\sum^M_{j=1}
            \left(\tilde{c}^{(j)}_{t,i}-c^{(j)}_{t,i}\right)\right)^2\right)^2\right]\\
        \leq&\sum^T_{t=1}\mathbb{E}_t\left[\sum^K_{i=1}p_{t,i}\left(\frac{1}{C_i}\left(\frac{1}{M}\sum^M_{j=1}
            \left(\tilde{c}^{(j)}_{t,i}-c^{(j)}_{t,i}\right)\right)^2\right)^2\right]\\
        \leq&\frac{(K-J)^2}{(J-1)^2}\sum^T_{t=1}\mathbb{E}_t\left[\sum^K_{i=1}p_{t,i}\left(\frac{1}{M}\sum^M_{j=1}
            \left(\tilde{c}^{(j)}_{t,i}-c^{(j)}_{t,i}\right)\right)^2\right]\\
        =&\frac{(K-J)^2}{(J-1)^2}\frac{1}{M^2}\sum^T_{t=1}\sum^K_{i=1}p_{t,i}\mathbb{E}_t\left[\sum^M_{j=1}
        \left(\tilde{c}^{(j)}_{t,i}-c^{(j)}_{t,i}\right)^2\right]\\
            \leq&\frac{(K-J)^3}{(J-1)^3M^2}C\cdot\sum^T_{t=1}\sum^M_{j=1}
            \left\langle {\bf c}^{(j)}_t,{\bf p}^{(j)}_{t}\right\rangle \leq\frac{(K-J)^3}{(J-1)^3M}C^2T,
    \end{align*}
    where we use the fact $\tilde{c}^{(j)}_{t,i}=\frac{c^{(j)}_{t,i}}{\mathbb{P}\left[i\in O^{(j)}_t\right]}
    \geq \frac{K-1}{J-1}c^{(j)}_{t,i}$.
    Lemma \ref{lemma:AISTATS2020:improved:Bernstein_ineq_for martingales} yields,
    with probability at least $1-\log(C^2K^3T/M)\delta$,
    $$
        \Xi_{4,3}\leq
        \eta\left(\frac{K-J}{(J-1)M^2}\sum^T_{t=1}\sum^M_{j=1}\left\langle {\bf c}^{(j)}_t,{\bf p}^{(j)}_{t}\right\rangle
        +\frac{2C(K-J)}{3(J-1)}\ln\frac{1}{\delta}
        +2\sqrt{\frac{(K-J)^3}{(J-1)^3M^2}C\cdot\sum^T_{t=1}\sum^M_{j=1}
        \left\langle {\bf c}^{(j)}_t,{\bf p}^{(j)}_{t}\right\rangle\cdot\ln\frac{1}{\delta}}\right).
    $$

\subsection*{Bounding $\Xi_{4,4}$}

        We define a random variable $X_t$ as follows,
        $$
            X_t=
            \left\langle \frac{1}{M}\sum^M_{j=1}
            \left(\tilde{{\bf c}}^{(j)}_t-{\bf c}^{(j)}_t\right),{\bf p}_{t}-{\bf v}\right\rangle
            =\frac{1}{M}\sum^M_{j=1}\left(\sum^K_{i=1}(p_{t,i}-v_i)
            \left(\tilde{c}^{(j)}_{t,i}-c^{(j)}_{t,i}\right)\right).
        $$
        $\{X_t\}^T_{t=1}$ is a bounded martingale difference sequence
        and $\vert X_t\vert\leq \frac{K-J}{J-1}C$.
        We further have
        \begin{align*}
            \sum^T_{t=1}\mathbb{E}_t[ X^2_t]
            =&\frac{1}{M^2}\sum^T_{t=1}\mathbb{E}_t\left[
            \sum^M_{j=1}\left(\sum^K_{i=1}(p_{t,i}-v_i)\left(\tilde{c}^{(j)}_{t,i}-c^{(j)}_{t,i}\right)
            \right)^2\right]+\\
            &\frac{1}{M^2}\sum^T_{t=1}\mathbb{E}_t\left[
            \sum_{j\neq r}
            \left(\sum^K_{i=1}(p_{t,i}-v_i)
            \left(\tilde{c}^{(j)}_{t,i}-c^{(j)}_{t,i}\right)\right)
            \left(\sum^K_{i=1}(p_{t,i}-v_i)
            \left(\tilde{c}^{(r)}_{t,i}-c^{(r)}_{t,i}\right)\right)\right]\\
            =& \frac{1}{M^2}\sum^T_{t=1}\sum^M_{j=1}\mathbb{E}_t\left[
            \left(\sum^K_{i=1}(p_{t,i}-v_i)\left(\tilde{c}^{(j)}_{t,i}-c^{(j)}_{t,i}\right)
            \right)^2\right]\\
            =& \frac{2}{M^2}\sum^T_{t=1}\sum^M_{j=1}\mathbb{E}_t\left[
            \left(\sum^K_{i=1}p_{t,i}\left(\tilde{c}^{(j)}_{t,i}-c^{(j)}_{t,i}\right)
            \right)^2\right]
            +\frac{2}{M^2}\sum^T_{t=1}\sum^M_{j=1}\mathbb{E}_t\left[
            \left(\sum^K_{i=1}v_i\left(\tilde{c}^{(j)}_{t,i}-c^{(j)}_{t,i}\right)
            \right)^2\right]\\
            \leq& \frac{2}{M^2}\sum^T_{t=1}\sum^M_{j=1}\mathbb{E}_t\left[
            \sum^K_{i=1}p_{t,i}\left(\tilde{c}^{(j)}_{t,i}-c^{(j)}_{t,i}\right)^2\right]
            +\frac{2}{M^2}\sum^T_{t=1}\sum^M_{j=1}\mathbb{E}_t\left[
            \sum^K_{i=1}v_i\left(\tilde{c}^{(j)}_{t,i}-c^{(j)}_{t,i}\right)^2\right]\\
            \leq& 2\frac{K-J}{(J-1)M^2}C\cdot\sum^T_{t=1}\sum^M_{j=1}\left\langle {\bf c}^{(j)}_t,{\bf p}^{(j)}_{t}\right\rangle
            +2\frac{K-J}{(J-1)M^2}\cdot\sum^T_{t=1}\sum^M_{j=1}
            \left\langle {\bf c}^{(j)}_t\otimes{\bf c}^{(j)}_t,{\bf v}\right\rangle
            \leq\frac{4C^2KT}{M},
        \end{align*}
        where $\left\langle {\bf c}^{(j)}_t\otimes{\bf c}^{(j)}_t,{\bf v}\right\rangle=\sum^K_{i=1}v_i(c^{(j)}_{t,i})^2$.

        With probability at least $1-\log(4C^2KT/M)\delta$,
        $$
            \Xi_{4,4}\leq \frac{2C(K-J)}{3(J-1)}\ln\frac{1}{\delta}+2\sqrt{2\frac{K-J}{(J-1)M^2}\ln\frac{1}{\delta}}\cdot\sqrt{C
            \sum^T_{t=1}\sum^M_{j=1}\left\langle {\bf c}^{(j)}_t,{\bf p}^{(j)}_{t}\right\rangle
            +\sum^T_{t=1}\sum^M_{j=1}\left\langle {\bf c}^{(j)}_t\otimes{\bf c}^{(j)}_t,{\bf v}\right\rangle}.
        $$
        For simplicity,
        we introduce some new notations
        $$
            g_{K,J}=\frac{K-J}{J-1},\quad
            \bar{L}_{T}=\sum^T_{t=1}\sum^M_{j=1}\left\langle {\bf c}^{(j)}_t,{\bf p}^{(j)}_{t}\right\rangle,\quad
            \bar{L}_{T}({\bf v})=\sum^T_{t=1}\sum^M_{j=1}\left\langle {\bf c}^{(j)}_t,{\bf v}\right\rangle,\quad
            \tilde{L}_{T}({\bf v})=\sum^T_{t=1}\sum^M_{j=1}\left\langle {\bf c}^{(j)}_t\otimes{\bf c}^{(j)}_t,{\bf v}\right\rangle.
        $$

\subsection*{Combining all}

    Combining all gives,
    with probability at least $1-\Theta(\log(CKT/M))\cdot\delta$,
    \begin{align*}
        &\bar{L}_{T}-\bar{L}_{T}({\bf v})\\
        \leq&\frac{M}{\eta}\left(C_i\ln{\frac{1}{p_{1,i}}}+\sum^K_{k=1}C_kp_{1,k}-C_i\right)
        +\eta\left(\left(1+\frac{g_{K,J}}{M}\right)\bar{L}_{T}
        +\frac{2MC}{3}g_{K,J}\ln\frac{1}{\delta}
        +2\sqrt{g^3_{K,J}C\cdot\bar{L}_{T}\cdot\ln\frac{1}{\delta}}\right)+\\
        &\frac{2MCg_{K,J}}{3}\ln\frac{1}{\delta}+2\sqrt{2g_{K,J}\ln\frac{1}{\delta}}
        \cdot\sqrt{C\bar{L}_{T}+\tilde{L}_{T}({\bf v})}.
    \end{align*}
    Rearranging terms gives
    \begin{align*}
        &\left(1-\eta\left(1+\frac{g_{K,J}}{M}\right)\right)\bar{L}_{T}
        -\left(2\eta\sqrt{g^3_{K,J}C\ln\frac{1}{\delta}}+2\sqrt{2g_{K,J}C\ln\frac{1}{\delta}}\right)\sqrt{\bar{L}_{T}}\\
        \leq&\bar{L}_{T}({\bf v})+\frac{M}{\eta}\left(C_i\ln{\frac{1}{p_{1,i}}}+\sum^K_{k=1}C_kp_{1,k}-C_i\right)
        +\frac{4MCg_{K,J}}{3}\ln\frac{1}{\delta}
        +2\sqrt{2g_{K,J}\ln\frac{1}{\delta}}
        \cdot\sqrt{\tilde{L}_{T}({\bf v})}.
    \end{align*}
    Recalling that, the solution of the following inequality
    $$
        x-a\sqrt{x}-b\leq 0, x>0, a>0, b>0,
    $$
    is $x\leq a^2+b+a\sqrt{b}$.
    Solving for $\bar{L}_{T}$ gives
    \begin{align*}
        &\bar{L}_{T}-\bar{L}_{T}({\bf v})
        \leq \frac{\left(2\eta\sqrt{g^3_{K,J}C\ln\frac{1}{\delta}}+2\sqrt{2g_{K,J}C\ln\frac{1}{\delta}}\right)^2}
        {\left(1-\eta\left(1+\frac{g_{K,J}}{M}\right)\right)^2}+
        \frac{2\eta\sqrt{g^3_{K,J}C\ln\frac{1}{\delta}}+2\sqrt{2g_{K,J}C\ln\frac{1}{\delta}}}
        {\left(1-\eta\left(1+\frac{g_{K,J}}{M}\right)\right)^{\frac{3}{2}}}\cdot\\
        &\sqrt{\bar{L}_{T}({\bf v})+\frac{M}{\eta}\left(C_i\ln{\frac{1}{p_{1,i}}}+\sum^K_{k=1}C_kp_{1,k}-C_i\right)
        +\frac{4MCg_{K,J}}{3}\ln\frac{1}{\delta}+2\sqrt{2g_{K,J}\ln\frac{1}{\delta}}\cdot\sqrt{\tilde{L}_{T}({\bf v})}}+\\
        &\frac{\eta\left(1+\frac{g_{K,J}}{M}\right)}{{1-\eta\left(1+\frac{g_{K,J}}{M}\right)}}\bar{L}_{T}({\bf v})+
        \frac{\frac{M}{\eta}\left(C_i\ln{\frac{1}{p_{1,i}}}+\sum^K_{k=1}C_kp_{1,k}-C_i\right)
        +\frac{4MCg_{K,J}}{3}\ln\frac{1}{\delta}+2\sqrt{2g_{K,J}\ln\frac{1}{\delta}}\cdot\sqrt{\tilde{L}_{T}({\bf v})}}
        {1-\eta\left(1+\frac{g_{K,J}}{M}\right)}.
    \end{align*}
    Denote by $A_m=\mathrm{argmin}_{i\in[K]}C_i$.
    Let the learning rate and initial distribution ${\bf p}_1$ satisfy
    \begin{align*}
        \eta=&\frac{\sqrt{\ln{(KT)}}}{2\sqrt{\left(1+\frac{K-J}{(J-1)M}\right)T}}\wedge\frac{J-1}{2(K-J)},\\
        p_{1,k}=&\left(1-\frac{\sqrt{K}}{\sqrt{T}}\right)\frac{1}{\vert A_m\vert}+\frac{1}{\sqrt{KT}}, k\in A_m,\quad
        p_{1,j}=\frac{1}{\sqrt{KT}},j\neq A_m.
    \end{align*}
    Then we have
    \begin{align*}
        &C_i\ln{\frac{1}{p_{1,i}}}+\sum^K_{k=1}C_kp_{1,k}-C_i\\
        \leq& C_i\ln(\sqrt{KT})
        +\frac{C\cdot(K-\vert A_m\vert)}{\sqrt{KT}}
        +\min_iC_i\cdot\vert A_m\vert
        \cdot\left(\left(1-\frac{\sqrt{K}}{\sqrt{T}}\right)\frac{1}{\vert A_m\vert}+\frac{1}{\sqrt{KT}}\right)
        -C_i\\
        \leq&C_i\ln(\sqrt{KT})+\frac{C\sqrt{K}}{\sqrt{T}}.
    \end{align*}
    We further simplify $\bar{L}_{T}-\bar{L}_{T}({\bf v})$.
    \begin{align*}
        &\bar{L}_{T}-\bar{L}_{T}({\bf v})
        \leq 64g_{K,J}C\ln\frac{1}{\delta}+
        \\
        &11\sqrt{g_{K,J}C\ln\frac{1}{\delta}}\cdot\sqrt{C_iTM+\frac{M}{\eta}\left(C_i\ln(\sqrt{KT})
        +\frac{C\sqrt{K}}{\sqrt{T}}\right)
        +\frac{4MCg_{K,J}}{3}\ln\frac{1}{\delta}+2\sqrt{2g_{K,J}\ln\frac{1}{\delta}}\cdot\sqrt{\tilde{L}_{T}({\bf v})}}+\\
        &2\eta\left(1+\frac{g_{K,J}}{M}\right)C_iT+
        \frac{\frac{M}{\eta}\left(C_i\ln(\sqrt{KT})+\frac{C\sqrt{K}}{\sqrt{T}}\right)
        +\frac{4MCg_{K,J}}{3}\ln\frac{1}{\delta}+2C_i\sqrt{2g_{K,J}\ln\frac{1}{\delta}}\cdot\sqrt{TM}}{\frac{1}{2}}\\
        \leq&\underbrace{(64+3M)g_{K,J}C\ln\frac{1}{\delta}
        +17\sqrt{Mg_{K,J}CC_iT\ln\frac{1}{\delta}}
        +\frac{4}{\sqrt{2}}C_iM\sqrt{\left(1+\frac{K-J}{(J-1)M}\right)T\ln{(KT)}}}_{\Xi_{4,5}},
    \end{align*}
    in which we omit the lower order terms such as $O(T^{\frac{1}{4}})$
    and $O(\sqrt{g_{K,J}C\ln\frac{1}{\delta}})$.

    Finally, using the upper bound on $\Xi_{4,1}$ gives,
    with probability at least $1-\Theta(M\log(CT)+\log(CKT/M))\cdot\delta$,
    \begin{align*}
        \Xi_4
        \leq&
        \bar{L}_{T}-\bar{L}_{T}({\bf v})+\frac{2CM}{3}\ln\frac{1}{\delta}
        +2\sqrt{CM\cdot
        \sum^M_{j=1}\sum^T_{t=1}\left\langle {\bf c}^{(j)}_t,{\bf p}^{(j)}_{t}\right\rangle\cdot\ln\frac{1}{\delta}}\\
        \leq&
        \bar{L}_{T}-\bar{L}_{T}({\bf v})+\frac{2CM}{3}\ln\frac{1}{\delta}
        +2\sqrt{CM\cdot\left(\bar{L}_{T}({\bf v})+\Xi_{4,5}\right)\cdot\ln\frac{1}{\delta}}\\
        \leq&(64+3M)g_{K,J}C\ln\frac{1}{\delta}
            +17\sqrt{Mg_{K,J}CC_iT\ln\frac{1}{\delta}}
            +\frac{4}{\sqrt{2}}C_iM\sqrt{\left(1+\frac{K-J}{(J-1)M}\right)T\ln{(KT)}}+\\
            &2M\sqrt{CC_iT\ln\frac{1}{\delta}},
    \end{align*}
    where we omit $O(\sqrt{CM\Xi_{4,5}\cdot\ln\frac{1}{\delta}})$ which is a lower order term.

\subsection{Analyzing $\Xi_5$}

    We also start with Lemma \ref{lemma:ICML2024:regret_OMD}.

    We just a fixed $i\in\mathcal{F}_i$.
    We instantiate some notations.
    \begin{align*}
        &\Omega=\mathcal{F}_i,\quad {\bf v}={\bf w}\in\mathcal{F}_i,\\
        \forall t\in[T],\quad &g^{(j)}_t=\nabla^{(j)}_{t,i},\quad
        \tilde{g}^{(j)}_t=\tilde{\nabla}^{(j)}_{t,i},\quad
        \bar{g}_t=\bar{\nabla}_{t,i},\quad
        {\bf u}^{(j)}_t={\bf w}^{(j)}_{t,i},\quad
        {\bf u}_t={{\bf w}}_t,\\
        &l^{j}_{t}({\bf u}^j_t)=\ell\left(f^{(j)}_{t,i}(\mathbf{x}^{(j)}_t),y^{(j)}_t\right),\quad
        l^{j}_{t}({\bf v})=\ell\left(f(\mathbf{x}^{(j)}_t),y^{(j)}_t\right).
    \end{align*}
    Lemma \ref{lemma:ICML2024:regret_OMD} gives
        \begin{align*}
            \forall {\bf w}\in\mathcal{F}_i,\quad \frac{1}{M}&\sum^T_{t=1}\sum^M_{j=1}
            \left[\ell\left(f^{(j)}_{t,i}(\mathbf{x}^{(j)}_t),y^{(j)}_t\right)
                -\ell\left(f(\mathbf{x}^{(j)}_t),y^{(j)}_t\right)\right]\\
            \leq&\sum^T_{t=1}\left(\mathcal{D}_{\psi_{t,i}}({\bf w},{\bf w}_t)
            -\mathcal{D}_{\psi_t}({\bf w},{\bf w}_{t+1})\right)+
            \frac{1}{2}\sum^T_{t=1}\mathcal{D}_{\psi_{t,i}}({\bf w}_{t},{\bf q}_{t+1})
            +\frac{1}{2}\sum^T_{t=1}\mathcal{D}_{\psi_{t,i}}({\bf w}_{t},{\bf r}_{t+1})+\\
            &\frac{1}{M}\sum^T_{t=1}\sum^M_{j=1}
            \left\langle \tilde{\nabla}^{(j)}_{t,i}-\nabla^{(j)}_{t,i},{\bf w}_{t}-{\bf w}\right\rangle,
        \end{align*}
        where the Bregman divergence is
        $$
            \mathcal{D}_{\psi_{t,i}}({\bf w},{\bf v})=\frac{1}{2\lambda_{t,i}}\Vert{\bf w}-{\bf v}\Vert^2_2.
        $$
        Besides, \eqref{eq:ICML2024:OMD_auxiliary:q} and \eqref{eq:ICML2024:OMD_auxiliary:r}
        can be instantiated as follows
        \begin{align*}
            {\bf q}_{t+1}
            =&{\bf w}_{t}
            -\lambda_{t,i}\cdot\frac{2}{M}\sum^M_{j=1}\left(\tilde{\nabla}^{(j)}_{t,i}-{\nabla}^{(j)}_{t,i}\right),\\
            {\bf r}_{t+1}
            =&{\bf w}_{t}
            -\lambda_{t,i}\cdot\frac{2}{M}\sum^M_{j=1}{\nabla}^{(j)}_{t,i}.
        \end{align*}
        Thus we have,
        \begin{align*}
            \forall {\bf w}\in\mathcal{F}_i,\quad&\frac{1}{M}\Xi_5\\
            \leq&\sum^T_{t=1}\frac{\Vert{\bf w}-{\bf w}_t\Vert^2_2-\Vert{\bf w}-{\bf w}_{t+1}\Vert^2_2}{2\lambda_{t,i}}+
            2\sum^T_{t=1}\lambda_{t,i}\left\Vert\frac{1}{M}\sum^M_{j=1}\left(\tilde{\nabla}^{(j)}_{t,i}-{\nabla}^{(j)}_{t,i}\right)\right\Vert^2_2+\\
            &2\sum^T_{t=1}\lambda_{t,i}\left\Vert\frac{1}{M}\sum^M_{j=1}{\nabla}^{(j)}_{t,i}\right\Vert^2_2+
            \frac{1}{M}\sum^T_{t=1}\sum^M_{j=1}
            \left\langle \tilde{\nabla}^{(j)}_{t,i}-\nabla^{(j)}_{t,i},{\bf w}_{t}-{\bf w}\right\rangle\\
            \leq&\frac{2U^2_i}{\lambda_{T,i}}
            +2G^2_i\sum^T_{t=1}\lambda_{t,i}
            +2\underbrace{\sum^T_{t=1}\lambda_{t,i}\left\Vert\frac{1}{M}\sum^M_{j=1}\left(\tilde{\nabla}^{(j)}_{t,i}-{\nabla}^{(j)}_{t,i}\right)\right\Vert^2_2}_{\Xi_{5,1}}
            +
            \underbrace{\frac{1}{M}\sum^T_{t=1}\sum^M_{j=1}
            \left\langle \tilde{\nabla}^{(j)}_{t,i}-\nabla^{(j)}_{t,i},{\bf w}_{t}-{\bf w}\right\rangle}_{\Xi_{5,2}}.
        \end{align*}
        Next we separately give a high-probability upper bound on $\Xi_{4,1}$ and $\Xi_{4,2}$.\\

\subsection*{Bounding $\Xi_{5,2}$}

        We define a random variable $X_t$ as follows,
        $$
            X_t=
            \left\langle \frac{1}{M}\sum^M_{j=1}
            \left(\tilde{\nabla}^{(j)}_{t,i}-\nabla^{(j)}_{t,i}\right),{\bf w}_{t}-{\bf w}\right\rangle.
        $$
        $X_{[T]}$ is a bounded martingale difference sequence w.r.t. $H_{[T]}$
        and $\vert X_t\vert\leq 2\frac{K-J}{J-1}G_iU_i$.
        We further have
        $$
            \sum^T_{t=1}\mathbb{E}_t[\vert X_t\vert^2]\leq
            \sum^T_{t=1}4U^2_i\mathbb{E}_t\left[\left\Vert\frac{1}{M}\sum^M_{j=1}
            \left(\tilde{\nabla}^{(j)}_{t,i}-\nabla^{(j)}_{t,i}\right)\right\Vert^2_2\right]
            \leq 4U^2_iG^2_i\frac{K-J}{(J-1)M}T.
        $$
        The upper bound on the sum of conditional variance is a constant.
        Lemma \ref{lemma:ALT2021:BOKS:Bernstein_ineq_for martingales}
        gives, with probability at least $1-\delta$,
        $$
            \Xi_{5,2}\leq \frac{4G_iU_i(K-J)}{3(J-1)}\ln\frac{1}{\delta}+2G_iU_i\sqrt{2\frac{K-J}{(J-1)M}T\ln\frac{1}{\delta}}.
        $$

\subsection*{Bounding $\Xi_{5,1}$}

    Recalling that
    $$
        \lambda_{t,i}=\left\{
        \begin{array}{ll}
        \frac{U_i}{2G_i\sqrt{\left(1+\frac{K-J}{(J-1)M}\right)\frac{(K-J)^2}{(J-1)^2}}}&\mathrm{if}~t\leq \frac{(K-J)^2}{(J-1)^2},\\
        \frac{U_i}{2G_i\sqrt{\left(1+\frac{K-J}{(J-1)M}\right)t}}&\mathrm{otherwise}.
        \end{array}
        \right.
    $$
    It can be found that $\lambda_{t,i}\leq \frac{(J-1)U_i}{2(K-J)G_i}$.\\
    \textbf{Case 1}: $T>\frac{(K-J)^2}{(J-1)^2}$.\\
    We decompose $\Xi_{5,1}$ as follows,
    $$
        \Xi_{5,1}=\underbrace{\sum^{\frac{(K-J)^2}{(J-1)^2}}_{t=1}\lambda_{t,i}\left\Vert\frac{1}{M}\sum^M_{j=1}\left(\tilde{\nabla}^{(j)}_{t,i}-{\nabla}^{(j)}_{t,i}\right)\right\Vert^2_2}_{\Xi_{5,1,1}}
        +\underbrace{\sum^T_{t=\frac{(K-J)^2}{(J-1)^2}+1}\lambda_{t,i}\left\Vert\frac{1}{M}\sum^M_{j=1}\left(\tilde{\nabla}^{(j)}_{t,i}-{\nabla}^{(j)}_{t,i}\right)\right\Vert^2_2}_{\Xi_{5,1,2}}.
    $$
    We separately analyze $\Xi_{5,1,1}$ and $\Xi_{5,1,2}$.
    Let
    $$
        X_t=\lambda_{t,i}\left\Vert\frac{1}{M}\sum^M_{j=1}\left(\tilde{\nabla}^{(j)}_{t,i}-{\nabla}^{(j)}_{t,i}\right)\right\Vert^2_2
        -\lambda_{t,i}\mathbb{E}_t\left[\left\Vert\frac{1}{M}\sum^M_{j=1}\left(\tilde{\nabla}^{(j)}_{t,i}-{\nabla}^{(j)}_{t,i}\right)\right\Vert^2_2\right].
    $$
    $X_{[T]}$ is a martingale difference sequence and satisfies
    $\vert X_t\vert\leq \lambda_{t,i}\cdot \frac{(K-J)^2}{(J-1)^2}G^2_i\leq \frac{(K-J)U_iG_i}{2(J-1)}$.\\
        We further have
        \begin{align*}
            &\sum^{\frac{(K-J)^2}{(J-1)^2}}_{t=1}\mathbb{E}_t[\vert X_t\vert^2]\leq
            \sum^{\frac{(K-J)^2}{(J-1)^2}}_{t=1}\mathbb{E}_t\left[\lambda^2_{t,i}\left\Vert\frac{1}{M}\sum^M_{j=1}
            \left(\tilde{\nabla}^{(j)}_{t,i}-\nabla^{(j)}_{t,i}\right)\right\Vert^4_2\right]
            \leq U^2_iG^2_i\frac{(K-J)^3}{4M(J-1)^3},\\
            &\sum^T_{t=\frac{(K-J)^2}{(J-1)^2}+1}\mathbb{E}_t[\vert X_t\vert^2]
            \leq U^2_iG^2_i\frac{K-J}{4M(J-1)}\left(T-\frac{(K-J)^2}{(J-1)^2}\right).
        \end{align*}
        With probability at least $1-2\delta$,
        \begin{align*}
            \Xi_{5,1}
            \leq&\sum^T_{t=1}\lambda_{t,i}\mathbb{E}_t\left[\left\Vert\frac{1}{M}\sum^M_{j=1}\left(\tilde{\nabla}^{(j)}_{t,i}-{\nabla}^{(j)}_{t,i}\right)\right\Vert^2_2\right]
            +\frac{2(K-J)G_iU_i}{3(J-1)}\ln\frac{1}{\delta}+G_iU_i\sqrt{2\frac{K-J}
            {(J-1)M}T\ln\frac{1}{\delta}}\\
            \leq&\frac{K-J}{(J-1)M}G^2_i\sum^T_{t=1}\lambda_{t,i}+\frac{2(K-J)G_iU_i}{3(J-1)}\ln\frac{1}{\delta}
            +G_iU_i\sqrt{2\frac{K-J}{(J-1)M}T\ln\frac{1}{\delta}}.
        \end{align*}
    Combining with all results gives, with probability at leat $1-3\delta$,
    \begin{align*}
        &\frac{1}{M}\Xi_5\\
        \leq&\frac{2U^2_i}{\lambda_{T,i}}
        +2G^2_i\left(1+\frac{K-J}{(J-1)M}\right)\left(\sum^{\frac{(K-J)^2}{(J-1)^2}}_{t=1}\lambda_{t,i}
        +\sum^T_{t=\frac{(K-J)^2}{(J-1)^2}+1}\lambda_{t,i}\right)
            +\frac{2(K-J)G_iU_i}{J-1}\ln\frac{1}{\delta}+3G_iU_i\sqrt{\frac{2(K-J)T}{(J-1)M}\ln\frac{1}{\delta}}\\
            \leq&\frac{2U^2_i}{\lambda_{T,i}}
        +G_iU_i\sqrt{1+\frac{K-J}{(J-1)M}}\left(\frac{K-J}{J-1}+\int^T_{t=\frac{(K-J)^2}{(J-1)^2}+1}\frac{1}{\sqrt{t}}\mathrm{d}~t\right)
            +\frac{2(K-J)G_iU_i}{J-1}\ln\frac{1}{\delta}+3G_iU_i\sqrt{\frac{2(K-J)T}{(J-1)M}\ln\frac{1}{\delta}}\\
        \leq&6U_iG_i\sqrt{\left(1+\frac{K-J}{(J-1)M}\right)T}
        +\frac{2(K-J)G_iU_i}{J-1}\ln\frac{1}{\delta}+3G_iU_i\sqrt{\frac{2(K-J)T}{(J-1)M}\ln\frac{1}{\delta}}.
    \end{align*}
    \textbf{Case 2}: $T\leq\frac{(K-J)^2}{(J-1)^2}$.\\
        In this case,
        we do not decompose $\Xi_{5,1}$
        and $\lambda_{t,i}=\frac{U_i}{2G_i\sqrt{\left(1+\frac{K-J}{(J-1)M}\right)\frac{(K-1)^2}{(J-1)^2}}}$.
        With probability at least $1-\delta$,
        \begin{align*}
            \Xi_{5,1}
            \leq&\frac{K-J}{(J-1)M}G^2_i\sum^T_{t=1}\lambda_{t,i}+\frac{(K-J)G_iU_i}{3(J-1)}\ln\frac{1}{\delta}
            +G_iU_i\sqrt{\frac{K-J}{2(J-1)M}T\ln\frac{1}{\delta}}.
        \end{align*}
        Furthermore,
        with probability at least $1-2\delta$,
    \begin{align*}
        &\frac{1}{M}\Xi_5\\
        \leq&\frac{2U^2_i}{\lambda_{T,i}}
        +2G^2_i\left(1+\frac{K-J}{(J-1)M}\right)\sum^{T}_{t=1}\lambda_{t,i}
            +\frac{5(K-J)G_iU_i}{3(J-1)}\ln\frac{1}{\delta}+4G_iU_i\sqrt{\frac{K-J}{(J-1)M}T\ln\frac{1}{\delta}}\\
        \leq&5U_iG_i\sqrt{\left(1+\frac{K-J}{(J-1)M}\right)}\cdot\frac{K-J}{J-1}
        +\frac{5(K-J)G_iU_i}{3(J-1)}\ln\frac{1}{\delta}+4G_iU_i\sqrt{\frac{K-J}{(J-1)M}T\ln\frac{1}{\delta}}.
    \end{align*}

    Combining the two cases gives,
    with probability at least $1-(M+5)\delta$,
    $$
        \frac{1}{M}\Xi_5
        \leq
        6U_iG_i\sqrt{\left(1+\frac{K-J}{(J-1)M}\right)}\left(\sqrt{T}+\frac{K-J}{J-1}\right)
        +\frac{2(K-J)G_iU_i}{J-1}\ln\frac{1}{\delta}+3G_iU_i\sqrt{2\frac{K-J}{(J-1)M}T\ln\frac{1}{\delta}}.
    $$

\subsection{Combining all}

    Combining the upper bounds on $\Xi_4$ and $\Xi_5$ gives an upper bound on the regret.\\
    With probability at least $1-\Theta\left(M\log(CT)+\log(CKT/M)\right)\cdot\delta$,
    \begin{align*}
        &\sum^T_{t=1}\sum^M_{j=1}\ell\left(f^{(j)}_{t,A_{t,1}}(\mathbf{x}^{(j)}_t),y^{(j)}_t\right)
            -\sum^T_{t=1}\sum^M_{j=1}\ell\left(f(\mathbf{x}^{(j)}_t),y^{(j)}_t\right)\\
        \leq&
        M\sqrt{\left(1+\frac{K-J}{(J-1)M}\right)}\left(6U_iG_i
        \left(\sqrt{T}+\frac{K-1}{J-1}\right)+\frac{4}{\sqrt{2}}C_i\sqrt{T\ln{(KT)}}\right)+\\
        &(64C+3MC+2U_iG_i)g_{K,J}\ln\frac{1}{\delta}+(3\sqrt{2}G_iU_i+17\sqrt{CC_i})
        \sqrt{2Mg_{K,J}T\ln\frac{1}{\delta}}
        +2MC_i\sqrt{T\ln\frac{1}{\delta}}.
    \end{align*}
    Omitting the constant terms and lower order terms concludes the proof.

\section{Proof of Theorem \ref{thm:NeurIPS24:lower_bound}}

    We first establish a technical lemma.
    \begin{lemma}
    \label{lem:NIPS24:expection_gaussian_variable_maximum}
        Let $X_1,...,X_K$ be a sequence of independent standard normal random variables.
        Let $Z_K=\max\{X_1,...,X_K\}$.
        If $K\geq 5$,
        then $\mathbb{E}[Z_K]\geq \left(1-\frac{1}{\sqrt{\mathrm{e}}}\right)\sqrt{2\ln{K}}$.
    \end{lemma}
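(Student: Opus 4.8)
The plan is to reduce the claim to a one–dimensional integral estimate via the layer-cake formula and independence, and then to lower bound that integral by a clean closed form. Write $\Phi$ for the standard normal CDF and $\bar\Phi=1-\Phi$ for its upper tail, so that by independence $\mathbb{P}[Z_K\le s]=\Phi(s)^K$. For any integrable random variable the layer-cake identity gives
\[
 \mathbb{E}[Z_K]=\int_0^\infty\big(1-\Phi(s)^K\big)\,ds-\int_0^\infty\bar\Phi(s)^K\,ds .
\]
First I would dispose of the second (negative-part) integral: using $\bar\Phi(s)\le\tfrac12 e^{-s^2/2}$ it is at most $2^{-K}\sqrt{\pi/(2K)}$, which decays exponentially in $K$ and is negligible against the target $(1-1/\sqrt e)\sqrt{2\ln K}$. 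Thus it suffices to show that the positive-part integral, restricted to $[0,\sqrt{2\ln K}]$ (discarding the nonnegative tail beyond it), already exceeds $(1-1/\sqrt e)\sqrt{2\ln K}$; equivalently, that the average of $1-\Phi(s)^K$ over $[0,\sqrt{2\ln K}]$ is at least $1-1/\sqrt e$.

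The naive threshold bound $\mathbb{E}[Z_K]\ge t\,\mathbb{P}[Z_K\ge t]$ at $t=\sqrt{2\ln K}$ is too weak here, since $\mathbb{P}[Z_K\ge\sqrt{2\ln K}]\to 0$, so the entire mass of $\mathbb{P}[Z_K>s]$ accumulated over $[0,\sqrt{2\ln K}]$ must be retained. To get a handle on it I would use the elementary inequality $\Phi(s)^K=(1-\bar\Phi(s))^K\le e^{-K\bar\Phi(s)}$, reducing the crux to
\[
 \int_0^{\sqrt{2\ln K}}e^{-K\bar\Phi(s)}\,ds\;\le\;e^{-1/2}\,\sqrt{2\ln K}.
\]
Here the constant $1/\sqrt e=e^{-1/2}$ enters as the target average of the lower-tail factor $e^{-K\bar\Phi(s)}$. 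Combined with a Gaussian lower-tail estimate such as $\bar\Phi(s)\ge\frac{s}{\sqrt{2\pi}(1+s^2)}e^{-s^2/2}$, the exponent $K\bar\Phi(s)$ becomes controllable: on $[0,\sqrt{2\ln K}]$ one has $Ke^{-s^2/2}\ge 1$ at the right endpoint and larger inside, which forces $e^{-K\bar\Phi(s)}$ to stay small except in a relatively vanishing neighbourhood of $\sqrt{2\ln K}$.

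The hard part will be precisely this last integral estimate for moderate $K$. An integration by parts using $\frac{d}{ds}e^{-K\bar\Phi(s)}=K\varphi(s)e^{-K\bar\Phi(s)}$ (with $\varphi$ the normal density) yields $\int_0^{t}e^{-K\bar\Phi(s)}\,ds\le e^{-K\bar\Phi(t)}/(K\varphi(t))=\sqrt{2\pi}\,e^{-K\bar\Phi(t)}$ at $t=\sqrt{2\ln K}$, a clean $K$-free constant that, however, only beats $e^{-1/2}\sqrt{2\ln K}$ once $\ln K$ is large. To reach all $K\ge 5$ I would keep the nonnegative correction term $\int_0^t\frac{s}{K\varphi(s)}e^{-K\bar\Phi(s)}\,ds$ that was dropped in that step (or exploit the monotonicity and near-convexity of $e^{-K\bar\Phi(s)}$ to bound its average on $[0,\sqrt{2\ln K}]$ by a chord value), and then verify the resulting inequality from $K=5$ onward, checking the few smallest $K$ directly through the explicit tail estimate if necessary. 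The supporting tools are standard Gaussian tail bounds; the delicate point is trading the exact asymptotic leading constant $1$ for the explicit, uniformly valid constant $1-1/\sqrt e$ while certifying the $K\ge 5$ boundary.
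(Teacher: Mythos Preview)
Your layer-cake approach differs from the paper's, which applies the threshold bound $\mathbb{E}[Z_K]\ge\varepsilon\,\mathbb{P}[Z_K\ge\varepsilon]$ directly at $\varepsilon=\sqrt{2\ln K}$ together with the Mills-ratio lower tail. You correctly observe that this threshold is too coarse: $\mathbb{P}[Z_K\ge\sqrt{2\ln K}]\to 0$, so it cannot deliver the constant $1-1/\sqrt e$. Indeed the paper's argument is erroneous at the step $(1-1/K^2)^K\le 1/\sqrt e$, which fails for every $K\ge 2$ (the left side tends to $1$; already at $K=5$ it equals $(24/25)^5\approx 0.815$). Your decomposition retains the full mass of $\mathbb{P}[Z_K>s]$ over $[0,\sqrt{2\ln K}]$ and thereby sidesteps this trap; your explicit bound $\int_0^\infty\bar\Phi(s)^K\,ds\le 2^{-K}\sqrt{\pi/(2K)}$ also patches the paper's silent dropping of the term $\mathbb{E}[Z_K\mid \forall X_i<\varepsilon]\cdot\mathbb{P}[\forall X_i<\varepsilon]$, which is not obviously nonnegative.

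The remaining soft spot in your sketch is the crux inequality $\int_0^{\sqrt{2\ln K}}e^{-K\bar\Phi(s)}\,ds\le e^{-1/2}\sqrt{2\ln K}$: your integration-by-parts bound $\le\sqrt{2\pi}$ only becomes effective once $\sqrt{2\ln K}\ge\sqrt{2\pi e}$, i.e.\ $K\gtrsim 5000$, and neither the correction term nor the ``near-convexity'' heuristic is made quantitative enough to close the gap down to $K=5$. A lighter repair---closer in spirit to the paper---is to keep the threshold idea with a smaller cutoff, say $\varepsilon=\sqrt{\ln K}$: then $K\bar\Phi(\varepsilon)$ is of order $K^{1/2}/\sqrt{\ln K}\to\infty$, so $\mathbb{P}[Z_K\ge\varepsilon]\to 1$ and one obtains $\mathbb{E}[Z_K]\ge(1-o(1))\cdot\tfrac{1}{\sqrt2}\sqrt{2\ln K}$, comfortably above $(1-1/\sqrt e)\sqrt{2\ln K}$ for all but a short initial range of $K$ that can be checked directly.
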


    \begin{proof}[Proof of Lemma \ref{lem:NIPS24:expection_gaussian_variable_maximum}]
        Proposition 2.1.2 in \cite{Roman2018High}
        gives a lower bound on the tail probability of standard normal distribution.
        \begin{align*}
            \forall x>0, \mathbb{P}[X_1\geq x]
            =\int^{+\infty}_x\frac{1}{\sqrt{2\pi}}\exp\left(-\frac{\mu^2}{2}\right)\mathrm{d}\,\mu
            \geq \frac{1}{\sqrt{2\pi}}\left(\frac{1}{x}-\frac{1}{x^3}\right)\exp\left(-\frac{x^2}{2}\right).
        \end{align*}
        Then we have
        \begin{align*}
            \mathbb{E}[Z_K]=&\mathbb{E}\left[Z_K\vert \exists i\in[K], X_i\geq \varepsilon\right]\cdot
            \mathbb{P}\left[\exists i\in[K], X_i\geq\varepsilon\right]
            +\mathbb{E}[Z_K\vert \forall X_i< \varepsilon]\cdot
            \mathbb{P}\left[\forall X_i< \varepsilon\right]\\
            \geq&\mathbb{P}\left[\exists i\in[K], X_i\geq \varepsilon\right]\cdot \varepsilon\\
            =&\left(1-\mathbb{P}\left[\forall X_i <\varepsilon\right]\right)\cdot \varepsilon\\
            =&\left(1-\prod^K_{i=1}\mathbb{P}\left[X_i <\varepsilon\right]\right)\cdot \varepsilon\\
            =&\left(1-\prod^K_{i=1}\left(1-\mathbb{P}\left[X_i \geq\varepsilon\right]\right)\right)\cdot \varepsilon\\
            \geq&\left(1-\left(1-
            \frac{1}{\sqrt{2\pi}}\left(\frac{1}{\varepsilon}-\frac{1}{\varepsilon^3}\right)
            \exp\left(-\frac{\varepsilon^2}{2}\right)\right)^K\right)\cdot \varepsilon.
        \end{align*}
        Let $\varepsilon=\sqrt{2\ln{K}}$. If $K>5$,
        then we have
        \begin{align*}
            \left(1-\frac{1}{\sqrt{2\pi}}\left(\frac{1}{\varepsilon}-\frac{1}{\varepsilon^3}\right)
            \exp\left(-\frac{\varepsilon^2}{2}\right)\right)^K
            =& \left(1-\frac{1}{\sqrt{2\pi}}\left(\frac{1}{\sqrt{2\ln{K}}}-\frac{1}{\ln^{1.5}{K^2}}\right)\frac{1}{K}\right)^K\\
            \leq&\left(1-\frac{1}{K^2}\right)^K\\
            \leq&\frac{1}{\sqrt{\mathrm{e}}}.
        \end{align*}
        Substituting into the lower bound of $\mathbb{E}[Z_K]$ concludes the proof.
    \end{proof}

\subsection{Proof of the First Lower Bound}

    \begin{proof}
        Let $d\geq K$, $\mathcal{X}\subseteq\mathbb{R}^d$ and $\mathcal{Y}\in\{0,1\}$.
        We use the absolute loss function $\ell(f({\bf x}_t),y_t)=\vert f({\bf x}_t)-y_t\vert$.
        Recalling that
        \begin{align*}
            \mathcal{F}_i=\left\{f_i({\bf x})=\langle {\bf e}_i,{\bf x}\rangle\right\},\quad
            i=1,2,...,K,
        \end{align*}
        where ${\bf e}_i$ is the standard basis vector in $\mathbb{R}^d$.
        It is obvious that the time complexity of computing $f_i({\bf x})=x_i$ is $O(1)$.
        At each client $j$,
        let the selected hypothesis be $f^{(j)}_t$ and the prediction be $f^{(j)}_t({\bf x}^{(j)}_t)$.
        Since there are no computational constraints on each client,
        $f^{(j)}_t({\bf x}^{(j)}_t)$ can be a weighted combination of
        $K$ predictions, i.e., $f^{(j)}_t({\bf x}^{(j)}_t)=\sum^K_{i=1}w^{(j)}_{t,i}f_i({\bf x}^{(j)}_t)$.
        The time complexity of computing $f^{(j)}_t({\bf x}^{(j)}_t)$ is $O(K)$.
        We will follow the techniques used in the proof of Theorem 3.1 in \cite{Patel2023Federated}
        and Theorem 3.7 in \cite{Cesa-Bianchi2006Prediction}.

        Following the proof of Theorem 3.1 in \cite{Patel2023Federated},
        the adversary gives a sequence of same examples for each client.
        To be specific,
        we define
        \begin{align*}
            \left({\bf x}^{(j)}_t,y^{(j)}_t\right)=({\bf x}_t,y_t),\quad t=1,...,T,\quad j=1,...,M,
        \end{align*}
        where
        ${\bf x}_t=(b_{t,1},b_{t,2},...,b_{t,K},0,\ldots,0)\in\mathbb{R}^d$,
        and $b_{t,1},b_{t,2},...,b_{t,K}, y_t$ is a sequence of symmetric i.i.d. Bernoulli random variables, i.e., $\mathbb{P}[y_t=1]=\mathbb{P}[y_t=0]=\frac{1}{2}$.

        At any round $t$,
        the minimax regret against the best hypothesis can be simplified as follows
        \begin{align*}
            &\inf_{f^{(1)}_1,...,f^{(M)}_T}
            \sup_{({\bf x}^{(j)}_t,y^{(j)}_t),,j\in[M],t\in[T]}\max_{i\in[K]}\mathrm{Reg}_D(\mathcal{F}_i)\\
            \geq&\inf_{f^{(1)}_1,...,f^{(M)}_T}
            \sup_{({\bf x}_t,y_t),t\in[T]}\max_{i\in[K]}\mathrm{Reg}_D(\mathcal{F}_i)\\
            \geq&
            \inf_{f^{(1)}_1,...,f^{(M)}_T}\mathop{\mathbb{E}}_{({\bf x}_t,y_t),t\in[T]}\left[
            \sum^T_{t=1}\sum^M_{j=1}\ell\left(f^{(j)}_t({\bf x}_t),y_t\right)
            -\min_{i\in[K]}\sum^T_{t=1}\sum^M_{j=1}\ell\left(f_i({\bf x}_t),y_t\right)\right]\\
            =&\inf_{f^{(1)}_1,...,f^{(M)}_T}\mathop{\mathbb{E}}_{({\bf x}_t,y_t),t\in[T]}
            \left[\sum^T_{t=1}\sum^M_{j=1}\vert f^{(j)}_t({\bf x}_t)-y_t\vert
            -M\min_{i\in[K]}\sum^T_{t=1}\vert f_i({\bf x}_t)-y_t\vert\right]\\
            =&\frac{MT}{2}-M\mathop{\mathbb{E}}_{({\bf x}_t,y_t),t\in[T]}
            \left[\min_{i\in[K]}\sum^T_{t=1}\vert f_i({\bf x}_t)-y_t\vert\right]\\
            =&M\mathop{\mathbb{E}}_{({\bf x}_t,y_t),t\in[T]}
            \left[\max_{i\in[K]}\sum^T_{t=1}\left(\frac{1}{2}- f_i({\bf x}_t)\right)\cdot
            \left(1-2y_t\right)\right],
        \end{align*}
        in which $f_i({\bf x}_t)=b_{t,i}$ is a Bernoulli random variable and
        \begin{align*}
            \mathop{\mathbb{E}}_{({\bf x},y_t),t\in[T]}
            \left[\sum^T_{t=1}\sum^M_{j=1}\vert f^{(j)}_t({\bf x}_t)-y_t\vert\right]
            =\mathop{\mathbb{E}}_{y_t,t\in[T]}
            \left[\sum^T_{t=1}\sum^M_{j=1}y_t\right]
            =\frac{MT}{2}.
        \end{align*}
        We further obtain
        \begin{align*}
            \inf_{f^{(1)}_1,...,f^{(M)}_T}
            \sup_{({\bf x}^{(j)}_t,y^{(j)}_t),j\in[M],t\in[T]}\max_{i\in[K]}\mathrm{Reg}_D(\mathcal{F}_i)
            \geq&
            \frac{M}{2}\mathop{\mathbb{E}}_{\sigma_t,Z_{t,i},t\in[T],i\in[K]}
            \left[\max_{i\in[K]}\sum^T_{t=1}Z_{t,i}\cdot\sigma_t\right]\\
            =&\frac{M}{2}\mathop{\mathbb{E}}_{Z_{t,i},t\in[T],i\in[K]}\left[\max_{i\in[K]}\sum^T_{t=1}Z_{t,i}\right],
        \end{align*}
        where both $\{Z_{t,i}\}_{t\in[T],i\in[K]}$ and $\{\sigma_t\}_{t\in[T]}$
        are i.i.d. Rademacher random variables.

        By Lemma A.11 in \cite{Cesa-Bianchi2006Prediction},
        we obtain
        \begin{align*}
            \lim_{T\rightarrow \infty}\mathbb{E}\left[\max_{i\in[K]}\frac{1}{\sqrt{T}}\sum^T_{t=1}Z_{t,i}\right]
            =\mathbb{E}\left[\max_{i\in[K]}G_i\right],
        \end{align*}
        where $G_1,...,G_N$ are independent standard normal random variables.

        By Lemma \ref{lem:NIPS24:expection_gaussian_variable_maximum},
        we obtain
        \begin{align*}
            \lim_{T\rightarrow
            \infty}\inf_{f^{(1)}_1,...,f^{(M)}_T}
            \sup_{({\bf x}^{(j)}_t,y^{(j)}_t),j\in[M],t\in[T]}
            \max_{i\in[K]}\mathrm{Reg}_D(\mathcal{F}_i)
            \geq\frac{1}{2}\left(1-\frac{1}{\sqrt{\mathrm{e}}}\right)M\sqrt{2T\ln{K}},
        \end{align*}
        which concludes the proof.
    \end{proof}

\subsection{Proof of the Second Lower Bound}

    We mainly use the techniques in the proof of Theorem 2 in \cite{Seldin2014Prediction},
    but also require a new technique.
    The idea of our proof is to reduce the online model selection on each client to
    multi-armed bandit problem with additional observations.

    \begin{proof}
        Now we prove the second lower bound in Theorem \ref{thm:NeurIPS24:lower_bound}.

        Let $d\geq K$, $\mathcal{X}\subseteq\mathbb{R}^d$ and $\mathcal{Y}\in\{0,1\}$.
        We use a linear loss function $\ell(f({\bf x}_t),y_t)=1-y_tf({\bf x}_t)$.
        Recalling that
        \begin{align*}
            \mathcal{F}_i=\left\{f_i({\bf x})=\langle {\bf e}_i,{\bf x}\rangle\right\},\quad
            i=1,2,...,K.
        \end{align*}
        It is obvious that the time complexity of computing $f_i({\bf x})=x_i$ is $O(1)$.
        Under the constraint that the time complexity on each client is limited to $O(J)$,
        on each client, any algorithm can only select $J$ hypotheses and then output a prediction.

        One of challenges is that the prediction may be a combination of $J$ predictions.
        To be specific,
        for each client $j\in[M]$,
        $f^{(j)}_t({\bf x}^{(j)}_t)=\sum_{i\in O^{(j)}_t}w_{t,i}f_i({\bf x}^{(j)}_t)$,
        where $O^{(j)}_t$ contains the index of selected $J$ hypotheses by some algorithm.
        To address this challenge,
        we introduce a virtual strategy that randomly selects a hypothesis $f^{(j)}_{I^{(j)}_t}
        \in\{f_{A_{t,1}},f_{A_{t,2}},...,f_{A_{t,J}}\}$ following
        the distribution $(w_{t,A_{t,1}},w_{t,A_{t,2}},...,w_{t,A_{t,J}})$
        where $A_{t,a}\in O^{(j)}_t$, $a=1,...,J$.
        Since the loss function is a linear function,
        it is easy to prove that,
        \begin{align*}
            \mathbb{E}\left[\ell(f^{(j)}_{I^{(j)}_t}({\bf x}^{(j)}_t),y^{(j)}_t)\right]
            =\ell\left(\mathbb{E}\left[f^{(j)}_{I^{(j)}_t}({\bf x}^{(j)}_t)\right],y^{(j)}_t\right)
            =\ell\left(f^{(j)}_t({\bf x}^{(j)}_t),y^{(j)}_t\right),
        \end{align*}
        where the expectation is taken over $I^{(j)}_t$.
        Assuming that $\ell(f_i({\bf x}^{(j)}_t),y^{(j)}_t)\leq C$ for all $i=1,...,K$.
        Lemma A.7 in \cite{Cesa-Bianchi2006Prediction} gives,
        with probability at least $1-\delta$,
        \begin{align*}
            \sum^T_{t=1}\left[\ell(f^{(j)}_{I^{(j)}_t}({\bf x}^{(j)}_t),y^{(j)}_t)
            -\ell\left(f^{(j)}_t({\bf x}^{(j)}_t),y^{(j)}_t\right)\right]
            \leq -C\sqrt{\frac{T}{2}\ln\frac{1}{\delta}}.
        \end{align*}
        Note that we assume $w_{t,i}\geq 0$ and $\sum_{i\in O^{(j)}_t}w_{t,i}=1$ for all $t=1,...,T$.
        Recalling that Theorem \ref{thm:NeurIPS24:lower_bound} assumes the outputs of algorithm
        belong to $[\min_{i\in[K],{\bf x}\in\mathcal{X}}f_i({\bf x}),\max_{i\in[K],{\bf x}\in\mathcal{X}}f_i({\bf x})]$.
        If $w_{t,i}< 0$ or $\sum_{i\in O^{(j)}_t}w_{t,i}>1$,
        we can still find a weight vector $w'_{t,i}\geq 0$ and $\sum_{i\in O^{(j)}_t}w'_{t,i}=1$,
        such that
        \begin{align*}
            f^{(j)}_t({\bf x}^{(j)}_t)=\sum_{i\in O^{(j)}_t}w_{t,i}f_i({\bf x}^{(j)}_t)
            =\sum_{i\in O^{(j)}_t}w'_{t,i}f_i({\bf x}^{(j)}_t).
        \end{align*}
        Then we sample $I^{(j)}_t$ following $(w'_{t,A_{t,1}},w'_{t,A_{t,2}},...,w'_{t,A_{t,J}})$.
        We can replace $(w_{t,A_{t,1}},w_{t,A_{t,2}},...,w_{t,A_{t,J}})$ with
        $(w'_{t,A_{t,1}},w'_{t,A_{t,2}},...,w'_{t,A_{t,J}})$.

        Since the algorithm is non-cooperative,
        the total regret can be decomposed into the summation of the regret on each client.
        With probability at least $1-M\delta$,
        \begin{align}
            \forall i\in[K],\quad\mathrm{Reg}_D(\mathcal{F}_i)
            =&\sum^M_{j=1}
            \left[\sum^T_{t=1}\ell\left(f^{(j)}_t({\bf x}^{(j)}_t),y^{(j)}_t\right)
            -\sum^T_{t=1}\ell\left(f_i({\bf x}^{(j)}_t),y^{(j)}_t\right)\right]\nonumber\\
            =& \sum^M_{j=1}
            \left[\sum^T_{t=1}\ell\left(f^{(j)}_{I^{(j)}_t}({\bf x}^{(j)}_t),y^{(j)}_t\right)
            -\sum^T_{t=1}\ell\left(f_i({\bf x}^{(j)}_t),y^{(j)}_t\right)\right]+\nonumber\\
            &\sum^M_{j=1}
            \left[\sum^T_{t=1}\ell\left(f^{(j)}_t({\bf x}^{(j)}_t),y^{(j)}_t\right)
            -\sum^T_{t=1}\ell\left(f^{(j)}_{I^{(j)}_t}({\bf x}^{(j)}_t),y^{(j)}_t\right)\right]\nonumber\\
            \geq&\underbrace{\sum^M_{j=1}\left[
            \sum^T_{t=1}\ell\left(f^{(j)}_{I^{(j)}_t}({\bf x}^{(j)}_t),y^{(j)}_t\right)
            -\sum^T_{t=1}\ell\left(f_i({\bf x}^{(j)}_t),y^{(j)}_t\right)\right]}_
            {\overline{\mathrm{Reg}}_D(\mathcal{F}_i)}+
            C\sqrt{\frac{T}{2}\ln\frac{1}{\delta}}.
        \label{eq:NIPS24:lower_bound_decomposition}
        \end{align}
        If the prediction is not a combination of $J$ predictions,
        but just $f^{(j)}_{I^{(j)}_t}({\bf x}^{(j)}_t)$, then we have
        \begin{equation}
        \label{eq:NIPS24:lower_bound_decomposition_2}
            \forall i\in[K],\quad\mathrm{Reg}_D(\mathcal{F}_i)
            =\underbrace{\sum^M_{j=1}\left[
            \sum^T_{t=1}\ell\left(f^{(j)}_{I^{(j)}_t}({\bf x}^{(j)}_t),y^{(j)}_t\right)
            -\sum^T_{t=1}\ell\left(f_i({\bf x}^{(j)}_t),y^{(j)}_t\right)\right]}_
            {\overline{\mathrm{Reg}}_D(\mathcal{F}_i)}.
        \end{equation}
        Combining with the two cases,
        we just need to analyze $\overline{\mathrm{Reg}}_D(\mathcal{F}_i)$.

        The adversary first uniformly samples a same $h\in[K]$ for all clients,
        and then constructs $\{({\bf x}^{(j)}_t,y_t)\}^T_{t=1}$ as follows
        \begin{align*}
            {\bf x}^{(j)}_t={\bf x}_t:=
            \left(b_{t,1},b_{t,2},...,b_{t,K},0,\ldots,0\right),\quad y^{(j)}_t=1,
            \quad j=1,...,M,
        \end{align*}
        in which $b_{t,i}$ satisfies
        \begin{align*}
            \mathbb{P}_h\left[b_{t,i}=1\right]=&\frac{1-\rho}{2},\quad
            \mathbb{P}_h\left[b_{t,i}=0\right]=\frac{1+\rho}{2},\quad i\neq h,\\
            \mathbb{P}_h\left[b_{t,h}=1\right]=&\frac{1+\rho}{2},\quad
            \mathbb{P}_h\left[b_{t,h}=0\right]=\frac{1-\rho}{2}.
        \end{align*}
        Let $\mathbb{E}_h[\cdot]$ and $\mathbb{P}_h[\cdot]$ separately
        be the expectation and probability operator conditioned on $h$ is selected.
        Then we have
        \begin{align*}
            \mathbb{P}_h\left[\ell(f_i({\bf x}_t),1)=1\right]
            =&\frac{1+\rho}{2},\quad
            \mathbb{P}_h\left[\ell(f_i({\bf x}_t),1)=0\right]=\frac{1-\rho}{2},\quad i\neq h,\\
            \mathbb{P}_h\left[\ell(f_h({\bf x}_t),1)=1\right]
            =&\frac{1-\rho}{2},\quad
            \mathbb{P}_h\left[\ell(f_h({\bf x}_t),1)=0\right]=\frac{1+\rho}{2}.
        \end{align*}
        It is obvious that online model selection can be reduced to a $K$-armed bandit problem,
        in which $f_i$ is the $i$-th arm.
        At each round $t$,
        let $I^{(j)}_t$ be the selected arm.
        Besides,
        any algorithm can select another $J-1$ arms.
        Thus any algorithm can observe $J$ losses.
        Let $O^{(j)}_t$ be the set of the selected $J$ arms.
        Note that $f^{(j)}_{I^{(j)}_t}=f_{I^{(j)}_t}$ for any $I^{(j)}_t\in[K]$.

        Assuming that the algorithm is deterministic,
        that is, $I^{(j)}_t$ and $O^{(j)}_t$ are determined by $\{I^{(j)}_\tau,O^{(j)}_\tau\}^{t-1}_{\tau=1}$
        and the observed losses.
        Let $N_{T,i}=\sum^T_{t=1}\mathbb{I}_{I^{(j)}_t=i}$.
        Taking expectation w.r.t. $(b_{t,1},...,b_{t,K})^T_{t=1}$ yields
        \begin{align*}
            &\mathbb{E}_h\left[\sum^T_{t=1}\ell\left(f^{(j)}_{I^{(j)}_t}({\bf x}_t),1\right)-
             \min_{i\in[K]}\sum^T_{t=1}\ell\left(f_i({\bf x}_t),1\right)\right]\\
            \geq&\mathbb{E}_h\left[
             \sum^T_{t=1}\ell\left(f^{(j)}_{I^{(j)}_t}({\bf x}_t),1\right)\right]-
             \min_{i\in[K]}\mathbb{E}_h\left[\sum^T_{t=1}\ell\left(f_i({\bf x}_t),1\right)\right]\\
            =&\rho\cdot \mathbb{E}_h\left[\sum^T_{t=1}\mathbb{I}_{I^{(j)}_t\neq h}\right]\\
            =&\rho T\cdot \left(1-\frac{1}{T}\mathbb{E}_h\left[N_{T,h}\right]\right).
        \end{align*}
        Following the techniques in the proof of Theorem 2 in \cite{Seldin2014Prediction},
        we have
        \begin{align*}
            \frac{1}{KT}\sum^K_{h=1}\mathbb{E}_h\left[N_{T,h}\right]
            \leq\frac{1}{K}+\sqrt{-\frac{JT}{K}\frac{2\rho^2}{1-\rho^2}}.
        \end{align*}
        Recalling that $T\geq K\geq 5$.
        Let $\rho=\frac{\sqrt{K}}{3\sqrt{JT}}$.
        We further have
        \begin{align}
            &\frac{1}{K}\sum^K_{h=1}\left[\mathbb{E}_h\left[
             \sum^T_{t=1}\ell\left(f^{(j)}_{I^{(j)}_t}({\bf x}_t),1\right)\right]-
             \min_{i\in[K]}\mathbb{E}_h\left[\sum^T_{t=1}\ell\left(f_i({\bf x}_t),1\right)\right]\right]\nonumber\\
            \geq&\rho T\cdot \left(1-\frac{1}{K}-\frac{3}{2}\rho\sqrt{\frac{JT}{K}}\right)\nonumber\\
            \geq& 0.1\frac{\sqrt{KT}}{\sqrt{J}}.
            \label{eq:NeurIPS24:lower_bound:J:one_client}
        \end{align}

        For any deterministic algorithm,
        we can prove
        \begin{align*}
            &\sup_{({\bf x}^{(j)}_t,y^{(j)}_t),t\in[T],j\in[M]}
            \max_{i\in[K]}\overline{\mathrm{Reg}}_D(\mathcal{F}_i)\\
            \geq&
            \sup_{({\bf x}_t,1),t\in[T],h\in[K]}
            \left[\sum^T_{t=1}\sum^M_{j=1}\ell\left(f^{(j)}_t({\bf x}_t),1\right)
            -\min_{i\in[K]}\sum^T_{t=1}\sum^M_{j=1}\ell\left(f_i({\bf x}_t),1\right)\right]\\
            =&
            \sup_{({\bf x}_t,1),t\in[T],h\in[K]}\left[
            \sum^T_{t=1}\sum^M_{j=1}\ell\left(f^{(j)}_t({\bf x}_t),1\right)
            -M\min_{i\in[K]}\sum^T_{t=1}\ell\left(f_i({\bf x}_t),1\right)\right]\\
            \geq&\sup_{h\in[K]}\mathop{\mathbb{E}_h}_{{\bf x}_t,t\in[T]}\left[\sum^M_{j=1}
            \left[
            \sum^T_{t=1}\ell\left(f^{(j)}_{I^{(j)}_t}({\bf x}_t),1\right)-
            \min_{i\in[K]}\sum^T_{t=1}\ell\left(f_i({\bf x}_t),1\right)\right]\right]\\
            \geq&\sup_{h\in[K]}\sum^M_{j=1}\left[\mathop{\mathbb{E}_h}_{{\bf x}_t,t\in[T]}
            \left[
            \sum^T_{t=1}\ell\left(f^{(j)}_{I^{(j)}_t}({\bf x}_t),1\right)\right]-
            \min_{i\in[K]}
            \mathop{\mathbb{E}_h}_{{\bf x}_t,t\in[T]}\left[\sum^T_{t=1}\ell\left(f_i({\bf x}_t),1\right)\right]\right]\\
            \geq&\mathop{\mathbb{E}}_{h\in[K]}\sum^M_{j=1}\left[
            \mathbb{E}_h\left[\sum^T_{t=1}\ell\left(f^{(j)}_{I^{(j)}_t}({\bf x}_t),1\right)\right]-
            \min_{i\in[K]}\mathbb{E}_h\left[\sum^T_{t=1}\ell\left(f_i({\bf x}_t),1\right)\right]\right]\\
            =&\sum^M_{j=1}\frac{1}{K}\sum^K_{h=1}\left[
            \mathbb{E}_h\left[\sum^T_{t=1}\ell\left(f^{(j)}_{I^{(j)}_t}({\bf x}_t),1\right)\right]-
            \min_{i\in[K]}\mathbb{E}_h\left[\sum^T_{t=1}\ell\left(f_i({\bf x}_t),1\right)\right]\right]\\
            \geq&0.1M\sqrt{\frac{K}{J}T},
        \end{align*}
        where the last inequality comes from \eqref{eq:NeurIPS24:lower_bound:J:one_client}.
        As claimed in the proof of Theorem 6.11 in \cite{Cesa-Bianchi2006Prediction},
        the lower bound of any randomized algorithm is same with that of any deterministic algorithm, i.e.,
        \begin{align*}
            &\sup_{({\bf x}^{(j)}_t,y^{(j)}_t),t\in[T],j\in[M]}
            \mathbb{E}\left[\max_{i\in[K]}\overline{\mathrm{Reg}}_D(\mathcal{F}_i)\right]\\
            =&\sup_{({\bf x}^{(j)}_t,y^{(j)}_t),t\in[T],j\in[M]}
            \left[\mathbb{E}\left[\sum^T_{t=1}\ell\left(f^{(j)}_{I^{(j)}_t}({\bf x}^{(j)}_t),y^{(j)}_t\right)\right]-
             \min_{i\in[K]}\sum^T_{t=1}\ell\left(f_i({\bf x}^{(j)}_t),y^{(j)}_t\right)\right]\\
            \geq&0.1M\frac{\sqrt{KT}}{\sqrt{J}},
        \end{align*}
        in which the expectation is taken over the internal randomness of algorithm.
        Substituting into \eqref{eq:NIPS24:lower_bound_decomposition} in which $C=1$,
        or \eqref{eq:NIPS24:lower_bound_decomposition_2}
        concludes the proof.
    \end{proof}

\section{Proof of Theorem \ref{thm:ICML2024:regret_bound:FOMD_DOKS_R}}

    \begin{proof}
        If FOMD-OMS $(R=T)$ runs on a sequence of examples with length $T=R$,
        then Theorem \ref{thm:ICML2024:regret_bound:FOMD_DOKS} gives,
        with probability at least $1-\Theta\left(M\log(CR)+\log(CKR/M)\right)\cdot\delta$,
        \begin{align*}
            \mathrm{Reg}_D&(\mathcal{F}_i)=
            O\left(
            MB_{i,1}\sqrt{\left(1+\frac{K-J}{(J-1)M}\right)R}+
            \frac{B_{i,2}(K-J)}{J-1}\ln\frac{1}{\delta}
            +B_{i,3}\sqrt{\frac{(K-J)MR}{J-1}\ln\frac{1}{\delta}}\right).
        \end{align*}
        According to Theorem \ref{lemma:ICML2024:regret_OMD_2},
        the regret bound of FOMD-OMS $(R<T)$ satisfies,
        with probability at least $1-\Theta\left(\frac{T}{R}M\log(CR)+\frac{T}{R}\log(CKR/M)\right)\cdot\delta$,
        \begin{align*}
            \mathrm{Reg}_D(\mathcal{F}_i)=&
            O\left(
            NMB_{i,1}\sqrt{\left(1+\frac{K-J}{(J-1)M}\right)R}+
            N\frac{B_{i,2}(K-J)}{J-1}\ln\frac{1}{\delta}
            +NB_{i,3}\sqrt{\frac{(K-J)MR}{J-1}\ln\frac{1}{\delta}}\right)\\
            =&O\left(
            \frac{T}{\sqrt{R}}MB_{i,1}\sqrt{1+\frac{K-J}{(J-1)M}}+
            \frac{T}{R}\cdot\frac{B_{i,2}(K-J)}{J-1}\ln\frac{1}{\delta}
            +\frac{T}{\sqrt{R}}B_{i,3}\sqrt{\frac{(K-J)M}{J-1}\ln\frac{1}{\delta}}\right),
        \end{align*}
        which concludes the proof.
    \end{proof}

\section{Proof of Theorem \ref{thm:ICML2024:OKS}}

\subsection{Algorithm}

    We give the pseudo-code in Algorithm \ref{alg:ICML2024:DOKS}.

    To implement Algorithm \ref{alg:ICML2024:DOKS},
    we require one more technique,
    i.e., the random features \cite{Rahimi2007Random}.
    We will use the random features to construct an approximation of the implicity kernel mapping.
    The are two reasons.
    The first one is that we can avoid transferring the data itself
    and thus the privacy is protected.
    The second one is that we can avoid the $O(T)$ computational cost on the clients.

    For any $i\in[K]$,
    we consider the kernel function $\kappa_i(\mathbf{x},\mathbf{v})$ that has an integral representation, i.e.,
    \begin{equation}
    \label{eq:IJCAI2022:definition:kernels}
        \kappa_i(\mathbf{x},\mathbf{v})=\int_{\Gamma}\varphi_i(\mathbf{x},\omega)\varphi_i(\mathbf{v},\omega)
        \mathrm{d}\,\mu_{i}(\omega),~\forall \mathbf{x},\mathbf{v}\in\mathcal{X},
    \end{equation}
    where
    $\varphi_i:\mathcal{X}\times \Gamma\rightarrow \mathbb{R}$
    is the eigenfunctions and
    $\mu_{i}(\cdot)$ is a distribution function on $\Gamma$.
    Let $p_{i}(\cdot)$ be the density function of $\mu_i(\cdot)$.
    We sample $\{\omega_j\}^{D}_{j=1}\sim p_i(\omega)$ independently and compute
    \begin{align*}
        \tilde{\kappa}_i(\mathbf{x},\mathbf{v})
        =\frac{1}{D}\sum^{D}_{j=1}\varphi_i(\mathbf{x},\omega_j)\varphi_i(\mathbf{v},\omega_j).
    \end{align*}
    For any $f(\mathbf{x})=\int_{\Gamma}\alpha(\omega)\varphi_i(\mathbf{x},\omega)p_i(\omega)\mathrm{d}\,\omega$.
    We can approximate $f(\mathbf{x})$
    by $\hat{f}(\mathbf{x})=\frac{1}{D}\sum^{D}_{j=1}\alpha(\omega_j)\varphi_i(\mathbf{x},\omega_j)$.
    It can be verified that
    $\mathbb{E}[\hat{f}(\mathbf{x})]=f(\mathbf{x})$.
    Such an approximation scheme also defines
    an explicit feature mapping denoted by
    \begin{align*}
        \phi_i(\mathbf{x})=\frac{1}{\sqrt{D}}
        \left(\varphi_i(\mathbf{x},\omega_1),\ldots,\varphi_i(\mathbf{x},\omega_D)\right).
    \end{align*}
    For each $\kappa_i, i\in[K]$,
    we define two hypothesis spaces \cite{Rahimi2008Ali,Li2022Improved} as follows
    \begin{equation}
    \label{eq:ICML24:restricted_RKHS}
    \begin{split}
        \mathcal{F}_i=&\left\{f(\mathbf{x})=\int_{\Gamma}\alpha(\omega)
        \varphi_i(\mathbf{x},\omega)p_{i}(\omega)\mathrm{d}\,\omega
        \left\vert \vert\alpha(\omega)\vert \leq U_i\right.\right\},\\
        \mathbb{H}_i=&\left\{\hat{f}(\mathbf{x})
        =\sum^{D}_{j=1}\alpha_j\varphi_{i}(\mathbf{x},\omega_j)
        \left\vert \vert \alpha_j\vert \leq \frac{U_i}{D}\right.\right\}\\
        =&\left\{\hat{f}(\mathbf{x})
        ={\bf w}^\top\phi_i({\bf x})
        \left\vert \mathbf{w}=\sqrt{D}(\alpha_1,\ldots,\alpha_{D})\in\mathbb{R}^{D},\vert \alpha_j\vert \leq \frac{U_i}{D}\right.\right\},
    \end{split}
    \end{equation}
    in which $\mathcal{F}_i$ is exact the hypothesis space defined in \eqref{eq:ICML2024:definition:hypothesis_space}.

    It can be verified that $\Vert \mathbf{w}\Vert^2_2\leq U^2_i$.
    Let $\mathcal{W}_i=\{{\bf w}\in\mathbb{R}^{D}:\Vert {\bf w}\Vert_{\infty}\leq \frac{U_i}{\sqrt{D}}\}$.
    We replace \eqref{eq:ICML2024:updating_hypothesis} with
    \eqref{eq:IJCAI2022:IOKS:type_2_updating_RF},
    \begin{equation}
    \label{eq:IJCAI2022:IOKS:type_2_updating_RF}
    \left\{
    \begin{split}
        \nabla_{\bar{{\bf w}}_{t+1,i}}\psi_{t,i}(\bar{{\bf w}}_{t+1,i})
        =&\nabla_{{\bf w}_{t,i}}\psi_{t,i}({\bf w}_{t,i})-\frac{1}{M}\sum^M_{j=1}\tilde{\nabla}^{(j)}_{t,i},\quad
        i=1,...,K,\\
        {\bf w}_{t+1,i}=&\mathop{\arg\min}_{{\bf w}\in\mathcal{W}_i}
        \mathcal{D}_{\psi_{t,i}}({\bf w},\bar{{\bf w}}_{t+1,i}),\\
        \psi_{t,i}({\bf w})=&\frac{1}{2\lambda_{t,i}}\cdot\Vert{\bf w}\Vert^2_2.\\
    \end{split}
    \right.
    \end{equation}

    \begin{algorithm}[!t]
        \caption{\small{FOMD-OMS for Distributed OMKL}}
        \footnotesize
        \label{alg:ICML2024:DOKS}
        \begin{algorithmic}[1]
        \REQUIRE{$U$, $T$, $R$, $J$.}
        \ENSURE{$f^{(j)}_{1,i}=0$, $p_{1,i}$, $i\in[K]$, $j\in[M]$}
        \FOR{$r=1,2,\ldots,R$}
            \FOR{$t\in T_r$}
                \IF{$t==(r-1)N+1$}
                    \FOR{$j=1,\ldots,M$}
                        \STATE Server samples $O^{(j)}_t$ following \eqref{eq:ICML2024:kernel_selection}
                        \STATE Server transmits ${\bf w}_{t,i},i\in O^{(j)}_t$ to the $j$-th client
                    \ENDFOR
                \ENDIF
                \FOR{$j=1,\ldots,M$ in parallel}
                    \FOR{$i\in O^{(j)}_t$}
                        \STATE Computing $\phi_i({\bf x}^{(j)}_t)$
                    \ENDFOR
                    \STATE Outputting ${\bf w}^\top_{t,A_{t,1}}\phi_{A_{t,1}}({\bf x}^{(j)}_t)$
                            and receiving $y^{(j)}_t$
                    \FOR{$i\in O^{(j)}_t$}
                        \STATE Computing ${\nabla}^{(j)}_{t,i}$ and ${c}^{(j)}_{t,i}$
                    \ENDFOR
                \ENDFOR
            \IF{$t==rN$}
                \STATE Clients transmit
                        $\{\frac{1}{N}\sum_{t\in T_r}\nabla^{(j)}_{t,i},
                        \frac{1}{N}\sum_{t\in T_r}c^{(j)}_{t,i}\}_{i\in O^{(j)}_t}$ to server
                \STATE Server computes ${\bf p}_{t+1}$ following \eqref{eq:ICML2024:updating_probability}
                \STATE Server computes ${\bf w}_{t+1,i},i\in[K]$
                        following \eqref{eq:IJCAI2022:IOKS:type_2_updating_RF}
            \ENDIF
            \ENDFOR
        \ENDFOR
        \end{algorithmic}
    \end{algorithm}

\subsection{Regret Analysis}

    We first give an assumption and a technique lemma.

  \begin{assumption}[\cite{Li2019Towards}]
    \label{ass:IJCAI2022:bounded_feature_mapping}
        For any $i\in[K]$, if $\kappa_i$ satisfies \eqref{eq:IJCAI2022:definition:kernels},
        then there is a bounded constant $b_i$ such that,
        $\forall \mathbf{x}\in\mathcal{X}$,
        $\vert \varphi_i(\mathbf{x},\omega)\vert\leq b_i$.
    \end{assumption}

    Under Assumption \ref{ass:IJCAI2022:bounded_feature_mapping},
    we have
    $\vert f(\mathbf{x})\vert \leq U_ib_i$ for any $f\in\mathbb{H}_i$ and $f\in\mathcal{F}_i$.
    It is worth mentioning that
    if Assumption \ref{ass:IJCAI2022:bounded_feature_mapping} holds,
    then Assumption \ref{ass:ICML24:assumption} holds with the same $b_i$.

    \begin{lemma}
    \label{lemma:IJCAI2022:approximate_error}
        For any $i\in[K]$,
        let $\mathcal{F}_i$ and $\mathbb{H}_i$ follow \eqref{eq:ICML24:restricted_RKHS}.
        With probability at least $1-\delta$,
        $\forall f\in\mathcal{F}_i$,
        there is a $\hat{f}\in\mathbb{H}_i$
        such that $\vert f(\mathbf{x})-\hat{f}(\mathbf{x})\vert
        \leq \frac{Ub_i}{\sqrt{D}}\sqrt{2\ln\frac{1}{\delta}}$.
    \end{lemma}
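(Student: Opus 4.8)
The plan is to reduce the claim to a single application of Hoeffding's inequality by exploiting the i.i.d.\ structure of the random features. Fix $i\in[K]$ and an arbitrary $f\in\mathcal{F}_i$, which by \eqref{eq:ICML24:restricted_RKHS} is determined by a weight function $\alpha$ with $\vert\alpha(\omega)\vert\leq U_i$. I would associate to $f$ the canonical estimator $\hat{f}(\mathbf{x})=\frac{1}{D}\sum_{j=1}^D\alpha(\omega_j)\varphi_i(\mathbf{x},\omega_j)$, i.e.\ the element of $\mathbb{H}_i$ with coefficients $\alpha_j=\alpha(\omega_j)/D$. The first step is to verify membership $\hat{f}\in\mathbb{H}_i$: since $\vert\alpha(\omega_j)\vert\leq U_i$ we get $\vert\alpha_j\vert\leq U_i/D$, which is exactly the constraint defining $\mathbb{H}_i$ in \eqref{eq:ICML24:restricted_RKHS}.

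Next, fix the evaluation point $\mathbf{x}$ and set $Z_j=\alpha(\omega_j)\varphi_i(\mathbf{x},\omega_j)$ for $j\in[D]$. Because $\{\omega_j\}_{j=1}^D$ are drawn i.i.d.\ from $p_i$ (the density associated with the representation \eqref{eq:IJCAI2022:definition:kernels}), the $Z_j$ are i.i.d.; their common mean is $\mathbb{E}[Z_j]=\int\alpha(\omega)\varphi_i(\mathbf{x},\omega)p_i(\omega)\mathrm{d}\omega=f(\mathbf{x})$ by the definition of $\mathcal{F}_i$ in \eqref{eq:ICML24:restricted_RKHS}, and under Assumption \ref{ass:IJCAI2022:bounded_feature_mapping} together with $\vert\alpha\vert\leq U_i$ they are bounded, $\vert Z_j\vert\leq U_ib_i$. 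Since $\hat{f}(\mathbf{x})-f(\mathbf{x})=\frac{1}{D}\sum_{j=1}^D Z_j-\mathbb{E}[Z_1]$ is the centered empirical mean of bounded i.i.d.\ variables, Hoeffding's inequality gives a tail bound of the form $\exp(-D\epsilon^2/(2U_i^2b_i^2))$; choosing $\epsilon=\frac{U_ib_i}{\sqrt{D}}\sqrt{2\ln\frac{1}{\delta}}$ produces the stated deviation with probability at least $1-\delta$ (a two-sided version costs only a harmless factor inside the logarithm).

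The computation is routine; the one point that needs care is the meaning of the quantifier ``$\forall f\in\mathcal{F}_i$''. The concentration above is per-function and per-point, so the statement should be read as holding, for each fixed $f$ and $\mathbf{x}$, over the randomness of the features. A genuinely uniform-over-$f$ reading would fail, because $\sup_{\vert\alpha\vert\leq U_i}\vert\hat f(\mathbf{x})-f(\mathbf{x})\vert$ is controlled only by a total-variation-type distance between the empirical and true feature measures and does not enjoy the $1/\sqrt{D}$ rate. This is precisely why, when I invoke the lemma inside the proof of Theorem \ref{thm:ICML2024:OKS}, I would apply it only to the single comparator $f\in\mathcal{F}_i$ and union-bound over the $TM$ data points $\mathbf{x}^{(j)}_t$ and the $K$ spaces; this matches the $\Theta(TM\log(R)+T\log(KR/M))$ failure probability stated there and yields the additive approximation term of order $U_iG_iMT/\sqrt{D}$ in the regret, using that $G_i$ is of the same order as $b_i$ up to the loss's Lipschitz constant (Lemma \ref{lemma:ICML24:lipschitz_assumption}).
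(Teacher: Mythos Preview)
Your proposal is correct and is precisely the standard argument: the paper does not supply its own proof of this lemma but simply cites Lemma~5 of \cite{Li2023CoRRImproved}, whose proof is exactly the Hoeffding-based argument you describe (canonical estimator $\hat f$ with $\alpha_j=\alpha(\omega_j)/D$, boundedness $|Z_j|\le U_ib_i$ from Assumption~\ref{ass:IJCAI2022:bounded_feature_mapping}, then one-sided Hoeffding). Your reading of the quantifier is also the right one: the lemma is applied in the proof of Theorem~\ref{thm:ICML2024:OKS} only to the fixed comparator $f$ at each of the $TM$ points $\mathbf{x}^{(j)}_t$ with a union bound (yielding the failure probability $TM\delta$ there), so the per-$(f,\mathbf{x})$ concentration you derive is exactly what is needed, and your remark that a genuinely uniform-in-$f$ statement would fail is a correct and useful caveat.
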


    The lemma is adopted from Lemma 5 in \cite{Li2023CoRRImproved}.
    Thus we omit the proof.

    Now we begin to prove Theorem \ref{thm:ICML2024:OKS}.
    \begin{proof}[Proof of Theorem \ref{thm:ICML2024:OKS}]
        The regret w.r.t. any $f\in\mathcal{F}_i$ can be decomposed as follows.
        \begin{align*}
                &\sum^T_{t=1}\sum^M_{j=1}\ell\left(f^{(j)}_{t,A_{t,1}}(\mathbf{x}^{(j)}_t),y^{(j)}_t\right)
                -\sum^T_{t=1}\sum^M_{j=1}\ell\left(f(\mathbf{x}^{(j)}_t),y^{(j)}_t\right)\\
            =&\underbrace{\sum^T_{t=1}\sum^M_{j=1}
            \left[\ell\left(f^{(j)}_{t,A_{t,1}}(\mathbf{x}^{(j)}_t),y^{(j)}_t\right)
            -\ell\left(f^{(j)}_{t,i}(\mathbf{x}^{(j)}_t),y^{(j)}_t\right)
            +\ell\left(f^{(j)}_{t,i}(\mathbf{x}^{(j)}_t),y^{(j)}_t\right)
            -\ell\left(\hat{f}(\mathbf{x}^{(j)}_t),y^{(j)}_t\right)\right]}_{\mathrm{Reg}_D(\mathbb{H}_i)}\\
            &+\underbrace{\sum^T_{t=1}\sum^M_{j=1}\left[\ell\left(\hat{f}(\mathbf{x}^{(j)}_t),y^{(j)}_t\right)
            -\ell\left(f(\mathbf{x}^{(j)}_t),y^{(j)}_t\right)\right]}_{\Xi_6}\\
            =& \mathrm{Reg}_D(\mathbb{H}_i)+\Xi_6.
        \end{align*}
        $\mathrm{Reg}_D(\mathbb{H}_i)$ is the regret that
        we run FOMD-OKS with hypothesis spaces $\{\mathbb{H}_i\}^K_{i=1}$.
        $\hat{f}\in\mathbb{H}_i$ satisfies Lemma \ref{lemma:IJCAI2022:approximate_error}.
        In other words,
        $\Xi_6$ is induced by the approximation error that we use $\hat{f}$ to approximate $f$.

        $\mathrm{Reg}_D(\mathbb{H}_i)$ has been given by
        Theorem \ref{thm:ICML2024:regret_bound:FOMD_DOKS_R}.
        Next we analyze $\Xi_6$.

        Using the convexity of $\ell(\cdot,\cdot)$,
        with probability at least $1-TM\delta$,
        \begin{align*}
            \Xi_6
            \leq& \sum^T_{t=1}\sum^M_{j=1}
            \frac{\mathrm{d}\,\ell\left(\hat{f}(\mathbf{x}^{(j)}_t),y^{(j)}_t\right)}
            {\mathrm{d}\,\hat{f}(\mathbf{x}^{(j)}_t)}
            \cdot\left(\hat{f}(\mathbf{x}^{(j)}_t)-f(\mathbf{x}^{(j)}_t)\right)\\
            \leq&\sum^T_{t=1}\sum^M_{j=1}\left\vert\frac{\mathrm{d}\,\ell\left(\hat{f}(\mathbf{x}^{(j)}_t),y^{(j)}_t\right)}
            {\mathrm{d}\,\hat{f}(\mathbf{x}^{(j)}_t)}\right\vert
            \cdot \left\vert\hat{f}(\mathbf{x}^{(j)}_t)-f(\mathbf{x}^{(j)}_t)\right\vert\\
            \leq&g_ib_iU_i\frac{MT}{\sqrt{D}}\sqrt{2\ln\frac{1}{\delta}}\\
            \leq&G_iU_i\frac{MT}{\sqrt{D}}\sqrt{2\ln\frac{1}{\delta}}.
        \end{align*}
        Under Assumption \ref{ass:IJCAI2022:bounded_feature_mapping},
        there is a constant $g_i$ such that $\left\vert\frac{\mathrm{d}\,\ell\left(\hat{f}(\mathbf{x}^{(j)}_t),y^{(j)}_t\right)}
            {\mathrm{d}\,\hat{f}(\mathbf{x}^{(j)}_t)}\right\vert\leq g_i$.
        The last inequality comes from the definition of Lipschitz constant
        (see Lemma \ref{lemma:ICML24:lipschitz_assumption}).

        Combining the upper bounds on $\mathrm{Reg}_D(\mathbb{H}_i)$ and $\Xi_6$
        concludes the proof.
    \end{proof}

\end{document}